\documentclass[11pt,3p,review,authoryear]{arXiv}

\journal{International Journal of Forecasting}
\bibliographystyle{model5-names}
\biboptions{longnamesfirst}

\usepackage{bigints}
\usepackage{enumitem}
\usepackage{graphicx}
\usepackage{adjustbox}
\usepackage{scalerel}
\usepackage{url}
\usepackage{mathrsfs}
\usepackage{dsfont}
\usepackage{blkarray}
\usepackage{algorithm}
\usepackage[noend]{algorithmic}
\usepackage{array}
\usepackage{tikz,tkz-tab} 
\usetikzlibrary{arrows}
\usepackage{qtree}
\usepackage{amssymb}
\usepackage{amsthm}
\usepackage{float}
\usepackage{gensymb}
\usepackage[nameinlink, noabbrev]{cleveref}
\usepackage{booktabs}
\usepackage{color, colortbl}
\definecolor{Gray}{gray}{0.9}
\definecolor{Gray2}{gray}{0.7}

\usepackage{arydshln}

\newtheorem{theorem}{Theorem}
\newtheorem{proposition}{Proposition}
\newtheorem{lemma}{Lemma}

\newtheorem{notation}{Notation}
\newtheorem{hyp}{Assumption}

\newtheorem{rem}{Remark}
\expandafter\let\expandafter\oldproof\csname\string\proof\endcsname
\let\oldendproof\endproof
\renewenvironment{proof}[1][\proofname]{%
	\oldproof[\upshape \bfseries #1]%
}{\oldendproof}

\newcommand{\argmin}{\mathop{\mathrm{arg\,min}}}
\newcommand{\PreserveBackslash}[1]{\let\temp=\\#1\let\\=\temp}
\newcolumntype{C}[1]{>{\PreserveBackslash\centering}p{#1}}
\newcolumntype{R}[1]{>{\PreserveBackslash\raggedleft}p{#1}}
\newcolumntype{L}[1]{>{\PreserveBackslash\raggedright}p{#1}}

\floatstyle{ruled}
\newfloat{protocol}{htbp}{lok}
\floatname{protocol}{Meta-algorithm}
\crefname{protocol}{protocol}{meta-algorithms}
\Crefname{protocol}{Pernel}{Meta-algorithm}

\newtheorem{exam}{Example}

\newcounter{examplecounter}
\newenvironment{example}[1]{\refstepcounter{examplecounter} \textbf{Example \arabic{examplecounter}: {#1}.}~~}{}

\renewcommand{\leq}{\leqslant}
\renewcommand{\geq}{\geqslant}
\newcommand{\transp}{\mbox{\tiny \textup{T}}}
\newcommand{\inv}{\!\raisebox{2pt}{\tiny \textup{-1}}}
\renewcommand{\epsilon}{\varepsilon}

\newcommand{\defeq}{\stackrel{\mbox{\tiny \rm def}}{=}}
\renewcommand{\hat}{\widehat}
\newcommand{\norm}[1]{\Arrowvert #1 \Arrowvert}

\renewcommand{\leq}{\leqslant}
\renewcommand{\geq}{\geqslant}

\renewcommand{\phi}{\varphi}
\renewcommand{\epsilon}{\varepsilon}
\renewcommand{\hat}{\widehat}
\renewcommand{\tilde}{\widetilde}
\newcommand{\K}{\mathbf{\underline{K}}}
\newcommand{\E}{\mathbf{\underline{E}}}
\newcommand{\U}{\mathbf{\underline{U}}}

\newcommand{\A}{\mathbf{\underline{A}}}
\newcommand{\Y}{\mathbf{\underline{Y}}}
\newcommand{\W}{\mathbf{\underline{W}}}

\renewcommand{\H}{\mathbf{\underline{H}}}

\newcommand{\y}{\mathbf{y}}
\newcommand{\hy}{\hat{\mathbf{y}}}
\newcommand{\ty}{\tilde{\mathbf{y}}}
\renewcommand{\u}{\mathbf{u}}
\newcommand{\hu}{\hat{\mathbf{u}}}
\newcommand{\x}{\mathbf{x}}

\newcommand{\w}{\mathbf{w}}
\begin{document}

\begin{frontmatter}

\title{Online Hierarchical Forecasting for Power Consumption Data}

 \author[mb]{Margaux Br\'eg\`ere \fnref{mh}\corref{cor}}
\address[mb]{EDF R\&D, Palaiseau, France. \\
INRIA - D\'epartement d'Informatique de l'\'Ecole Normale Sup\'erieure, PSL Research University, Paris, France.}
 \cortext[cor]{Corresponding author}
\ead{margaux.bregere@math.u-psud.fr}
 \author[mh]{Malo Huard}
 \address[mh]{Laboratoire de math\'ematiques d'Orsay, CNRS, Universit\'e Paris-Saclay, Orsay, France.}

\begin{abstract}
We study the forecasting of the power consumptions of a population of households and of subpopulations thereof. These subpopulations are built according to location, to exogenous information and/or to profiles we determined from historical households consumption time series.  
Thus, we aim to forecast the electricity consumption time series at several levels of households aggregation. These time series are linked through some summation constraints which induce a hierarchy. Our approach consists in three steps: feature generation, aggregation and projection.
Firstly (feature generation step), we build, for each considering group for households, a benchmark forecast (called features), using random forests or generalized additive models. Secondly (aggregation step), aggregation algorithms, run in parallel, aggregate these forecasts and provide new predictions.  Finally (projection step), we use the summation constraints induced by the time series underlying hierarchy to re-conciliate the forecasts by projecting them in a well-chosen linear subspace. We provide some theoretical guaranties on the average prediction error of this methodology, through the minimization of a quantity called regret. We also test our approach on households power consumption data collected in Great Britain by multiple energy providers in the ‘\textit{Energy Demand Research Project}’ context. We build and compare various population segmentations for the evaluation of our approach performance.
\end{abstract}

\begin{keyword}
Adjusting forecasts \sep Combining forecasts  \sep Demand forecasting \sep Electricity \sep Time series
\end{keyword}

\end{frontmatter}

\section{Introduction}

\paragraph{Motivation: Electricity Forecasting} New opportunities come with the recent deployment of smart grids and the installation of meters: they record consumption quasi instantaneously in households. From these records, time series of demand are obtained at various levels of aggregation, such as consumption profiles and regions. 
For privacy reasons, household records may not be used directly. Moreover, consumption at individual level is erratic and unpredictable. 
This is why we focus on household aggregations. For demand management, it is useful to predict the global consumption.
 Furthermore, to dispatch correctly the electricity into the grid, forecasting demand at a regional level is also an important goal. 
Finally, a good estimation of the consumption of some groups of consumers (with the same profile) may be helpful for the electricity provider which may adapt its offer to perform effective demand side management. 
Thus, forecasts at various aggregated levels (entire population, geographical areas, groups of same consumption profiles) are useful for an efficient management of consumption.
In this work, we first build  at each aggregation level, and independently, benchmark forecasts (called features) using random forests or generalized additive models.
Noticing that these time series may be correlated (the consumption of a given region may be close to the one of a neighboring region)
and connected to each other through summation constraints (the global consumption is the sum of the region consumptions, e.g.), 
the problem considered falls under the umbrella of hierarchical time series forecasting.
Using these hierarchical relationships may improve the benchmark forecasts that were generated. Our approach consists in combining two methods: feature aggregation and and projection in a constrained space. Our aim is to improve forecasts both at the global and at the local levels.  

\paragraph{Literature Discussion for Hierarchical Forecasting}
Traditionally two types of methods have been used for hierarchical forecasting: bottom-up and top-down approaches. 
In the bottom-up approaches (see \citealp{Dunn1976}) forecasts are constructed for lower-level quantities and are then summed up to obtain forecasts at the upper levels. In contrast, top-down approaches (see  \citealp{Gross1990}) work by forecasting aggregated quantities and then by determining dis-aggregate proportions to compute lower level predictions.  \citet{ShliferAggregationProrationForecasting1979} compare these two families of methods and conclude that bottom-up approaches work better. 
Recently, it has indeed proven successful for load forecasting to improve the global consumption prediction error (see among others \citealp{auder2018scalable}).  
Other approaches (neither bottom-up nor top-down) were recently introduced, for example \citet{Hyndman2011} forecast all nodes in the hierarchy and reconcile them by orthogonal projection. 
Moreover, \citet{van2015game} introduce a game-theoretically optimal reconciliation method to improve a given set of forecasts. Firstly, one comes up with some forecasts for the time series without worrying about hierarchical constraints and then a reconciliation procedure is used to make the forecasts aggregate consistent. This generalizes the previous orthogonal projection to other possible projections in the constrained space (which ensures that the forecasts satisfy the hierarchy).
Finally, if we restrict here to mean forecasting, some follow-up works from \citet{pmlr-v70-taieb17a} allow to make probabilistic forecasting in this context of hierarchical prediction.\\

\paragraph{Literature Discussion for Aggregation Methods} Aggregation methods (also called ensemble methods) for individual sequences forecasting originate from theoretical works by \citet{Vovk:1990:AS:92571.92672}, \citet{Cover1991} and \citet{littlestone1994weighted}; their distinguishing feature with respect to classical ensemble methods is that they do not rely on any stochastic modeling of the observations and thus, are able to combine forecasts independently of their generating process. They have been  proved to be very effective to predict time series (see for instance \citealp{mallet2009ozone} and \citealp{devaine2013forecasting}) and those methods were used to win forecasting competitions (see \citealp{GaillardAdditivemodelsrobust2016a}). This aggregation approach has recently been extended to the hierarchical setting by \citet{goehry2019aggregation}; they used a bottom-up forecasting approach which consists in aggregating the consumption forecasts of small customers clusters. \\

In this article we combine the reconciliation approach based on orthogonal projection with various aggregation algorithms to provide new methods to which we were able to prove strong theoretical guaranties. 
We then illustrate the proposed methods using smart meter data collected in Great Britain by multiple energy providers  (see \citealp{schellong2011energy} and \citealp{AECOMEnergyDemandResearch2018}).
 ‘\textit{Energy Demand Research Project}’ data gathers multiple households power consumption data. 
We compare various population segmentations and evaluate the performance of four strategies for the forecasting of the electricity consumption time series at the several aggregation levels: features, aggregated features, projected features and finally aggregated and projected features.

\paragraph{Notation} Without further indications, 
 $\norm{\mathbf{x}}$ denotes the Euclidean norm of a vector $\mathbf{x}$.
For the other norms, there will be a subscript: e.g., the Frobenius norm of $\mathbf{x}$ is denoted by $\norm{\mathbf{x}}_F$.
Moreover, vectors will be in bold type and
unless stated otherwise, they are column vectors, 
while matrices will be  in bold underlined.
We denote the inner product of two vectors $\x$ and $\y$ of the same size by $\x \cdot \y=\x^{\transp}\y$.
Finally, 
the cardinal of a finite set $\mathcal{D}$ is denoted by $|\mathcal{D}|$.

\section{Methodology}

\label{sec:methodo}

We consider a set of time series  $\big\{(y_t^\gamma)_{t>0}, \gamma \in \Gamma\big\}$  connected to each other by some summation constraints: a few of them are equal to the sum of several others -- see further for a definition of $\Gamma$.
To forecast these time series, a set of features is generated. 
At any time step $t$, we want to forecast the vector of the values of the $|\Gamma|$ times series at $t$, denoted by $\y_t \defeq (y^\gamma_t)_{\gamma \in \Gamma}$.
We propose a three-step method to obtain relevant forecasts from these features.

\subsection{Modeling of the Hierarchical Relationships}

The relationships between the time series induce a hierarchy which should be exploited to improve forecasts.
These summation constraints may be represented by one or more trees, the value at each node being equal to the sum of the ones at its leaves. Let us denote by $\Gamma$ the set of the tree's nodes and $|\Gamma|$ its cardinal. 
There are as many summation constraints as there are nodes with leaves. 
Subsequently, we will introduce a matrix  $\K$ to encode these relationships. 
Each line of $\K$ is related to one of the summation constraints with $-1$ at the associated node and $1$ at its leaves.
Thus, for any instance $t$, the vector of the values of the $|\Gamma|$ times series at $t$, denoted by $\y_t$, is in the kernel of $\K$. 
Details on and examples of $\K$ are provided below.
Example~\ref{ex:simplehierarchy} treats a single summation constraint. 
Examples ~\ref{ex:twoleveltree} and~\ref{ex:twotrees} present more complex relationships between the time series, considering a hierarchy with two levels and two different partitions of the same time series, respectively. Finally, Example~\ref{ex:doublehierarchy} combines the two previous cases.
In our experiments of Section~\ref{sec:experiments}, the underlying hierarchies will be of the form of the ones of Examples~\ref{ex:simplehierarchy} and~\ref{ex:doublehierarchy}.

\begin{example}{Two-level Hierarchy}
\label{ex:simplehierarchy}
The simplest approach consists in considering a single equation connecting the time series. 
Here, $y^{\mathrm{\textsc tot}}$ stands for the one which is the sum of the $N$ others which are denoted by $y^1,\dots, y^N$.
The underlying hierarchy is represented in Figure~\ref{fig:ex1} by a tree with a single root directly connected to $N$ leaves.
For any instance $t$, the time series satisfy
$y_t^{\mathrm{\textsc tot}}=y_t^1+y_t^2+\dots + y_t^N$ and the vector $\y_t=\big(y_t^{\mathrm{\textsc tot}},y_t^1,\dots,y_t^N\big)^{\transp}$ respects the hierarchy if and only if $\K\y_t=0$ with $\K=\big(-1,1, 1,\dots,1\big)$.
\begin{figure}[h]
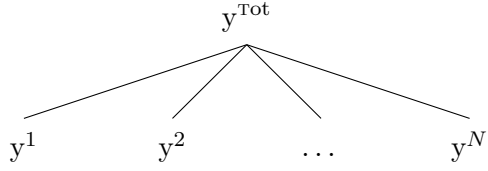

\label{fig:ex1}
\[
\Tree [.y^{\mathrm{\textsc tot}} y^1 y^2 {\dots}  !\qsetw{3cm} y^N ]
\]
\caption{Representation of a two-level hierarchy.}
\end{figure}
In Section~\ref{sec:experiments}, we consider the power consumption of a population of households which are distributed in $N$ regions. This setting will correspond to the present example.
\end{example}

\begin{example}{Three-level Hierarchy} 
\label{ex:twoleveltree}
\begin{figure}[h]
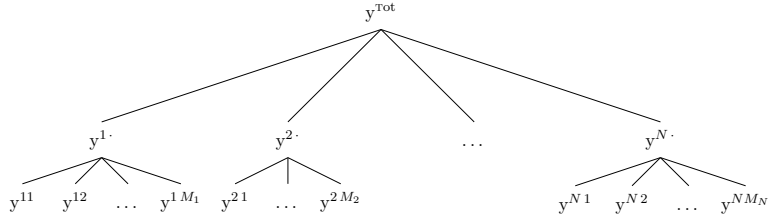

\label{fig:ex2}
\[
\begin{adjustbox}{width=10cm,center}
\Tree[!\qsetw{1cm} [.y^{1\,\cdot} y^{11} y^{12} {\dots} y^{1\,M_1}   ][.y^{2\,\cdot} y^{2\,1}   {\dots} y^{2\,M_2} ][.{\dots} ] [.y^{N\,\cdot} y^{N\,1}  y^{N\,2} {\dots} y^{NM_N}  ]].y^{\mathrm{\textsc tot}}
\end{adjustbox}
\]
\caption{Representation of a three-level hierarchy.}
\end{figure}
A few leaves of the tree of Example~\ref{ex:simplehierarchy} may be broken down into new time series and so on.
Figure~\ref{fig:ex2} represents a complete three-level hierarchy (	although we could consider any multilevel hierarchy) leading to the following summation equations, for each instance $t$,
\begin{align}
y^{\mathrm{\textsc tot}}_t &= \, y_t^{1\,\cdot}+y_t^{2\,\cdot}+\dots + y_t^{N\,\cdot}  \tag{1}\\ 
  y_t^{i \,\cdot} \, \, \, &=\, y_t^{i \,1}+y_t^{i \, 2}+\dots + y_t^{i\, M_i}, \quad \forall \, i=1,\dots, N. \tag{2i}
\end{align}
We order the time series in lexicographical order:
\[\y_t=(y_t^{\mathrm{\textsc tot}},y_t^{1 \, \cdot},y_t^{2 \, \cdot},\dots,y_t^{N \, \cdot} ,y_t^{11},y_t^{12},\dots,y_t^{1M_1}, y_t^{21},\dots,y_t^{NM_N} ),\] 
and define the constraint matrix $\K$ 
below; each line of $\K$ corresponds to one of the constraints mentioned above, either $(1)$ or one of the $N$ constraints $(2i)$ in a way that $\K\y_t=\mathbf{0}$ if and only if $\y_t$ respects the hierarchy.
\[
\K=\begin{blockarray}{rcccccc}
\begin{block}{(ccccc)cr}
 \text{-} 1 & \overbrace{\, 1 \quad \cdots \quad 1}^N   \vspace{-0.3cm} &&& & \leftarrow &\text{(1)}  \\
 & \text{-}1 \qquad \qquad \, \,   & \overbrace{1 \quad \cdots \quad 1}^{M_1} \vspace{-0.3cm}  \\
 & \ddots &  &\qquad  \ddots  \qquad &   \vspace{-0.3cm} & \leftarrow &\text{(2i)} \\
 & \qquad \qquad  \text{-}1 \quad  &&& \overbrace{1 \quad \cdots \quad 1}^{M_N}\\
\end{block}
\end{blockarray}
\]
This hierarchy corresponds to the concrete example above where, for $1 \leq n \leq N$, the $n$ region would be further divided into $M_1, \dots, M_n$ municipalities.
\end{example}

\begin{example}{Two Hierarchies of the Same Time Series}
\label{ex:twotrees} 
It is also possible to consider two partitions of the same time series $y_t^{\mathrm{\textsc tot}}$. 
For example, in our experiments of Section~\ref{sec:experiments}, in addition to the geographical clustering, we introduce a segmentation of the households based on their profiles.
Indeed, they are distributed in $N_1$ regions but also in $N_2$ groups depending on their consumption habits. 
These two different partitions induce the following two equations
 \begin{align*}
y_t^{\mathrm{\textsc tot}} &= \, y_t^{1\,\cdot}+y_t^{2\,\cdot}+\dots + y_t^{N_1\,\cdot}  \\
y_t^{\mathrm{\textsc tot}} &= \, y_t^{\cdot\,1}+y_t^{\cdot\,2}+\dots + y_t^{ \cdot \, N_2},
\end{align*}
and the two trees associated with these constraints which share the same root and are represented on Figure~\ref{fig:ex3}.
\begin{figure}[h]
\label{fig:ex3}
\[
\begin{tabular}{ccc}
\Tree [.y^{\mathrm{\textsc tot}} y^{1\,\cdot} y^{2\,\cdot} {\dots}  !\qsetw{3cm} y^{N_1\,\cdot} ]
&
=
\Tree [.y^{\mathrm{\textsc tot}} y^{\,  \cdot \, 1} y^{\, \cdot \, 2} {\dots}  !\qsetw{3cm} y^{\, \cdot \, N_2} ]
\end{tabular}%
\]
\caption{Representation of two two-level hierarchies.}
\end{figure}
For any instance $t$, the vector of times series $\y_t=( y_t^{\mathrm{\textsc tot}}, y_t^{1 \, \cdot},y_t^{2 \, \cdot},\dots, y_t^{N_1 \, \cdot} , y_t^{\cdot \, 1},y_t^{ \cdot \, 2},\dots, y_t^{\mathrm{\cdot \, \textsc{n}_2}}  )$ satisfies the above equations
 if and only if $\K\y_t=0$ with 
\[
\renewcommand\arraystretch{1.3}
\K=\begin{blockarray}{rccc}
\begin{block}{(cccc)} 
 \text{-} 1 & \overbrace{\, 1 \quad \cdots \quad 1}^{N_1}  & \vspace{-0.4cm} \\
 \text{-} 1 &             & \overbrace{\, 1 \quad \cdots \quad 1}^{N_2}  \vspace{0.1cm}\\
\end{block}
\end{blockarray} \, .
\]
The equality of the roots of the two trees is always satisfied in this model. Indeed there is a single time series $y_t^{\mathrm{\textsc tot}}$ to forecast and there are therefore only two summation constraints to take into account.
\end{example}

\begin{example} {Two Crossed Hierarchies}
\label{ex:doublehierarchy}
\begin{figure}[ht]
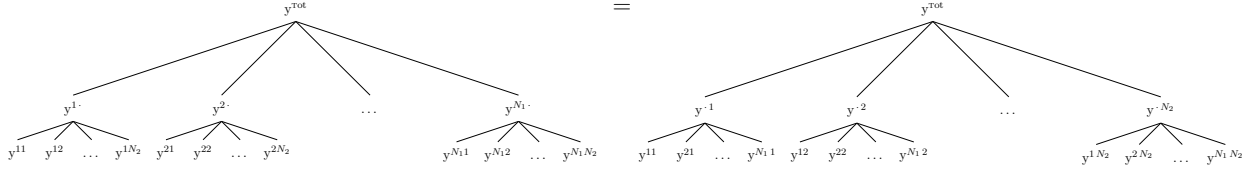

\label{fig:ex4}
\[
\begin{adjustbox}{width=\columnwidth,center}
\begin{tabular}{ccc}
\Tree[!\qsetw{1cm} [.y^{1\,\cdot} y^{11} y^{12} {\dots} y^{1N_2} ][.y^{2\,\cdot} y^{21}  y^{22} {\dots} y^{2N_2} ][.{\dots} ] [.y^{N_1\,\cdot} y^{N_11}  y^{N_12} {\dots} y^{N_1N_2} ]].y^{\mathrm{\textsc tot}}
&
\huge{=} \normalsize
\Tree[!\qsetw{1cm} [.y^{\, \cdot \,  1} y^{11} y^{21} {\dots} y^{N_1\,1}   ][.y^{\, \cdot \, 2} y^{12}  y^{22} {\dots} y^{N_1\,2} ][.{\dots} ] [.y^{\, \cdot \, N_2} y^{1\,N_2}  y^{2\,N_2} {\dots} y^{N_1\, N_2}  ]].y^{\mathrm{\textsc tot}}
\end{tabular}%
\end{adjustbox}
\]
\caption{Representation of two crossed hierarchies.}
\end{figure}
Considering two partitions, the time series can be represented with two three-level trees sharing the same root and leaves. 
Only the intermediate levels differs according to which partition is firstly taking into account.
The leaves of the trees form a $N_1 \times N_2$-matrix $\big(y^{ i\,j}\big)_{1\leq i\leq N_1, \, 1\leq j \leq N_2}$. 
An intermediate node of the first tree $y^{i \,\cdot}$ is the sum of the line $i$ while a node $y^{\cdot j}$ of the second tree is the sum of the column $j$. Whether we sum rows or columns first, the sum of all coefficients is $y_t^{\mathrm{\textsc tot}}$.
In the experiments of Section~\ref{sec:experiments}, one partition refers to a geographic distribution of the households while the other classifies them according to their consumption behaviours.
The first tree considers breaks down consumption firstly by the $N_1$ regions and then by the $N_2$ household profiles.
The second one divides the households according to their habits before splitting them geographically.
Both trees are represented in Figure~\ref{fig:ex4}. 
For any instance $t$, the time series satisfy the $2+N_1+N_2$ equations
\begin{align}
y_t^{\mathrm{\textsc tot}} &= \, y_t^{1\,\cdot}+y_t^{2\,\cdot}+\dots + y_t^{N_1\,\cdot}  \tag{1}\\
y_t^{i \,\cdot} \, \, \, &=\, y_t^{i \,1}+y_t^{i \, 2}+\dots + y_t^{i\, N_2}, \quad \forall \, i=1,\dots, N_1   \tag{2i} \\
y_t^{\mathrm{\textsc tot}} &=\, y_t^{\cdot \,1}+y_t^{\cdot \, 2}+\dots + y_t^{ \cdot \, N_2} \tag{3}\\
y_t^{\cdot \, j} \, \,\, &=\, y_t^{1 \,j}+y_t^{2\,j}+\dots + y_t^{N_1 \, j}, \quad \forall \, j=1,\dots, N_2. \,  \tag{4j}
\end{align}
Equations (1) and (3) refer to the first level of the trees while the $N_1+N_2$ Equations (2i) and (4j) refer to second levels.
At an instance $t$, by ordering  the time series in the vector as
\[\y_t=(y_t^{\mathrm{\textsc tot}}, y_t^{1 \,\cdot} ,\dots,y_t^{N_1 \,\cdot} , y_t^{\cdot \, 1} ,\dots,y_t^{\cdot \,N_1} , y_t^{1 \,1} , y_t^{1 \,2} ,\dots,y_t^{N_1N_2} ),\]
 it may be seen that they respect the hierarchy if and only if $\K\y_t=0$ with
\[
\K=\begin{blockarray}{rccccccc}
\begin{block}{(cccccc)cr } 
 \text{-} 1 & \overbrace{\, 1 \quad \cdots \quad 1}^{N_1}   &&&&         & \leftarrow &\text{(1)}  \vspace{-0.3cm} \\ 
      & \text{-}1 \qquad \qquad \quad && \overbrace{\, 1 \quad \cdots \quad 1}^{N_2} & &                                   &&   \\
  & \ddots &&  &\quad  \ddots  \qquad &                                                 & \leftarrow &\text{(2i)}\vspace{-0.3cm} \\
    & \qquad \quad \, \text{-}1 \quad  &&&& \overbrace{\, 1 \quad \cdots \quad 1}^{N_2}                                     &&\vspace{-0.3cm}\\
  \text{-} 1 && \overbrace{\, 1 \quad \cdots \quad 1}^{N_2}  &&&                                                                                                   & \leftarrow &\text{(3)} \\
&&   \text{-}1  \qquad \qquad \quad  & \, 1 \qquad \qquad  \quad  &   & \, 1 \qquad \qquad  \quad  &&\\
   && \ddots &  \ddots  & \quad  \cdots  \qquad  &  \ddots & \leftarrow &\text{(4j)}\\
 &&  \qquad \quad \, \, \text{-}1 \quad  &   \qquad \qquad 1 && \qquad \qquad 1\,\, &&\\
\end{block}
\end{blockarray} \,. 
\]
\end{example}

\subsection{A Three-step Forecast}
\label{subsec:threestepforecast}

Step $1$: For each node $\gamma \in \Gamma$, at each instance $t$, thanks to an historical data set of the $\gamma$ time series and to some exogenous variables proper to the node $\gamma$, a forecaster makes the prediction $x_t^\gamma$.
These $|\Gamma|$ benchmark forecasts are then collected into the feature vector $\x_t \defeq \big(x_t^\gamma\big)_{\gamma \in \Gamma}$.
We propose to use the knowledge of all of the features, namely the $|\Gamma|$  benchmark forecasts, and of the summation constraints to improve these $|\Gamma|$ predictions.
Step $2$: For each node $\gamma$ and each instance $t$, we form our prediction $\hat{y}_t^\gamma$ by linearly combining the components of the feature vector $\x_t$ thanks to a so-called aggregation algorithm (a copy $\mathcal{A}^\gamma$ of an aggregation algorithm $\mathcal{A}$ is run separately for each node $\gamma$). That is, we use all $|\Gamma|$ benchmark forecasts to predict $y_t^\gamma$, not only $x_t^\gamma$. We explain below why this is a good idea -- the main reason is given by correlations between time series.
The forecasts thus obtained are then gathered into a vector $\hy_t \defeq \big(\hat{y}_t^\gamma\big)_{\gamma \in \Gamma}$.
Step $3$: Finally, a re-conciliation step will update the forecast vector so that it is in the kernel $\K$.
Let us denote by $\ty_t$ the final vector of forecasts. We detail below each step of our procedure.
\small
\begin{center}
\tikzstyle{init} = [pin edge={to-,thin,black}]
\begin{tikzpicture}
    \node[draw,  pin={[init]above: Forecasters},text centered, minimum height=3em] (a) {Generation of features};
    \node[draw, pin={[init]above:$\mathcal{A}$},text centered,minimum height=3em] (c) [right of=a,node distance=4.8cm] {$|\Gamma|$ aggregations in parallel};
   \node[draw, pin={[init]above:$\K$},text centered, minimum height=3em] (d) [right of=c,node distance=3.8cm] {Projection};
    \node (b) [left of=a,node distance=5.3cm, coordinate] {a};
    \node [coordinate] (end) [right of=d, node distance=1.7cm]{};
    \path[->] (b) edge node [above] {historical data} (a);
    \path[->] (b) edge node [below] { exogeneous variables} (a);
    \path[->] (a) edge node [above]  {$\x_t$} (c);
    \path[->] (a) edge node [above]  { } (c);
    \path[->] (c) edge node [above] {$\hy_t$} (d);
    \path[->] (c) edge node [above] { } (d);
    \draw[->] (d) edge node [above] {$\ty_t$} (end) ;
     \draw[->] (d) edge node [above] { } (end) ;
\end{tikzpicture}
\end{center}
\normalsize

\paragraph{First Step: Generation of features}
At a fixed node  $\gamma \in \Gamma$, for any instance $t$, a forecasting method, which may depend on $\gamma$, predicts $x_t^\gamma$ with the historical data and the exogenous variables of the node $\gamma$.
The forecasting methods we use in the experiments of Section~\ref{sec:experiments} are described in Section~\ref{sec:features} and include non linear sequential ridge regression, fully adaptive Bernstein online aggregation and polynomially weighted average forecaster with multiple learning rates.
These benchmark forecasts are henceforth called features and are gathered in $\x_t=\big(x_t^\gamma \big)_{\gamma \in \Gamma}$.  
This feature vector is used in the aggregation step that comes next to predict again each time series; we discuss below and in Subsection~\ref{subsec:samefeatures} why we do so (the main reasons being that it is a good idea because of the correlations between the times series and also because it eases the description of our method).
We focus here on $|\Gamma|$ benchmark forecasts -- one for each of the nodes; however, we could also have considered several predictions per nodes. 

\paragraph{Second Step: Aggregation}
The above features are generated independently with different exogenous variables and possibly different methods.
Yet, the observations $ \big(y^\gamma_t\big)_{\gamma \in \Gamma}$ may be correlated.
For example, considering load forecasting, the consumptions associated with two nearby regions can be strongly similar.
Furthermore, the observations are related though the summations constraints (although we disregard these equations here). 
This is why linearly combining the features may refine some forecasts -- this is exactly what this step does. Formally, an aggregation algorithm outputs at each round a vector of weights $\hu_t^\gamma$ and returns the forecast $\hat{y}^\gamma_t\defeq \hu_t^\gamma \cdot \x_t$. It does so based on the information available, that is, the feature vector $\x_t$ and past data. We consider an aggregation rule  $\mathcal{A}$ (see Section~\ref{sec:aggregalgo}) and form a copy $\mathcal{A}^\gamma$  for each node $\gamma$, which we feed with an input parameter vector $\mathbf{s}_0^\gamma$. 
These predictions are then gathered into the vector $\hy_t=(y^\gamma_t)_{\gamma \in \Gamma}$.
This algorithm aims for the best linear combination of features and there are theoretical performance guaranties associated with these aggregation algorithms, see Section~\ref{sec:aggregalgo} for details. \\

Instead of this approach based on benchmark forecasting and aggregation node by node, we could have considered a meta-model to directly predict the time series vector $\big(y_t^\gamma\big)_{\gamma\in \Gamma}$ at each instant $t$ (with a common forecaster and therefore without any aggregation step).
Once this global forecast would have been obtained, we would have gone straight to the projection stage.
In such a model, the number of variables to be taken into account (the historical data of the time series but also the exogenous variables specific to each node) would have been considerable and getting relevant forecasts would have not been an easy task.
But actually, a practical choice motivated our method for the most.
Indeed, the forecasters may be black boxes proper to each node and the exogenous variables of a node $\gamma$ may be unknown at a node $\gamma'$. 
In our experiments, we followed this three-step approach. However, our method totally operates if, for each node $\gamma$ and at each instance $t$, an external expert provides the forecast $x_t^\gamma$. 
How these features have been obtained is no longer an issue and the aim is to improve these benchmark forecasts with aggregation and reconciliation steps.
Thus, at each instance $t$, only the features are reveal at time $t$ and by skipping the generation of features step, we go straight to the aggregation step.

\paragraph{Third Step: Projection}
As the $|\Gamma|$ executions of Algorithm $\mathcal{A}$ are run in parallel  and independently, the obtained forecast vector $\hy_t$ does not necessary respect hierarchical constraints.
To correct that, we consider the orthogonal projection of $\hy_t$ onto the kernel of $\K$, which we denote by $ \Pi_\K ( \hy_t)$. 
This updated forecast $\ty_t \defeq  \Pi_\K ( \hy_t) $ fulfills the hierarchical constraints. \\

To sum up, at each  instant $t$, we first generate benchmark forecasts -- also called features --  $\x_t$. These predictions are then aggregated to form a new vector of forecast $\hy_t$, which is itself updated in the projection step in $\ty_t$. This procedure is stated in Meta-algorithm~\ref{prot:1}. 
Moreover, we can also directly project the features, skipping the aggregation step; this leads to the forecasts $\Pi_{\K}(\x_t)$. Thus, we get four forecasts ($\x_t$, $\Pi_{\K}(\x_t)$, $\hy_t$ and $\ty_t$) for each node and each instant. The performance of our strategies is measured in mean squared error. In Section~\ref{sec:experiments}, we compare these four methods in the scope of power consumption forecasting.

\begin{protocol}[h]
\caption{\label{prot:1} Aggregation and projection of features with summation constraints}
\begin{algorithmic}
\STATE \textbf{Input}
\STATE \quad Set of nodes $\Gamma$ and constraint matrix $\K$ 
\STATE \quad Feature generation technique, see Section~\ref{sec:features}
\STATE \quad Aggregation algorithm $\mathcal{A}$ taking parameter vector $\mathbf{s}_0$, see Section~\ref{sec:aggregalgo}
\STATE Compute the orthogonal projection matrix $\Pi_\K=  \big(I_{|\Gamma|}-\K^{\transp}(\K\K^{\transp})^{\inv}\K\big)$
 \FOR{$\gamma \in \Gamma$}
\STATE Create a copy of $\mathcal{A}$ denoted by $\mathcal{A}^\gamma$ and run with $\mathbf{s}^\gamma_0$
\ENDFOR
\FOR{$t=1,\ldots$}
\STATE Generate features  $\x_t$ 
\FOR{$\gamma \in \Gamma$}
\STATE $\mathcal{A}^\gamma$ outputs $\hat{y}^\gamma_t=\u_t^\gamma \cdot \x_t$
\ENDFOR
\STATE Collect forecasts:  $\hy_t=(\hat{y}^\gamma_t)_{\gamma \in \Gamma}^{\transp}$
\STATE Project forecasts:
$\ty_t=\Pi_\K (\hy_t)$ 
\FOR{$\gamma \in \Gamma$}
\STATE  $\mathcal{A}^\gamma$ observes $y^\gamma_t$
\ENDFOR
\STATE Suffer a prediction error  $\frac{1}{|\Gamma|}\sum_{\gamma \in \Gamma} \big(y^\gamma_t-\tilde{y}^\gamma_t\big)^2$\\[3pt]
\ENDFOR 
\STATE  \textbf{aim}
\STATE \quad Minimize the average prediction error  $\displaystyle{\tilde{L}_T =\frac{1}{T} \sum_{t=1}^T \frac{1}{|\Gamma|} \big\|\y_t-\ty_t\big\|^2=\frac{1}{T\,|\Gamma|}\sum_{t=1}^T\sum_{\gamma \in \Gamma} \big(y^\gamma_t-\tilde{y}^\gamma_t\big)^2 }$.
\end{algorithmic}
\end{protocol}

\subsection{Assessment of the Forecasts -- Form of the Theoretical Guaranties Achieved}
\label{subsec:regret}

Our forecast are linear combinations of the features and are evaluated by the average prediction error
\begin{equation}
\label{eq:loss}
\tilde{L}_T \defeq \frac{1}{T} \sum_{t=1}^T \frac{1}{|\Gamma|} \sum_{\gamma \in \Gamma} \big(y^\gamma_t-\tilde{y}^\gamma_t\big)^2\,.
\end{equation}
We want to compare our method to constant linear combinations of features.
For example, recalling that, for $\gamma \in \Gamma$, $x_t^\gamma$ is the benchmark prediction of $y_t^\gamma$,  using  $\boldsymbol{\delta}^\gamma \defeq \mathbf{1}_{\{i=\gamma\}}$ (the standard basis vector that points in the $\gamma$ direction) as weights should be a good first choice to define a constant linear combination (for any $\gamma \in \Gamma$, this strategy provides $\boldsymbol{\delta}^\gamma \cdot \x_t = x_t^\gamma$ as forecast for $y_t^\gamma$).
Thus, the matrix $\big( \boldsymbol{\delta}^\gamma  \big)_{\gamma \in \Gamma}$ defines a constant benchmark strategy and
its cumulative prediction error is
\[ L_T \Big( \big( \boldsymbol{\delta}^\gamma  \big)_{\gamma \in \Gamma} \Big) \defeq
 \frac{1}{T}\sum_{t=1}^T\frac{1}{|\Gamma|}\sum_{\gamma \in \Gamma} \big(y^\gamma_t- x_t^\gamma \big)^2.\] 
As soon as the features $(x_t^\gamma)_{\gamma \in \Gamma}$ are well-chosen, this quantity is small. But, these benchmark predictions  do not satisfy the summation constraints \textit{a priori} and it won't be fair to compare our forecasts (which do respect to hierarchy -- projection step ensures it) to these benchmark forecasts -- or any other constant linear combinations of features. 
Thus, we introduce, in paragraph~\ref{subsub:class},
the set $\mathcal{C}$ which contains all the constant strategies which satisfy the hierarchical constraints and we also detail how a such strategy can be represented by a $|\Gamma|\times |\Gamma|$-matrix $\U \in \mathcal{C}$. 
 In paragraph~\ref{subsub:regret}, we decompose, for any $\U \in \mathcal{C}$, the average prediction error into an approximation error $L_T(\U)$ -- the average prediction error  of $\U$ -- and a sequential estimation error $\mathcal{E}_T(\U)$.
To achieve almost as well as the best constant combination of features, 
we want to obtain some guarantee of the form:
\begin{equation}
\label{eq:bound_form}
\tilde{L}_T\, \leq  \,\inf_{\U \in \mathcal{C}} \Big\{ L_T(\U) + \mathcal{E}_T(\U)\Big\}\,, \quad \mathrm{where} \quad \mathcal{E}_T(\U) = \mathcal{O}\Big(\frac{1}{\sqrt{T}}\Big)\,.
\end{equation}
Indeed, if $\mathcal{E}_T(\U) \underset{T \to +\infty}{\longrightarrow} 0$, the average prediction error of our strategy tends to $L_T(\U)$ -- and classical convergence rate are in $\frac{1}{\sqrt{T}}$ (see sections~\ref{subsub:regret} and~\ref{sec:aggregalgo}).
We will explain  how this aim is equivalent to minimizing the quantity called regret that we define below.

\subsubsection{Class of Comparison}
\label{subsub:class}

We consider here a constant strategy, namely $|\Gamma|$ linear combinations of the features.
More formally, let us denote by $\u^\gamma$ a constant weight vector which provides, for any instance $t$, the forecast $\u^\gamma \cdot \x_t$ for the time series $y^\gamma_t$.
By batching these $|\Gamma|$ vectors into a matrix $\U \defeq (\u^\gamma)_{\gamma \in \Gamma} \in \mathcal{M}_{|\Gamma|}$,
predictions satisfy the constraints for an instance $t$ if $\U^{\transp} \x_t \in \mathrm{Ker}(\K)$. 
For it to be true for any $t$ (except for a few particular case -- for instance if all features vector are null), this requires that the image of $\U^{\transp}$ is in the kernel of $\K$.
We introduce the following set of matrices, for which associated forecasts necessarily satisfy the hierarchical constraints
\[\mathcal{C} \defeq \left\{ \, \U = \big( \u^1 \, \big| \, \dots \, \big| \, \u^{|\Gamma|} \big)  \, \big| \, \mathrm{Im} \big(\U^{\transp}\big) \subset \mathrm{Ker}\big(\K\big)  \right\}.\]
Note that, for any matrix $\U \in \mathcal{M}_{|\Gamma|}$, by definition of the orthogonal projection $\Pi_{\K}$, the forecast vector $\Pi_{\K} \U^{\transp} \x_t$ satisfies the hierarchical relationships so the set $\mathcal{C}$ contains the matrix $\U\Pi_{\K}^{\transp}$. This implies that the set $\mathcal{C}$ is not empty. 
To compare our methods to any constant strategy $\U \in \mathcal{C}$, we now introduce the common notion of regret.

\subsubsection{Aim: Regret Minimization}
\label{subsub:regret}
We want to compare the average prediction error $\tilde{L}_T$ to $L_T(\U)$, where $\U \in \mathcal{C}$ so the forecasts associated with $\U$ satisfy the hierarchical constraints -- otherwise, the two strategies would not be comparable because our predictions do respect the hierarchy.
Good algorithms should ensure that $\tilde{L}_T$ is not too far from the best $L_T(\U)$.
We thus define, for any $\U = (\u^\gamma)_{\gamma \in \Gamma} \in \mathcal{C}$, the cumulative prediction error of the associated constant linear combinations of features by
 \[L_T(\U)\defeq \frac{1}{T}\sum_{t=1}^T\frac{1}{|\Gamma|}\sum_{\gamma \in \Gamma} \big(y^\gamma_t-\u^\gamma \cdot \x_t \big)^2 =\frac{1}{T|\Gamma|} \sum_{t=1}^T \big\|\y_t-\U^{\transp} \x_t\big\|^2.\]
 In order to obtain a theoretical guarantee of the form of Equation~\eqref{eq:bound_form}, we decompose the average prediction error as
 \begin{equation}
	\label{eqn:decomp-loss}
	\tilde{L}_T = L_T(\U) + \frac{R_T(\U)}{T |\Gamma|}\,,
\end{equation}
where, the quantity $R_T(\U)$, commonly called regret is defined as the difference between the cumulative prediction error of our method and the one for weights $\U$:
\[
R_T(\U)\defeq T |\Gamma| \times \Big( \tilde{L}_T - L_T(\U) \Big) = \sum_{t=1}^T \big\| \y_t -\ty_t \big\|^2 - \sum_{t=1}^T \big\| \y_t - \U^{\transp}\mathbf{x}_t \big\|^2.
\]
In the light of Equation~\eqref{eqn:decomp-loss}, the average prediction error $\tilde{L}_T$ we attempt to minimize 
breaks down into an approximation error $L_T(\U)$ (the best prediction error we can hope for) and a sequential estimation error (dependent of how quickly the model estimate $\U$), proportional to the regret $R_T(\U)$. As stated before, the aim for algorithms is that $\tilde{L}_T$ is as close as possible to $\min_{\U \in \mathcal{C}} L_T(\U)$ (with  $\mathcal{C}$ the class of comparison defined above), which is equivalent to $\max_{\U \in \mathcal{C}} R_T(\U)$ being small. This point of view is very common for online forecasting methods (see, among others,~\citealp{devaine2013forecasting} and \citealp{mallet2009ozone}), and for an algorithm to be useful,  $\max_{\U \in \mathcal{C}} R_T(\U)$ need to be sub-linear in $T$ (otherwise the error remains constant -- or even worst: it increases with time).
Typical theoretical guaranties provide bounds of order $\sqrt{T}$ (see for example,~\citealp{deswarte:hal-01939813} and \citealp{amat2018fundamentals}).

\subsection{Technical Discussion: why we require the same features at each node.}
\label{subsec:samefeatures}
In this section, we explain why we consider the same features vector for each nodes. \textit{A priori}, we could have a different set of features at each node $\x_t^\gamma$, created with methods specific to this node. Also the size of feature vector $d^\gamma$ associated with the node $\gamma$ could vary.
Prediction of a time series $y_t^\gamma$ associated to a $d^\gamma$-vector $\u^\gamma$ is $\hat{y}^\gamma_t=\u^\gamma \cdot \x_t^\gamma$.
Therefore, a global constant strategy is a set $\big\{\u^1,\dots,\u^{|\Gamma|}\big\} \subset \mathbb{R}^{d^1 \times \dots \times d^{|\Gamma|}}$.
First is is a little less practical because unlike the previous setting, the vectors $\u^1,\dots,\u^{|\Gamma|}$ and $\x^1,\dots,\x^{|\Gamma|}$ are of different sizes, so it is less easy to use matrix notations.
Moreover, it becomes tricky to specify the class of constant strategies to compare to. As said before, the forecast vector $\hy_t=(\hat{y}^\gamma_t)_{\gamma \in \Gamma}$ satisfies the summation constraints if and only if it is in the kernel of $\K$. Thus, the following set, which contains the constant strategies fulfilling the hierarchical constraints for all $t>0$, 
\[
\bigg\{ \Big(\mathbf{u}^1,\dots,\mathbf{u}^{|\Gamma|} \Big)\in \mathbb{R}^{d^1\times \dots \times d^{|\Gamma|}} \, \Big| \,\forall t >0,\, \Big(\mathbf{u}^1\cdot\mathbf{x}_t^1,\dots,\mathbf{u}^{|\Gamma|}\cdot\mathbf{x}_t^{|\Gamma|}\Big)^{\transp} \in\mathrm{Ker}\big(\K\big) \bigg\},
\]
is not explicitly defined and may be empty because of the number of constraints on $(\mathbf{u}^1,\dots,\mathbf{u}^{|\Gamma|} )\in \mathbb{R}^{d^1\times \dots \times d^{|\Gamma|}}$ which increases at each time step. If there is no restrictions on the feature vectors, these constraints could be linearly independent, leading to an empty set.
Indeed, if we consider that the times series are connected by $K$ summations relationships, at each instance $t$, the $d^1 + \dots + d^{|\Gamma|}$ coefficients of vectors $\mathbf{u}^1,\dots,\mathbf{u}^{|\Gamma|}$ are linked by $K$ equations.
 As features are proper to each node, theses constraints have no reason to be dependent, so as soon as $T \times K > d^1 + \dots + d^{|\Gamma|}$, the above set may likely be empty.
 Because of that, it is not clear how to define the regret in this setting.
For this reason, we decided to use the same features vectors $\x_t$ for all nodes of $\Gamma$; which has also the benefit of allowing a simpler presentation.

\section{Main Theoretical Result}
\label{sec:mainresult}

From now on, let us introduce the following notation concerning the regret bound of Algorithm~$\mathcal{A}$.
\begin{notation}
We assume that, for any $\gamma \in \Gamma$ with the initialization parameter vector $\mathbf{s}_0^\gamma$, Algorithm $\mathcal{A}^\gamma$ ensures, for $T>0$ and for any $\u^\gamma \in \mathbb{R}^{|\Gamma|}$, any $\x_{1:T}=\x_1, \dots \x_T$ and any $y_{1:T}^\gamma=y_{1}^\gamma, \dots, y_{T}^\gamma$,
\begin{equation}
R_T^\gamma(\u^\gamma) \defeq \sum_{t=1}^T \big( y_t^\gamma - \hat{y}_t^\gamma   \big)^2- \sum_{t=1}^T \big( y_t^\gamma - \u^\gamma \cdot \mathbf{x}_t   \big)^2 \leq \mathrm{B}(\mathbf{x}_{1:T},y_{1:T}^\gamma,\mathbf{s}^\gamma_0,\u^\gamma).
\label{hyp::algo}
\end{equation}
\label{ass1}
\end{notation}
Details and examples of these regret bounds are provided in Section~\ref{sec:aggregalgo} that describes the aggregation algorithms considered in the experiments of Section~\ref{sec:experiments}. 
As getting a linear bound is trivial (by using the common assumption that prediction errors are bounded), these bounds have to be sub-linear to be of interest.
Referring to the average prediction error decomposition of Equation~\eqref{eqn:decomp-loss}, the sub-linearity ensures that the sequential estimation error $R_T^\gamma(\u^\gamma) / T$ tends to $0$.
This notation makes it possible to establish a bound of the cumulative regret.
\begin{theorem}
\label{gen-bound}
Under Notation~\ref{ass1}, for any matrix $\U \in \mathcal{C}$ and any $T\geq 1$, 
\[ R_T(\U)\leq \sum_{t=1}^T \big\| \y_t -\ty_t \big\|^2 - \sum_{t=1}^T \big\| \y_t - \U^{\transp}\mathbf{x}_t \big\|^2 = \sum_{\gamma \in \Gamma} \mathrm{B}\big(\mathbf{x}_{1:T},y_{1:T}^\gamma,\mathbf{s}_0^\gamma,\u^\gamma\big). \]
\end{theorem}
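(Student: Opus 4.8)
The plan is to peel off the projection step using a contraction property of orthogonal projections, and then to reduce the remaining quantity to a sum of the per-node regrets controlled by Notation~\ref{ass1}.

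First I would exploit that, by construction, the observation vector $\y_t$ respects the summation constraints, so $\y_t \in \mathrm{Ker}(\K)$ for every $t$, while $\ty_t = \Pi_\K(\hy_t)$ is the orthogonal projection of $\hy_t$ onto that same subspace. The decisive observation is that projecting onto $\mathrm{Ker}(\K)$ can only move $\hy_t$ closer to a point that already lies in $\mathrm{Ker}(\K)$. Concretely, $\y_t - \ty_t \in \mathrm{Ker}(\K)$ whereas $\hy_t - \ty_t \in \mathrm{Ker}(\K)^{\perp}$, so these two vectors are orthogonal and Pythagoras' theorem gives
\[
\bnorm{\y_t - \hy_t}^2 = \bnorm{\y_t - \ty_t}^2 + \bnorm{\ty_t - \hy_t}^2 \geq \bnorm{\y_t - \ty_t}^2 .
\]
Summing over $t = 1,\dots,T$ shows that the projection step never increases the cumulative error, that is $\sum_{t} \norm{\y_t - \ty_t}^2 \leq \sum_{t} \norm{\y_t - \hy_t}^2$.

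Next I would substitute this inequality into the definition of $R_T(\U)$ and expand the squared Euclidean norms coordinate by coordinate over $\gamma \in \Gamma$. Since $\hat{y}_t^\gamma = \u_t^\gamma \cdot \x_t$ is the $\gamma$-th entry of $\hy_t$ and $\u^\gamma \cdot \x_t$ is the $\gamma$-th entry of $\U^{\transp}\x_t$, exchanging the finite sums over $t$ and $\gamma$ yields
\[
R_T(\U) \leq \sum_{\gamma \in \Gamma} \left( \sum_{t=1}^T \big(y_t^\gamma - \hat{y}_t^\gamma\big)^2 - \sum_{t=1}^T \big(y_t^\gamma - \u^\gamma \cdot \x_t\big)^2 \right) = \sum_{\gamma \in \Gamma} R_T^\gamma(\u^\gamma) .
\]
It then suffices to apply Notation~\ref{ass1} to each node $\gamma$ separately, bounding $R_T^\gamma(\u^\gamma) \leq \mathrm{B}(\x_{1:T}, y_{1:T}^\gamma, \mathbf{s}_0^\gamma, \u^\gamma)$, and to sum these bounds over $\gamma \in \Gamma$.

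The only genuinely substantive step is the projection inequality, and its whole subtlety lies in noticing that $\y_t$ itself belongs to $\mathrm{Ker}(\K)$; once this is recorded the contraction is a one-line Pythagorean argument, and everything else is bookkeeping. I would also point out that the hypothesis $\U \in \mathcal{C}$ is not actually used in the chain of inequalities, since the per-node bound of Notation~\ref{ass1} holds for arbitrary $\u^\gamma \in \mathbb{R}^{|\Gamma|}$; the restriction to $\mathcal{C}$ serves only to make the comparison fair, by guaranteeing that the competitor forecasts $\U^{\transp}\x_t$ respect the hierarchy as well.
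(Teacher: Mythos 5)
Your proof is correct and follows essentially the same route as the paper's: the Pythagorean contraction of the projection step (using that $\y_t \in \mathrm{Ker}(\K)$ and $\ty_t = \Pi_\K(\hy_t)$), followed by the coordinate-wise decomposition of the squared norms and a node-by-node application of Notation~\ref{ass1}. Your additional remarks --- spelling out the orthogonality $\y_t - \ty_t \in \mathrm{Ker}(\K)$ versus $\hy_t - \ty_t \in \mathrm{Ker}(\K)^{\perp}$, and noting that $\U \in \mathcal{C}$ is only needed for the comparison to be meaningful, not for the inequality chain --- are accurate refinements of the same argument.
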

The regret $R_T(\U)$ is not just the sum over all the nodes of the regrets $R_T^\gamma(\u^\gamma)$ of Equation~\ref{hyp::algo}. Indeed, we do not evaluate here the forecasts $\hy_t$ but those obtained after the projection step: $\ty_t$.
The projection step provides a diminishing of the square prediction error and we just have to sum Equation~\ref{hyp::algo} on all nodes to get the bound.
\begin{proof}
This regret bound results from two main arguments: Pythagorean theorem, on the one hand, and Notation~\ref{ass1}, on the other hand.
For any $t\geq 1$, as $\y_t \in \mathrm{Ker}(\K)$, the Pythagorean theorem ensures 
\begin{equation}
\big\| \y_t - \ty_t\big\|^2 = \big\|\y_t - \Pi_\K (\hy_t)\big\|^2  \leq \big\| \y_t - \hy_t\big\|^2.
\label{pythagore}
\end{equation}
Let us fix a matrix $\U=\big(\u^1 | \, \dots | \, \u^{|\Gamma|}\big) \in \mathcal{C}$.
Firstly, the application of Pythagorean theorem ensures that the projection step reduces regret. 
Rewriting the regret as a sum over the nodes,
we then use Notation~\ref{ass1} independently for each node of $\Gamma$ to conclude the proof.
\begin{align*}
R_T(\U)&=\sum_{t=1}^T \big\| \y_t -\ty_t \big\|^2 - \sum_{t=1}^T \big\| \y_t - \U^{\transp} \mathbf{x}_t \big\|^2  \\
& \stackrel{\mbox{\eqref{pythagore}}}{\leq} \sum_{t=1}^T \big\| \y_t -\hy_t \big\|^2 - \sum_{t=1}^T \big\| \y_t - \U^{\transp} \mathbf{x}_t \big\|^2 \\
&= \sum_{\gamma \in \Gamma}  \sum_{t=1}^T \big( y^\gamma_t -\hat{y}^\gamma_t \big)^2 - \sum_{\gamma \in \Gamma}  \sum_{t=1}^T \Big( y^\gamma_t - \u^\gamma\cdot \mathbf{x}_t \Big)^2 =  \sum_{\gamma \in \Gamma} R_T^\gamma\big(\u^\gamma\big) \\ 
& \stackrel{\mbox{\eqref{hyp::algo}}}{\leq} \sum_{\gamma \in \Gamma} \mathrm{B}\big(\mathbf{x}_{1:T},y_{1:T}^\gamma,\mathbf{s}_0^\gamma,\u^\gamma\big).\qedhere
\end{align*}
\end{proof}

\begin{rem}
\label{rem:uniform_regret}
For an initialization parameter vector $\mathbf{s}_0^\gamma$, and a subset $\mathcal{D}\subset \mathbb{R}^{|\Gamma|}$, some aggregation algorithms provide a uniform regret bound of the following form:
\begin{equation*}
	R_T^\gamma (\mathcal{D}) \defeq \sum_{t=1}^T \big( y_t^\gamma - \hat{y}_t^\gamma   \big)^2- \min_{\u^\gamma \in \mathcal{D}}\sum_{t=1}^T \big( y_t^\gamma - \u^\gamma \cdot \x_t   \big)^2 \leq \mathrm{B}\big(\mathbf{x}_{1:T}^\gamma,y_{1:T}^\gamma,\mathbf{s}_0^\gamma\big).
\end{equation*}
In this case, let us introduce, for any subset $ \mathcal{B} \subset  \mathcal{M}_{|\Gamma|}$, the subset $\mathcal{B}_{| \mathcal{D}} \defeq \big\{ \U\in\mathcal{B}\, | \, \forall \gamma \in \Gamma, \, \u^\gamma \in \mathcal{D} \big\} $. Then, we bound the cumulative regret $R_T(\mathcal{D})$ defined just below with
\[R_T(\mathcal{D}) \defeq \max_{\U \in \mathcal{C}_{|\mathcal{D}}} R_T(\U).\]
With the same previous arguments we get the uniform regret bound
\[ R_T(\mathcal{D}) =   \sum_{t=1}^T \big\| \y_t -\ty_t \big\|^2 -\min_{\U \in \mathcal{C}_{|\mathcal{D}}} \sum_{t=1}^T \big\| \y_t - \U^{\transp}\mathbf{x} \big\|^2\leq \sum_{\gamma \in \Gamma} \mathrm{B}\big(\mathbf{x}_{1:T}^\gamma,y_{1:T}^\gamma,\mathbf{s}_0^\gamma\big).\] 
\end{rem}
\section{On one Operational Constraint: Half-Hourly Predictions with One-Day-Delayed Observations}
In this section, we highlight the differences between the previous theoretical setting and the practical setting of our experiments and how these changes affect the regret bound.
In Section~\ref{sec:experiments}, we aim to forecast power consumptions at half-hourly intervals. 
Meta-algorithm~\ref{prot:1} makes the implicit assumption that historical time series values are available and to forecast at an instance $t$, we can use $\y_{1:t-1}$. We thus assume that very recent past observations, up to half an hour ago, would be available -- and it is not realistic at all.
Indeed, there is some operational constraints on the power network and on meters that make it difficult to instantly access the data: it is common to obtain load records with a delay of a few hours or even a few days. 
Although this delay is becoming shorter with the deployment of smart meters and the evolution of grids, we cannot consider we have access to the consumption of the previous half-hour. 
To take into account these operational constraints and to carry out experiments under practical conditions, we make the classic assumption that we have access to consumptions with a delay of $24$ hours (see among others \citealp{fan2011short} and \citealp{GaillardAdditivemodelsrobust2016a}).
As now, only past observations $\y_{1:t-48}$ are available at an instance $t$,  we adapt the previous method a bit.\\

As we will see in Section~\ref{sec:features}, the half-hour of the day is a crucial variable for power consumption forecasting and to obtain relevant forecasts, we will consider the consumption of the previous day at the same half-hour (but never the one of the previous half-hour). 
Thus the delay in the access to consumption observation is not an issue for feature generation. 
But it becomes especially problematic for online learning (in our experiments, features are generated offline with models trained on historical data).
Indeed, in the aggregation step of our method, we assume to observe, for each node $\gamma$ and at each instance $t$, the consumption $y_{t-1}^\gamma$ -- that is not possible anymore.
To deal with this issue we initially considered two solutions.
In our first approach, for any $\gamma \in \Gamma$, the time series $(y_t^\gamma)$ is divided into $48$ time series with daily time steps. Then, $48$ aggregations are done in parallel and, as $t-1$ now refers to the previous day, there is no more delay issue. 
The $48$ series are then collected to reconstruct a time series at half-hour time step.
For a constant strategy $\u^\gamma$, the regret of the global aggregation $R^\gamma_T(\u^\gamma)$ is simply the sum of the $48$ regrets  -- that refer to the $48$ aggregation run in parallel on the $48$ daily time series -- denoted by $\big(R^{\gamma\, h}_{T/48}(\u^\gamma)\big)_{1 \leq h \leq 48}$, so we have
$$R^\gamma_T(\u^\gamma)= \sum_{h=1}^{48} R^{\gamma\, h}_{T/48}(\u^\gamma).$$
 If we consider an aggregation algorithm that ensures a bound of the form of Notation~\ref{ass1} where the bound $B$ depends only on the horizon time -- namely, $R \leq  B$ for all $h$ -- the regret associated with the half-hourly time series $(y_t^\gamma)$ satisfies:
$$R^\gamma_T(\u^\gamma)\leq 48 \times B(T/48).$$
\citet{joulani2013online} provide an overview of work on online learning under delayed feedback and for our framework, which refers to
full information setting with general feedback. The bound above matches their results. 
In a second approach, we ``ignore" the delay in a sense that we apply the 
aggregation algorithms as if the delayed observations $\y_{t-48}$ were $\y_{t}$. 
Thus, in Meta-algorithm~\ref{prot:1}, at each node $\gamma$ and any instance $t$, instead of outputting the forecast $y_t^\gamma=\u_t^\gamma\cdot \x_t$, algorithm~$\mathcal{A}^\gamma$ outputs $y_t^\gamma=\u_{t-48}^\gamma\cdot \x_t$.
For simplicity of notation, the aggregation algorithms of Section~\ref{sec:aggregalgo} are presented in their original version, namely assuming that observations at $t-1$ are available at an instance $t$.
Such adaptations have already been tested: Algorithm 15 of~\citet{Gaillard2015} gives a delayed version of Algorithm~\ref{algo:ml_pol} that we also use in Section~\ref{sec:experiments}.
After testing both approaches, we kept the second one, which achieves a much better performance. 
Our choice was also supported by Chapter~9 of \citet{Gaillard2015} experiments, which drew similar conclusions.

\section{Generation of the Features}
\label{sec:features}

Here we describe the forecasting methods we use in the experiments of Section~\ref{sec:experiments} to generate the benchmark predictions that will be used as features in the sequel.
We recall (see Section~\ref{sec:methodo}) that
throughout this work, we consider that, at each node $\gamma \in \Gamma$ and for any instance $t$, a forecaster provides a benchmark prediction $x_t^\gamma$ based on historical data of the time series $(y_t^\gamma)_{\gamma \in \Gamma}$ and on exogenous variables relative to the node $\gamma$. 
These $|\Gamma|$ forecasters independently generate the $|\Gamma|$ forecasts $(x_t^\gamma)$ in parallel
and the set of features $\x_t$ is made up of the above $|\Gamma|$ benchmark predictions.
Forecasts can be the output of any predictive model. 
In the experiments of Section~\ref{sec:experiments}, we consider three forecasting methods, that are described in the
following Subsections~\ref{subsec:autoreg},~\ref{subsec:gam} and~\ref{subsec:rf}. 
\paragraph{Notation} Subsections~\ref{subsec:autoreg} and ~\ref{subsec:gam} present parametric methods. For any parameter $a$ of the model, we will denote by $\hat{a}$ its estimation (no matter the method we use).

\subsection{Auto-Regressive Model}
\label{subsec:autoreg}
A simple approach consists in considering an auto-regressive model. 
Let us fix $\gamma \in \Gamma$ and assume that, to predict the time series $(y^\gamma_t)_{t>0}$, we have access to historical observations.
For an instance $t$, the  model specifies that the output variable $y^\gamma_t$ depends linearly on its own previous values. 
In Section~\ref{sec:experiments}, we consider the power consumption at half-hourly intervals. 
For an instance $t$, to forecast the time series  $y^\gamma_t$ we assume to have access to the power consumption at D-$1$ and D-$7$, which correspond to $y^\gamma_{t-48}$ and $y^\gamma_{t-7\times 48}$, respectively. 
We predict the consumption  half-hour by half-hour thanks to linear models taking as explanatory variables its values at  D-$1$ and D-$7$.
We assume that these $48$ auto-regressive models have the same coefficients.
Thus, for this modeling, the power consumption associated with the node $\gamma$ equals
 \begin{align*}
y_t^\gamma=a_1^\gamma y_{t-48}^\gamma+a^\gamma_7 y_{t-7\times 48}^\gamma+\mathrm{noise} \,.
 \end{align*}
For each $\gamma \in \Gamma$, we estimate the coefficients $a_1^\gamma$ and $a_7^\gamma$ 
using ordinary least squares regression on a training data set. 
Therefore, at an new instant $t$, we predict
 \begin{align*}
x_t^\gamma=\hat{a}_1^\gamma y_{t-48}^\gamma+\hat{a}^\gamma_7 y_{t-7\times 48}^\gamma \,.
 \end{align*}

\subsection{General Additive Model}
\label{subsec:gam}
Generalized additive models (see the monograph of~\citealp{wood2006generalized} an in-depth presentation)
are effective semi-parametric approaches to forecast electricity consumption (see, among others, \citealp{goude2014local} and \citealp{GaillardAdditivemodelsrobust2016a}).
They model the power demand as a sum of independent exogenous (possibly non-linear) variable effects. We describe this model using the specification we chose in our experiments.
In Section~\ref{sec:experiments}, for a node $\gamma \in \Gamma$,
we take into account some local meteorological variables at the half-hour time step: the temperature $\tau^\gamma$ and the smoothed temperature $\bar{\tau}^\gamma$, the visibility $\nu^\gamma$, and the humidity $\kappa^\gamma$. 
For an instant $t$, we also introduce calendar variables: the day of the week $d_t$ (equal to 1 for Monday, 2 for Tuesday, etc.), the half-hour of the day $h_t \in \{1,...,48\}$ and the position in the year $\rho_t \in [0,1]$, which takes linear values between  $\rho_t= 0$ on January 1st at 00:00 and $\rho_t = 1$ on December the 31st at 23:59.
As the effect of the half-hour $h_t$ is crucial to forecast load, it is often more efficient to consider a model per half-hour (see \citet{fan2011short} and \citet{goude2014local}). 
The global model is then the sum of 48 daily models, one for each half-hour of the day.
More precisely, we consider the following additive model for the load, which breaks down time by half hours:
 \begin{align*}
y_t^\gamma=\sum_{h=1}^{48} \mathbf{1}_{h_t=h} \Big[  a_h^\gamma y_{t-7\times 48}^\gamma + \, s_{1,h}^\gamma\big(y_{t-48}^\gamma\big) +  \, s_{\tau,h}^\gamma(\tau^\gamma_t) + \, s_{\bar{\tau},h}^\gamma(\bar{\tau}^\gamma_t) + \,  s_{\nu,h}^\gamma(\nu^\gamma_t) & \nonumber \\
+ \, s_{\kappa,h}^\gamma(\kappa^\gamma_t) + \sum_{d=1}^7w_{d,h}^\gamma\mathbf{1}_{d_t=d}  +  \, s_{\rho,h}^\gamma(\rho_t) \Big] + \mathrm{noise}.&
 \end{align*}
 The $s_{1,h}^\gamma$, $s_{\tau,h}^\gamma$, $s_{\bar{\tau},h}^\gamma$, $s_{\nu,h}^\gamma$, $s_{\kappa,h}^\gamma$ and  $s_{\rho,h}^\gamma$  functions catch the effect of the consumption lag, the meteorological variables and of the yearly seasonality.
They are cubic splines: $\mathcal{C}^2$-smooth functions made up of sections of cubic polynomials joined together at points of a grid.
The coefficients $a^\gamma_h$ and $w^\gamma_{d,h}$  model the influence of the consumption at D-$7$ and of the day of the week.
Indeed, we consider a linear effect for the consumption at D-$7$ (it achieved a better performance than a spline effect in our experiments) and as the day of the week takes only $7$ values, we write its effect as a sum of indicator functions, and thus $7$ coefficients $w^\gamma_{d,h}$ are considered. 
As we consider a model per half-hour, all the coefficients and splines are indexed by $h$.
To estimate each model, we use the Penalized Iterative Re-Weighted Least Square (P-IRLS) method~\citealp{wood2006generalized}, implemented in the \texttt{mgcv} R-package, on a training data set.
At any node $\gamma \in \Gamma$, for a new round $t$, we then output the forecast 
 \begin{align*}
x_t^\gamma=\sum_{h=1}^{48} \mathbf{1}_{h_t=h}\Big[  \hat{a}_h^\gamma y_{t-7\times 48}^\gamma +  \, \hat{s}_{1,h}^\gamma\big(y_{t-48}^\gamma\big) +  \, \hat{s}_{\tau,h}^\gamma(\tau^\gamma_t) + \,  \hat{s}_{\bar{\tau},h}^\gamma(\bar{\tau}^\gamma_t) + \,  \hat{s}_{\nu,h}^\gamma(\nu^\gamma_t)  &\nonumber \\
+ \,   \hat{s}_{\kappa,h}^\gamma(\kappa^\gamma_t) + \sum_{d=1}^7\hat{w}_{d,h}\mathbf{1}_{d_t=d}  +  \, \hat{s}_{\rho,h}^\gamma(\rho_t) \Big].&
 \end{align*}
 
\subsection{Random Forests}
\label{subsec:rf}

Random forests form a powerful learning method for classification and regression that constructs a collection of decision trees from training data and output, for each new data point, the mean prediction of the individual trees.
Introduced by~\citet{breiman2001random}, theses approaches operate well on many applications. 
Recent work demonstrates their efficiency in forecasting power consumption (see, among others~\citealp{goehry2019aggregation} and~\citealp{fan2011short}). 
A random forest is made up of a set $\big(T_k^\gamma\big)_{1 \leq k \leq K}$ of decision trees grown in the following way (see \citealp{BreimanFOS84} for further details).
For each  $k=1,\dots,K$, we first randomly draw, with replacement, $n$ points from the training data set and start at the root, that contains all the points of the sub-sample.
At each node $\mathcal{N}$ with more than $m$ data points, $V$ variables are randomly selected among the exogenous variables.
Given a variable $v \in V$ and a threshold $s$, each point of the node $\mathcal{N}$ is assigned to the left daughter node $\mathcal{N}_L$  if its value in $v$ is lower than $s$ or to the right daughter node $\mathcal{N}_R$ otherwise. 
Considering only these $V$ variables, the best split -- given by a pair $(v,s)$ of variable and an associated threshold -- to separate the points into two set $\mathcal{N}_L$ and $\mathcal{N}_R$ is determined by minimizing the variance criterion indicated below. 
 For any node $\mathcal{N}$ let us define the variance $\mathrm{Var} (\mathcal{N})$ by
 \[
 \mathrm{Var}(\mathcal{N})\defeq \frac{1}{|\mathcal{N}|} \sum_{i \in \mathcal{N}} \big(y^\gamma_i - \bar{y}^\gamma_{\mathcal{N}} \big)^2, \quad \mathrm{with} \quad \bar{y}^\gamma_{\mathcal{N}}\defeq  \frac{1}{|\mathcal{N}|} \sum_{i \in \mathcal{N}} y^\gamma_i.
 \]
Each node $\mathcal{N}$ is split in the two daughter nodes $\mathcal{N}^{^\star}_R$ and $\mathcal{N}_L^{^\star}$ (determined by the choice of $v$ and $s$) minimizing the following criterion
 \begin{equation}
\label{eq:variance_criterion}
\big(\mathcal{N}_R^{^\star} , \mathcal{N}_L^{^\star} \big) \in \argmin_{\mathcal{N}_R,\, \mathcal{N}_L}   \frac{|\mathcal{N}_R|}{n} \, \mathrm{Var}\big(\mathcal{N}_R\big)+  \frac{|\mathcal{N}_L|}{n} \, \mathrm{Var}\big(\mathcal{N}_L\big).
 \end{equation}
  Thus, we create a binary test to split the points of the node.
When all the leaves contain fewer than $m$ points, we associate with each leaf the mean of its data points.
For a new point, we look at the values of its variables. For each $k=1,\dots,K$, we browse the tree $T_k^\gamma$ and predict the value of the corresponding leaf.
The $K$ resulting forecasts are then averaged out.
Algorithm~\ref{rf} describes the above procedure and is implemented in the \texttt{ranger} R-package.
\begin{algorithm}[htb!]
\caption{\label{rf} Random Forest for Regression}
\begin{algorithmic}
\STATE \textbf{Parameters} 
\STATE \quad Number of trees $K$
\STATE \quad Sample size $n$
\STATE \quad Minimal node size $m$
\STATE \quad Number of variables to possibly split at in each node $V$
 \FOR{$k = 1, \dots,K$}
 \STATE Draw a sample (with replacement) of size $n$ from training data
 \STATE Construct the tree $T_k$ starting at the root with all the $n$ data points 
 \WHILE{a leaf contains more than $m$ data points} 
 \FOR{each leaf of more than $m$ data points}
 \STATE Select $V$ variables 
 \STATE Split the node into two nodes using the variance criterion~\eqref{eq:variance_criterion} among the chosen variables
 \ENDFOR
\ENDWHILE
\ENDFOR 
\STATE Output $\Big(T_k^\gamma \Big)_{1\leq k \leq K}$ \\[3pt]
\STATE  \textbf{Prediction at a new data point}
 \STATE  \quad Mean of the $K$ forecasts output by the trees $\big(T_k^\gamma \big)_{1\leq k \leq K}$
\end{algorithmic}
\end{algorithm}
In the experiments of Section~\ref{sec:experiments}, we take $n$ equal to the number of data points in the training set, $m=5$ and $K=500$ (default parameters of \texttt{ranger}). The number $V$ has been optimized by grid search; what we obtained is that, for each node, we keep two-thirds of the variables to split it (these variables are the same as the ones described in the previous section).
With $\big(T_k \big)_{1\leq k \leq K}$, the trees constructed by Algorithm~\ref{rf} run on a training data set, the forecast of any node $\gamma \in \Gamma$, at a new round $t$, is then
\begin{equation*}
x_t^\gamma=\frac{1}{K} \sum_{k=1}^K T_k^\gamma \big(  y_{t-7\times 48}^\gamma , y_{t-48}^\gamma,\tau^\gamma_t,\bar{\tau}^\gamma_t,\nu^\gamma_t, \kappa^\gamma_t,\rho_t,d_t,h_t \big).
\end{equation*}
\section{Aggregation Algorithms}
\label{sec:aggregalgo}
This section describes the three aggregation algorithms we use in the experiments of Section~\ref{sec:experiments}.
At an instance $t$, for a node $\gamma \in \Gamma$, a copy $\mathcal{A}^\gamma$ of an aggregation algorithm $\mathcal{A}$ takes the feature vector $\x_t$  (generated with one of methods of the previous section) as an input and outputs the forecast $\hat{y}_t^\gamma=\u_t^\gamma \cdot \x_t$.
Therefore, to forecast the node $\gamma$, we use  $\x_t$, which contains the predictions of all the nodes (including that of the considering node). 
We remind that the features $ \big(x^\gamma_t\big)_{\gamma \in \Gamma}$ are generated independently with possibly different exogenous variables but that the observations $ \big(y^\gamma_t\big)_{\gamma \in \Gamma}$ may be strongly correlated.
This is why we consider aggregation to refine some forecasts by combining the features.
Our experiments demonstrate that this aggregation step improves the forecasts.
Subsection~\ref{subsec:standardization} presents a trick to empirically standardize the features and the observations first. 
On the one hand, this preprocessing justifies boundedness assumptions~\eqref{eq:ass2} on observations and features, that ensure some theoretical guaranties of the form requested by Notation~\ref{ass1}. On the other hand, this preprocessing simplifies hyper-parameters search (for the aggregation step) as we can choose the same for every series since they have similar statistics (scale and variance).
Following Subsections~\ref{subsec:linear_aggreg} and~\ref{subsec:convex_aggreg} introduce the aggregation algorithms and some technical tricks implemented in the experiments of Section~\ref{sec:experiments}.

\subsection{Standardization}
\label{subsec:standardization}

In empirical machine learning, it is known that standardizing observations and features may significantly improve results, and sequential learning is no exception (see \citealp{pmlr-v98-gaillard19a}).
In addition, standardization makes the calibration of the parameters of the algorithm common to all the nodes, namely for each algorithm $\mathcal{A}^\gamma$, we choose the hyper-parameters $\mathbf{s}_0^\gamma = \breve{\mathbf{s}}_0$.
We can do so, because thanks to the preprocessing below, features and observations will be of the same order.
Let us fix $\gamma \in \Gamma$ and $t>0$. We consider the following transformations, relying on statistics $S^\gamma$ and $\breve{\E}$ computed on $T_0$ historical time steps:
\begin{align*}
y^\gamma_t & \rightarrow \breve{y}^\gamma_t \, \defeq \,\frac{y^\gamma_t - x_t^\gamma}{S^\gamma} & \text{Observations tranform}\\
\x_t &\,  \rightarrow \breve{\x}_t \, \, \, \defeq  \, \, \, \breve{\E} \x_t  & \text{Features transform}
\end{align*}
\[
\mathrm{with} \quad S^\gamma = \max_{1-T_0 \leq t \leq 0} |y_t^\gamma - x_t^\gamma|
\quad\quad\text{ and }\quad\quad
\displaystyle\breve{\E} \defeq \left(\frac{1}{T_0} \sum_{t=1-T_0}^0\x_t\,\x_t^{\transp} \right)^{-1/2}\,.
\]
We thus assume that the Gram matrix $\frac{1}{T_0} \sum_{t=1-T_0}^0\x_t \x_t^{\transp}$ is invertible, which is a reasonable assumption as soon as $T_0$ is large enough.
Our standardization process differs from the usual methods  (see details below) but it provides the theoretical guaranties set out below.
Furthermore, it makes sense for the following reasons. 
Fixing $\gamma \in \Gamma$, when features and observations are bounded, $S^\gamma$ is an estimation of a bound on $y_t^\gamma-x_t^\gamma$. The re-scaling of $(y_t^\gamma-x_t^\gamma)$ by $S^\gamma$ should provide transformed observations lying in $[-1,1]$ or a some neighboring range.
It also reduces and homogenizes the variances for all the nodes.
A simple example may illustrate this variance reduction.
For deterministic features, the variance of non-transformed observations satisfy $\mathrm{Var} \big( y_t^\gamma \big)=\mathrm{Var} \big( y_t^\gamma - x_t^\gamma\big)$. The variance of standardized observations is then divided by $\big(S^\gamma\big)^2$ and we have
$\mathrm{Var} \big( \breve{y}_t^\gamma \big) = \mathrm{Var} \big( y_t^\gamma \big) / \big(S^\gamma\big)^2.$
For $T_0$ large enough, the variance of transformed observations should be less than $1$. Indeed, with high probability, the maximum of the absolute values of the random variable $(y_t^\gamma - x_t^\gamma)$ on $t=1-T_0, \dots, 0$ (which is $S^\gamma$), is higher than its standard deviation $ \sqrt{ \mathrm{Var} \big( y_t^\gamma \big)}$ and thus $\big(S^\gamma\big)^2 > \mathrm{Var} \big( y_t^\gamma \big)$.
Moreover, the expectation of $(y_t^\gamma - x_t^\gamma)$ should be close to $0$ as soon as the features are correctly generated. Indeed, the more the benchmark forecast are relevant, the more the observations are re-centered. 
Concerning the features, our standardization is classic in the case of centered features.
The matrix $\breve{\E}^2$ would then be an estimation of the inverse of the co-variance matrix of vectors $\x_t$, and the multiplication of the features by $\breve{\E}$ would provide transformed features whose co-variance matrix is close to the identity matrix. 
Here, we do not recenter observations and features with some empirical mean as it is classically done (this would be unconvenient for our regret analysis). 
Anyway, Subsection~\ref{subsec:experiment_design} provides some experimental results which confirm that our preprocessing standardizes reasonably well observations and features. 
Moreover, we tested classical standardization (with re-centering) on features and obtained results similar to those presented in Section~\ref{sec:experiments} (but, as hinted at above, no theoretical guaranties would be associated with this classical standardization). \\

We run Algorithm $\mathcal{A}^\gamma$ on transformed features and observations with the initialization parameter vector $\breve{\mathbf{s}}_0$ (which does not depend on $\gamma$)
and obtain a standardized prediction at node $\gamma$, denoted by $\bar{y}^\gamma_t$.
Then, we transform this output to get the (non-standardized) forecast  
\[\hat{y}_t^\gamma \defeq S^\gamma\bar{y}_t^\gamma +x_t^\gamma.\]
For any vector $\breve{\mathbf{u}}^\gamma \in \mathbb{R}^{|\Gamma|}$, we introduce the standardized regret associated with transformed observations and features, denoted by $\breve{R}_T^\gamma(\breve{\mathbf{u}}^\gamma)$ as:
\begin{align*}
\breve{R}_T^\gamma(\breve{\mathbf{u}}^\gamma)& \defeq \sum_{t=1}^T \big( \breve{y}^\gamma_t - \bar{y}_t^\gamma \big)^2 - \sum_{t=1}^T \big( \breve{y}^\gamma_t -\breve{\mathbf{u}}^\gamma\cdot \breve{\x}_t \big)^2 \\
&=\sum_{t=1}^T \bigg( \frac{y_t^\gamma - x_t^\gamma}{S^\gamma } - \frac{\hat{y}_t^\gamma - x_t^\gamma}{S^\gamma }  \bigg)^2 - \sum_{t=1}^T \Bigg(\frac{y_t^\gamma - x_t^\gamma}{S^\gamma }  - \breve{\u}^\gamma  \cdot  \big( \breve{\E} \x_t \big) \Bigg)^2 \\
&= \frac{1}{\big(S^\gamma\big)^2}\sum_{t=1}^T \big(y_t^\gamma -\hat{y}_t^\gamma \big)^2 -
\frac{1}{\big(S^\gamma\big)^2} \sum_{t=1}^T \bigg( y_t^\gamma -  \underbrace{\Big( x_t^\gamma  +  S^\gamma \big( \breve{\E} \breve{\u}^\gamma \big)  \cdot   \x_t \Big)}_{\u^\gamma \cdot \x_t} \bigg)^2.
\end{align*}
In the equations above, we define $\u^\gamma \defeq \boldsymbol{\delta}^\gamma +  S^\gamma \big( \breve{\E} \breve{\u}^\gamma \big)$ where
$\boldsymbol{\delta}^\gamma \defeq \big( \mathbf{1}_{\{i=\gamma\}} \big)_{i \in \Gamma}\,$ denotes the standard basis vector that points in the $\gamma$ direction. Equivalently,
$\breve{\mathbf{u}}^\gamma= \breve{\E}^{-1}(\u^\gamma -~\boldsymbol{\delta}
^\gamma)/ S^\gamma$, so there is a bijective correspondence between the vectors $\u^\gamma$ and $\breve{\u}^\gamma$.
Therefore, by noticing that $x_t^\gamma = \boldsymbol{\delta}^\gamma \cdot \x_t$, the regret associated with original features and observations is related to the regret of transformed data by the following equation:
\[ \breve{R}_T^\gamma(\breve{\mathbf{u}}^\gamma)= \frac{1}{\big(S^\gamma\big)^2}\sum_{t=1}^T \big(y_t^\gamma -\hat{y}_t^\gamma \big)^2 -
\frac{1}{\big(S^\gamma\big)^2} \sum_{t=1}^T \big( y_t^\gamma -  \u^\gamma \cdot   \x_t \big)^2 = \frac{R_T^\gamma(\u^\gamma)}{\big(S^\gamma\big)^2}  \,. \]
Furthermore, as for any ${\breve{\mathbf{u}} \in \mathbb{R}^{|\Gamma|}} $, Notation~\ref{ass1} ensures
\begin{align*}
\breve{R}_T^\gamma(\breve{\mathbf{u}}^\gamma)&=\sum_{t=1}^T \big( \breve{y}^\gamma_t - \bar{y}_t^\gamma \big)^2 - \sum_{t=1}^T \big( \breve{y}^\gamma_t -{\breve{\mathbf{u}^\gamma}}^{\transp}\breve{\x}_t \big)^2  \leq  \mathrm{B}\big(\breve{\x}_{1:T},\breve{y}_{1:T}^\gamma,\breve{\mathbf{s}}_0,\breve{\mathbf{u}}^\gamma \big)\,,
\end{align*}
Combining the two previous equations yields the following proposition.
\begin{proposition}
\label{prop}
For any $\gamma \in \Gamma$ and any $\u^\gamma \in \mathbb{R}^{|\Gamma|}$, if Notation~\ref{ass1} holds for Algorithm $\mathcal{A}^\gamma$ run on transformed observations and features $\breve{y}_{1:T}^\gamma$ and $\breve{\x}_{1:T}$, with the initialization parameter vector $\breve{\mathbf{s}}_0$, we have, for $T>0$,
\[
R_T^\gamma\big(\u^\gamma\big) \leq  \big({S^\gamma}\big)^2 \, \mathrm{B}\big(\breve{\x}_{1:T},\breve{y}_{1:T}^\gamma,\breve{\mathbf{s}}_0,\breve{\mathbf{u}}^\gamma \big) \quad \mathrm{where} \quad \breve{\mathbf{u}}^\gamma= \breve{\E}^{-1}(\u^\gamma -~\boldsymbol{\delta}^\gamma)/ S^\gamma.
\]
\end{proposition}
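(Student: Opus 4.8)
The plan is to read this off directly from the two facts assembled just above the statement, since the proposition is precisely their combination for the particular pair $(\u^\gamma,\breve{\mathbf{u}}^\gamma)$ named in the statement. The first fact is the exact rescaling identity $\breve{R}_T^\gamma(\breve{\mathbf{u}}^\gamma) = R_T^\gamma(\u^\gamma)/(S^\gamma)^2$ relating the standardized regret to the original one; the second is the regret bound guaranteed by Notation~\ref{ass1} when Algorithm $\mathcal{A}^\gamma$ is run on the transformed sequences. The conclusion then follows by substituting the bound into the identity and multiplying through by the positive constant $(S^\gamma)^2$.

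First I would (re)derive the rescaling identity. Starting from the definition $\breve{R}_T^\gamma(\breve{\mathbf{u}}^\gamma) = \sum_{t=1}^T(\breve{y}^\gamma_t-\bar{y}_t^\gamma)^2 - \sum_{t=1}^T(\breve{y}^\gamma_t-\breve{\mathbf{u}}^\gamma\cdot\breve{\x}_t)^2$, I substitute $\breve{y}^\gamma_t = (y^\gamma_t-x^\gamma_t)/S^\gamma$ together with $\bar{y}_t^\gamma = (\hat{y}_t^\gamma-x^\gamma_t)/S^\gamma$, which is exactly the inverse of the transform $\hat{y}_t^\gamma \defeq S^\gamma\bar{y}_t^\gamma+x_t^\gamma$. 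The factor $1/(S^\gamma)^2$ then comes out of both sums, turning the first into $\sum_t(y^\gamma_t-\hat{y}_t^\gamma)^2$. For the second sum I use that $\breve{\E}$ is symmetric (it is a negative power of the symmetric positive-definite Gram matrix, invertible by the assumption on $\frac{1}{T_0}\sum \x_t\x_t^{\transp}$) to rewrite $\breve{\mathbf{u}}^\gamma\cdot(\breve{\E}\x_t) = (\breve{\E}\breve{\mathbf{u}}^\gamma)\cdot\x_t$, and then the identity $x^\gamma_t = \boldsymbol{\delta}^\gamma\cdot\x_t$ to recognise the comparison forecast as $\u^\gamma\cdot\x_t$ with $\u^\gamma \defeq \boldsymbol{\delta}^\gamma + S^\gamma\breve{\E}\breve{\mathbf{u}}^\gamma$. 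Invertibility of $\breve{\E}$ makes this correspondence a bijection, yielding precisely $\breve{\mathbf{u}}^\gamma = \breve{\E}^{-1}(\u^\gamma-\boldsymbol{\delta}^\gamma)/S^\gamma$ as in the statement, so the identity holds for the exact vectors appearing in the proposition.

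Next I would apply Notation~\ref{ass1} to the copy $\mathcal{A}^\gamma$ fed the transformed data, giving $\breve{R}_T^\gamma(\breve{\mathbf{u}}^\gamma) \leq \mathrm{B}(\breve{\x}_{1:T},\breve{y}_{1:T}^\gamma,\breve{\mathbf{s}}_0,\breve{\mathbf{u}}^\gamma)$. Chaining this with the identity of the previous step and multiplying both sides by $(S^\gamma)^2 > 0$ delivers $R_T^\gamma(\u^\gamma) \leq (S^\gamma)^2\,\mathrm{B}(\breve{\x}_{1:T},\breve{y}_{1:T}^\gamma,\breve{\mathbf{s}}_0,\breve{\mathbf{u}}^\gamma)$, which is the claim.

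The only genuinely delicate point — and hence where I would place my attention — is the algebraic bookkeeping of the identity: one must keep straight that $\mathcal{A}^\gamma$ operates on the standardized sequences so that the bound of Notation~\ref{ass1} controls the \emph{standardized} regret $\breve{R}_T^\gamma$, that the forecast returned to the original scale is the affine image $S^\gamma\bar{y}_t^\gamma+x_t^\gamma$, and that the basis vector $\boldsymbol{\delta}^\gamma$ appears only because the observation transform subtracts the benchmark $x^\gamma_t=\boldsymbol{\delta}^\gamma\cdot\x_t$. Once the symmetry of $\breve{\E}$ and this centering are correctly accounted for, every remaining step is a one-line substitution, so no further obstacle arises.
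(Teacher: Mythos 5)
Your proposal is correct and follows essentially the same route as the paper: it derives the rescaling identity $\breve{R}_T^\gamma(\breve{\mathbf{u}}^\gamma)=R_T^\gamma(\u^\gamma)/(S^\gamma)^2$ via the substitutions $\breve{y}_t^\gamma=(y_t^\gamma-x_t^\gamma)/S^\gamma$, $\bar{y}_t^\gamma=(\hat{y}_t^\gamma-x_t^\gamma)/S^\gamma$ and the correspondence $\u^\gamma=\boldsymbol{\delta}^\gamma+S^\gamma\breve{\E}\breve{\mathbf{u}}^\gamma$, then chains it with the bound from Notation~\ref{ass1} applied to the transformed data. Your explicit appeal to the symmetry of $\breve{\E}$ (used only implicitly in the paper when moving $\breve{\E}$ across the inner product) is a welcome clarification, not a deviation.
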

Throughout the section, without loss of generality and to simplify the notation, we now replace the features and observations with the standardized ones. Thus, we will write $y^\gamma_t$ for $\breve{y}_t^\gamma$, $\x_t$ for $\breve{\x}_t$ and so on.
Moreover, we make the following assumption on the boundedness of features and observations.
\begin{hyp} Boundedness assumptions.
\label{ass2}
For any $t>0$ and  any $\gamma \in \Gamma$ we assume that there is a constant $C>0$ such that
\begin{equation}
|y_t^\gamma |\leq C \quad \mathrm{and} \quad |x_t^\gamma |\leq C.
\label{eq:ass2}
\end{equation}
\end{hyp}
Some boudedness assumptions on features and observations are frequently required to establish theoretical guaranties. Here, the constant is common to all the nodes.
Practically, this assumption makes sens because of the previous transformations. As explained above, it centers and normalizes observations and features. Subsection~\ref{subsec:experiment_design} presents statistics on features and observation before and after standardization and indicates possible values of the constant $C$.\\

In the two next subsections, we introduce the aggregation algorithms we implemented in Section~\ref{sec:experiments}. 
We recall that, for any $\gamma \in \Gamma$, at a round $t$, the algorithm $\mathcal{A}^\gamma$ provides a weight vector $\u^\gamma_t$ and thus forecasts $y_t^\gamma$ with $\u^\gamma_t \cdot \x_t$.
In Subsection~\ref{subsec:linear_aggreg}, we consider a linear aggregation algorithm: there is no restriction on the computed weight vectors. 
In Subsection~\ref{subsec:convex_aggreg}, the two algorithms output convex combinations of features: the weight vectors are in the $|\Gamma|$-simplex denoted by $\Delta_{|\Gamma| }$.
However, there is no reason to consider such a restriction and this is why the last paragraph of the subsection presents a trick to extend the previous algorithms to output linear combinations of features for which the weight vectors are in a $L_1$-ball. Thus, there are no longer restrictions on the sum or the sign of the weights.

\subsection{Linear Aggregation: Sequential Non-Linear  Ridge Regression}
\label{subsec:linear_aggreg}
The first aggregation algorithm that we consider is the sequential non-linear ridge regression of \citet{Vovk01} and \citet{AzouryWarmuth2001}.
So, for any $\gamma \in \Gamma$, Algorithm $\mathcal{A}^\gamma$ refers here to Algorithm~\ref{algo:ridge} run with regularization parameter $\mathbf{s}_0 = \lambda$.
For any instance $t\geq 2$, this algorithm, chooses vectors $\u_t^\gamma$ as follow:
\begin{equation}
\u_t^\gamma \in \argmin_{\u^\gamma \in \mathbb{R}^d} \sum_{s=1}^{t-1} \big (y^\gamma_s - \u^\gamma \cdot\x_s \big)^2
 + \big(\u^\gamma \cdot \x_t \big)^2 + \lambda \|\u^\gamma\|^2.
 \label{ridge}
\end{equation}
The solution of this minimization problem is given by:
\[
\u_t^\gamma = \Big( \lambda \big( \mathbf{1}_{\{i=j\}} \big)_{(i,j)  \in \Gamma^2}+\sum_{s=1}^{t} \x_s\x_s^{\transp}\Big)^{\dag} \, \sum_{s=1}^{t-1} y_s^\gamma \x_s,
\]
where $A^{\dag}$ denotes the pseudo-inverse of the matrix $A$. 
Algorithm~\ref{algo:ridge} provides a sequential implementation of the solution of this convex minimization problem.
 \begin{algorithm}[h]	
	\caption{Non-Linear Sequential Ridge Regression}
	\begin{algorithmic}
		\label{algo:ridge} 
		\STATE \textbf{aim} 
		\STATE \quad Predict the time series $\big(y^\gamma_t\big)_{1\leq t\leq T}$ 
		\STATE \textbf{parameter} Regularization parameter $\lambda>0$
		\STATE \textbf{initialization} $\A_0 = \lambda \big( \mathbf{1}_{\{i=j\}} \big)_{(i,j)  \in \Gamma^2}$ and $\mathbf{b}_0=(0,\dots,0)^{\transp}$
		\FOR{$t = 1, \dots,T$}
		\STATE Update matrix $\A_t=\A_{t-1}+\x_t\x_t^{\transp}$ 
		\STATE Compute the vector $\u^\gamma_t=\A_t^{-1}\mathbf{b}_{t-1}$
		\STATE Output prediction $\hat{y}_t^\gamma=\u^\gamma_t \cdot \mathbf{x}_t$
		\STATE Update vector $\mathbf{b}_t=\mathbf{b}_{t-1}+y^\gamma_t\x_t$ 
		\ENDFOR
	\end{algorithmic}
\end{algorithm}
The above non-linear ridge regression is a penalized ordinary least-squares regression.
Since the features may be strongly correlated, the least squares estimator, $\u_t^\gamma \in \argmin_{\u^\gamma \in \mathbb{R}^d} \sum_{s=1}^{t-1} \big (y^\gamma_s - \u^\gamma \cdot\x_s \big)^2$, could lead to very large prediction if a new features vector belongs to an eigenspace of the empirical gram matrix associated to a small value. The regularization term $\lambda \|\u^\gamma\|^2$  ensures that eigenvalues of the empirical gram matrix are not too small.
We then add the regularization term $\big(\u^\gamma \cdot \x_t  \big)^2$ which is the last term of the cumulative prediction error $(y^\gamma_t - \u^\gamma \cdot\x_t \big)^2$ where we have replaced unknown $y^\gamma_t$ by our best guess $0$. 
It is known to improve the regret bound (see \citealp{Vovk01} and \citealp{pmlr-v98-gaillard19a}). 
In our case (standardized targets), it particularly makes sense because it biases predictions towards $0$; which, because of the standardization, biases aggregated predictions towards benchmark predictions.\\

Under the boundedness assumptions~\eqref{eq:ass2}, for any vector $\mathbf{u}^\gamma \in \mathbb{R}^{|\Gamma|}$, with the algorithm $\mathcal{A}^\gamma$ set to the non-linear ridge regression~\eqref{ridge} run with regularization parameter $\lambda$, Theorem~11.8 of the monograph \textit{Prediction, Learning, and Games} by \citet{Cesa-Bianchi2006} or Theorem~2 of \citet{pmlr-v98-gaillard19a} provide the following theoretical guaranties:
\[
R_T^\gamma(\u^\gamma)
\leq \lambda \|\mathbf{u}^\gamma\|^2 +  |\Gamma| C^2 \, \ln \bigg( 1+\frac{ C^2 T}{\lambda}\bigg)\,.
\]
So, for any $\U=\big( \u^1 | \dots | \u^{|\Gamma|}  \big) \in \mathcal{C}$, as $\big\|\U\big\|_F^2=\sum_{\gamma \in \Gamma}  \|\mathbf{u}^\gamma\|^2$, Theorem~\ref{gen-bound} ensures
\begin{align*}
R_T(\U) = \sum_{\gamma \in \Gamma} R_T^\gamma(\u^\gamma)
&\leq\lambda \big\|\U\big\|_F^2 + |\Gamma|^2 C^2 \, \ln \bigg( 1+\frac{ C^2 T }{\lambda}\bigg) = \mathcal{O}\big(|\Gamma|^2 \ln T\big) \,.
\end{align*}
That is, since the sequential non-linear ridge regression provides a logarithmic regret bound, Meta-algorithm~\ref{prot:1} achieves a bound of the same order.

\subsection{Convex Aggregation}
\label{subsec:convex_aggreg}
We focus here on uniform bounds and use notation introduced in Remark~\ref{rem:uniform_regret}.
The following two algorithms were initially designed to compete against the best feature.
Namely, for a node $\gamma \in \Gamma$, the Bernstein online aggregation (BOA, see~\citealp{wintenberger2017optimal}) and  polynomially weighted average forecaster with multiple learning rates (ML-Pol, see~\citealp{Gaillard2015}) provide some bound on the difference between
the cumulative prediction error $L^\gamma_T \defeq \sum_{t=1}^T \big(y^\gamma_t-\hat{y}^\gamma_t\big)^2$ of the strategy and  $\min_{i \in \Gamma} \sum_{t=1}^T \big(y^\gamma_t-~x^i_t\big)^2$. 
At each instance $t$, both strategies compute weight vector $\u^\gamma_t= \big( u^{\gamma \, i}_t \big)_{i \in \Gamma}$ based on historical data. 
These vectors are in the $|\Gamma|$-simplex, which we denote by $\Delta_{|\Gamma|}$.
For each feature $i \in \Gamma$, the weight $u^{\gamma \, i}_t$ is,
for BOA, an exponential function of a regularized cumulative prediction error of the feature $x^i_t$ and, for ML-Pol, a polynomial function of the cumulative prediction error of $x^i_t$.
However, by using gradients of prediction errors instead of the original prediction errors the average error of these algorithms may come close to $\min_{\u^\gamma \in \Delta_{|\Gamma|}} \frac{1}{T} \sum_{t=1}^T \big(y^\gamma_t-~\u^\gamma\cdot \x_t\big)^2$. 
This ``gradient trick" (see \citealp{Cesa-Bianchi2006}, Section 2.5) is presented in the next paragraph and is already integrated in the statements of the algorithms below. 
Moreover, for both algorithms, the computed weight vectors are in $\Delta_{|\Gamma|}$.
As we do not necessarily want to impose such a restriction, we use another trick, introduced by~\citet{KivinenExponentiatedGradientGradient1997a} and presented in the last paragraph.
It extends the class of comparison from the $|\Gamma|$-simplex to an $L_1$-ball of radius $\alpha$ denoted by $\mathcal{B}_\alpha \defeq~\big\{ \u^\gamma \in \mathbb{R}^{|\Gamma|} \, \big| \, \|\u\|_1 = \sum_{i \in \Gamma} | u^{\gamma \, i} | \leq \alpha \big\}$.  The aim is then to come close to the cumulative error $\min_{\u^\gamma \in \mathcal{B}_\alpha} \sum_{t=1}^T \big(y^\gamma_t-~\u^\gamma\cdot~\x_t\big)^2$.  

\paragraph{Gradient Trick: from the best feature to the best convex combination of features}
We consider an aggregation algorithm that takes as input, at any time step $t+1$, the previous prediction errors of each feature $(y_t^\gamma - x^i_t)^2$, for any $i \in \Gamma$, and that of the forecast outputted at $t$: $(y_t^\gamma - \hat{y}^\gamma_t)^2$. 
Although this trick generalizes to various prediction errors, we focus here to its application in our case, namely the quadratic prediction error.
We assume that the algorithm provides a bound on the quantity (see notation of Remark~\ref{rem:uniform_regret})
\[ 
R^\gamma_T \big(\delta_{|\Gamma|} \big) \defeq \sum_{t=1}^T (y_t^\gamma - \hat{y}^\gamma_t)^2  - \min_{i \in \Gamma} \sum_{t=1}^T (y_t^\gamma - x_t^i)^2 \,.
\]
where $ \delta_{|\Gamma|} \defeq \big\{ ( \boldsymbol{\delta}^i )_{i\in \Gamma}  \big\}$ is the set of canonical basis vectors (so we have $\boldsymbol{\delta}^i  \cdot \x_t = x_t^i$).
The gradient trick consists in giving, instead of the
prediction errors $(y_t^\gamma - \hat{y}^\gamma_t)^2$ and $(y_t^\gamma - x_t^i)^2$, for any $i \in \Gamma$, the pseudo prediction errors  functions defined below as input to algorithm $\mathcal{A}^\gamma$. 
This will provide a bound on the pseudo regret denoted by $\tilde{R}^\gamma_T \big(\delta_{|\Gamma|}\big)$.
We will prove that the same bound is achieved for the minimum of $R_T^\gamma(\u^\gamma)$ taken over $\u^\gamma \,  \in \Delta_{|\Gamma|}$ (and not only $\delta_{|\Gamma|}$), namely $ R_T^\gamma \big(\Delta_{|\Gamma|} \big)$. We detail here how the trick works and gives:
$$ R_T^\gamma \big(\Delta_{|\Gamma|} \big) \leq \tilde{R}^\gamma_T \big(\delta_{|\Gamma|}\big).$$
Let us fix a vector
$\u^\gamma={\big(u^{\gamma \, i}\big)}_{i\in \Gamma} \in \Delta_{|\Gamma|}$, we have for each $t=1,\dots,T$
\begin{align}
\label{eq:convexity}
\big( y_t^\gamma -\hat{y}_t^\gamma\big)^2 - \big( y_t^\gamma - \u^\gamma \cdot \x_t\big)^2   &=\big(2 y^\gamma_t - \hat{y}_t^\gamma - \u^\gamma \cdot \x_t\big)\big(\u^\gamma \cdot \x_t - \hat{y}_t^\gamma   \big) \nonumber \\
&=  2 \big(\hat{y}_t^\gamma - y_t^\gamma \big) \big(\hat{y}_t^\gamma - \u^\gamma \cdot \x_t\big) - \big(\hat{y}_t^\gamma - \u^\gamma \cdot \x_t\big)^2 \nonumber\\
&\leq  2 \big(\hat{y}_t^\gamma - y_t^\gamma \big) \big(\hat{y}_t^\gamma - \u^\gamma \cdot \x_t\big)\,.  
\end{align}
By plugging this equation into the definition of the regret, we obtain
\begin{align*}
 R_T^\gamma \big( \u^\gamma \big)  
 &   \defeq \sum_{t=1}^T (y_t^\gamma - \hat{y}^\gamma_t)^2  - \sum_{t=1}^T (y_t^\gamma - \u^\gamma\cdot \x_t)^2  \\
  & \stackrel{\mbox{\eqref{eq:convexity}}}{\leq} \sum_{t=1}^T 2 \big(\hat{y}_t^\gamma - y_t^\gamma \big) \big(\hat{y}_t^\gamma - \u^\gamma \cdot \x_t\big)
  =  \sum_{t=1}^{T} 2 \big(\hat{y}_t^\gamma - y_t^\gamma \big) \hat{y}_t^\gamma   - \, \sum_{t=1}^{T} \sum_{i \in \Gamma}  u^{\gamma \, i}
  2 \big(\hat{y}_t^\gamma - y_t^\gamma \big) x_t^i. 
\end{align*}
As $\u^\gamma$ belongs to the $|\Gamma|$-simplex (so $\forall i \in \Gamma$, $u^{\gamma \, i} \geq 0$ and $\sum_{i \in \Gamma} u^{\gamma \, i}=1$), we get:
\[
\sum_{i \in \Gamma}  u^{\gamma \, i} x_t^i \, \geq \, \min_{j \in \Gamma} x_t^j \sum_{i \in \Gamma} u^{\gamma \, i} =  \min_{j \in \Gamma} x_t^j
\]
Therefore, for any vector $\u^\gamma \in \Delta_{|\Gamma|} $, the regret  $R_T^\gamma \big( \u^\gamma \big) $ is bounded by
\[
 R_T^\gamma \big( \u^\gamma \big)  \leq \sum_{t=1}^T  2 \big(\hat{y}_t^\gamma - y_t^\gamma \big) \hat{y}_t^\gamma  - \min_{j \in \Gamma} \sum_{t=1}^T  2 \big(\hat{y}_t^\gamma - y_t^\gamma \big) x_t^j  \defeq \tilde{R}^\gamma_T \big(\delta_{|\Gamma|}\big).
\]
Thus, we now give the pseudo prediction errors associated with each feature $2 \big(\hat{y}_t^\gamma - y_t^\gamma \big) x_t^i$, with $i \in \Gamma$, and with the outputted forecast $ 2 \big(\hat{y}_t^\gamma - y_t^\gamma \big) \hat{y}_t^\gamma$ as input to algorithm $\mathcal{A}^\gamma$.
It provides a bound on the pseudo regret defined above $\tilde{R}^\gamma_T \big(\delta_{|\Gamma|}\big)$; and we get the same bound on $ R^\gamma_T \big(\Delta_{|\Gamma|}\big)$.
As a final note, we emphasize that the boundedness assumptions~\eqref{eq:ass2} allow to establish that pseudo prediction errors $2 (\hat{y}_t^\gamma - y_t^\gamma) x_t^i $ are bounded by $4C^2$.
Indeed, for any $(\gamma,i) \in \Gamma^2$, they ensure $|y_t^\gamma| \leq C$ and $|x_t^i|\leq C$.  In addition, as $\u^\gamma_t \in \Delta_{\Gamma}$, the output forecasts  $\hat{y}_t^\gamma=\u^\gamma_t \cdot \x_t$ are also bounded by:
 \[
 |\hat{y}_t^\gamma| = \bigg|\sum_{j \in \Gamma} u_t^{\gamma \, j} x_t^j \bigg | \leq \sum_{j \in \Gamma} u_t^{\gamma \, j} \big| x_t^j \big|\leq \sum_{j \in \Gamma} u_t^{\gamma \, j} C = C.
 \]
Hence, for any $i \in \Gamma$, the pseudo prediction error associated with feature $i$ satisfies
\begin{equation}
\label{eq:pseudo_loss_bound}
\big|2 \big( \hat{y}_t^\gamma - y_t^\gamma \big)x_t^i\big|\leq 4 C^2.
\end{equation}

\paragraph{Bernstein Online Aggregation}
\begin{algorithm}[ht!]
	\caption{\label{algo:boa} Fully adaptive Bernstein Online Aggregation (BOA) with gradient trick}
	\begin{algorithmic}
		\STATE \textbf{aim} 
		\STATE \quad Predict the time series $\big(y^\gamma_t\big)_{1 \leq t \leq T}$ 
		\STATE \textbf{parameter} Bound on pseudo prediction errors $E$:
		\STATE \quad for any $t=1,\dots,T$ and any $i \in \Gamma$, $|2 (\hat{y}_t^\gamma - y_t^\gamma) x_t^i| \leq E$ 
		\STATE \textbf{initialization} 
		\STATE \quad $\u^\gamma_1=(1/{|\Gamma|},\dots,1/{|\Gamma| })$
		\STATE \quad $\hat{y}_1^\gamma=\u^\gamma_1 \cdot \mathbf{x}_1$
		\STATE \quad For $i \in \Gamma$, $\tilde{R}_0^{\gamma \, i}=0$ 
		\STATE \quad For $i \in \Gamma$, $\eta_0^{\gamma \, i}=0$
		\FOR{$t = 1, \dots,T-1$} 
		\STATE  For each $i \in \Gamma$, update the cumulative quantity $\tilde{Q}^{\gamma \, i}$ for feature $i$
		\[\tilde{Q}_t^{\gamma \, i}=
		 \tilde{Q}_{t-1}^{\gamma \, i} +  \tilde{r}^{\,\gamma \, i}_t \big( 1+ \eta_{t-1}^{\gamma \, i} \tilde{r}^{\,\gamma \, i}_t\big) \quad \mathrm{where} \quad  \tilde{r}^{\,\gamma \, i}_t \defeq 2 (\hat{y}_t^\gamma - y_t^\gamma) ( \hat{y}_t^\gamma  - x_t^i)
 \]
		\STATE  For each $i \in \Gamma$, compute the learning rate
		\[\eta_t^{\gamma \, i}=\min \Bigg\{ \frac{1}{2E}, \sqrt{\frac{\log |\Gamma| }{ \sum_{s=1}^t \big(  \tilde{r}^{\,\gamma \, i}_t  \big)^2}} \Bigg\}\]
		\STATE Compute the weight vector $\u^\gamma_{t+1}=(u^{\gamma \, i}_{t+1})_{i \in \Gamma }$ defined as 
		\[ u^{\gamma \, i}_{t+1}=\frac{\exp \big( \eta_t^{\gamma \, i} \tilde{Q}^{\gamma \, i}_t\big)}{ \sum_{j \in \Gamma} \exp \big( \eta_t^{\gamma \, j}  \tilde{Q}^{\gamma \, j}_t\big)}\]
		\STATE Output prediction $\hat{y}_{t+1}^\gamma= \u^\gamma_{t+1} \cdot \x_{t+1} =\sum_{i \in \Gamma}u^{\gamma \, i}_{t+1} x_{t+1}^i$
		\ENDFOR
	\end{algorithmic}
\end{algorithm}
\citet{wintenberger2017optimal} introduces an aggregation procedure called Bernstein Online Aggregation for which weights are exponential function of the cumulative prediction errors.
Algorithm~\ref{algo:boa} describes this strategy combined with this gradient trick.
Let us fix a node $\gamma \in \Gamma$ and set $\mathcal{A}^\gamma$ to Algorithm~\ref{algo:boa} which takes as input the bound $E$ on pseudo prediction errors ($E=4 C^2$ is a suitable choice): \[\forall t=1,\dots,T, \, \forall i \in \Gamma, \quad 2 (\hat{y}_t^\gamma - y_t^\gamma) x_t^i  \leq E.\]
Theorem 3.4 of \citealp{wintenberger2017optimal} ensures that
\begin{align}
R^\gamma_T(\Delta_{|\Gamma|} )& \leq  \sqrt{T+1} E  \Bigg( \frac{\sqrt{2 \ln |\Gamma| }}{\sqrt{2}-1} + \frac{\ln (1 +2^{-1}\ln T)}{\sqrt{\ln |\Gamma| }} \Bigg) \nonumber  + \, E \Big( 2 \ln |\Gamma|  + 2 \ln(1+2^{-1} \ln T) +1  \Big) \nonumber \\ 
&\lesssim \, \mathcal{O}\Big(\sqrt{T}\ln \ln T\Big).
\label{eq:boa_regret}
\end{align}
Thanks to Equation~\eqref{eq:pseudo_loss_bound}, we replace $E$ by $4 C^2$ in Equation~\eqref{eq:boa_regret} and we get, for each node $\gamma \in \Gamma$, an upper bound on  $R^\gamma_T(\Delta_{|\Gamma|})$. 
By applying Theorem~\ref{gen-bound}, we obtain the following uniform regret bound:
\[R_T(\Delta_{|\Gamma|} )\lesssim \, \mathcal{O}\Big(|\Gamma|\sqrt{T}\ln \ln T\Big),\]
which is of order  $\sqrt{T}$ (up to poly-logarithmic terms). 

\paragraph{Polynomially Weighted Average Forecaster}
\begin{algorithm}[ht!]
	\caption{\label{algo:ml_pol}  Polynomially weighted average forecaster with Multiple Learning rates (ML-Pol) and gradient trick}
	\begin{algorithmic}
		\STATE \textbf{aim} 
		\STATE \quad Predict the time series $\big(y^\gamma_t\big)_{1 \leq t \leq T }$ 
		\STATE \textbf{parameter} Bound on pseudo prediction errors $E$:
		\STATE \quad for any $t=1,\dots,T$ and any $i \in \Gamma$, $|2 (\hat{y}_t^\gamma - y_t^\gamma) x_t^i| \leq E$ 
		\STATE \textbf{initialization} 
		\STATE \quad $\u^\gamma_1=(1/{|\Gamma| },\dots,1/{|\Gamma| })$
		\STATE \quad $\hat{y}_1^\gamma=\u^\gamma_1 \cdot \mathbf{x}_1 $
		\STATE \quad For $i \in \Gamma$, $\tilde{R}_0^{\gamma \, i}=0$ 
		\STATE \quad For $i \in \Gamma$, $\eta_0^{\gamma \, i}=0$
		\FOR{$t = 1, \dots,T-1$}
		\STATE  For each $i \in \Gamma$, update the cumulative pseudo-regret of feature $i$
		\[\tilde{R}_t^{\gamma \, i}=\tilde{R}_{t-1}^{\gamma \, i} + \tilde{r}^{\, \gamma \, i}_t \quad \mathrm{where} \quad  \tilde{r}^{\, \gamma \, i}_t \defeq 2 (\hat{y}_t^\gamma - y_t^\gamma) ( \hat{y}_t^\gamma  - x_t^i)
		\]
		\STATE  For each $i \in \Gamma$, compute the learning rate
		\[ \eta_t^{\gamma \, i}=\bigg( E+ \sum_{s=1}^t \big(  \tilde{r}^{\,\gamma \, i}_t\big)^2 \bigg)^{-1}\]
		\STATE Compute the weight vector $\u^\gamma_{t+1}=(u^{\gamma \, i}_{t+1})_{i \in \Gamma}$  defined as
		\[u^{\gamma \, i}_{t+1}=\frac{\eta_t^{\gamma \, i} \big(\tilde{R}^{\gamma \, i}_t)_+}{ \sum_{j \in \Gamma} \eta_t^{\gamma \, j} ( \tilde{R}^{\gamma \, j}_t \big)_+}\]
		\STATE Output prediction $\hat{y}_{t+1}^\gamma= \u^\gamma_{t+1} \cdot \x_{t+1} =\sum_{i \in \Gamma}u^{\gamma \, i}_{t+1} x_{t+1}^i$
		\ENDFOR
	\end{algorithmic}
\end{algorithm}
\citet{pmlr-v35-gaillard14} consider an aggregation method based on weights that are polynomial functions of the cumulative prediction errors.
We use this procedure combined with the gradient trick and present it in Algorithm~\ref{algo:ml_pol}. In this description, $(\mathbf{x})_+$ denotes the vector of non-negative parts of the components of $\mathbf{x}$.
With the same notation as in the previous paragraph, for any node $\gamma \in \Gamma$, Theorem 5 of 
\citet{pmlr-v35-gaillard14} provides the following regret bound:
\begin{equation}
R^\gamma_T(\Delta_{|\Gamma|} )   \leq 
E \sqrt{|\Gamma| (T+1) \big(1 + \ln (1+T) \big)}.
\label{eq:mlpol_regret}
\end{equation}
With $E\leq 4C^2$ and by applying Theorem~\ref{gen-bound}, we obtain an upper bound on the uniform regret $R_T(\Delta_{|\Gamma|}) $, which is also of order  $\sqrt{T}$ (up to poly-logarithmic terms):
\begin{align*}
R_T(\Delta_{|\Gamma|}) &\leq  4C^2 |\Gamma| \sqrt{|\Gamma|  (T+1) \big(1 + \ln (1+T) \big)} \\
&\lesssim \mathcal{O}\Big(|\Gamma|^{3/2}\sqrt{ T \ln T}\Big).
\end{align*}

\subsection{A scheme to extend the class of comparison from the simplex to an $L_1$-ball}

For the previous two algorithms, we obtained an upper bound on $R_T^\gamma (\Delta_{|\Gamma|})$. However, there is no reason for the best linear combination of features to be convex.
Algorithm~\ref{algo:trick} presents a trick introduced by \citet{KivinenExponentiatedGradientGradient1997a} which extends the class of comparison from the $|\Gamma|$-simplex to an $L_1$-ball of radius $\alpha>0$ denoted by $\mathcal{B}_\alpha$ and provides a bound on $R_T^\gamma (\mathcal{B}_\alpha)$. Let us fix a node $\gamma \in \Gamma$. The trick consists in transforming, at each round $t$, the feature vector $\x_t$ into the $2|\Gamma|$-vector $\bar{\x}_t = (\alpha \x_t | -\alpha \x_t)$. The algorithm~$\mathcal{A}^\gamma$ is then run with these new features and it outputs the weight vector $\bar{\u}_t^\gamma \in \Delta_{2|\Gamma|}$. 
Finally, a $|\Gamma|$-vector $\u_t^\gamma \in \mathcal{B}_\alpha$ is computed from $\bar{\u}_t^\gamma$ to provide the forecast $\u_t^\gamma  \cdot \x_t = \bar{\u}_t^\gamma  \cdot \bar{\x}_t$.
We will actually see that we may associate any $|\Gamma|$-vector $\u \in \mathcal{B}_\alpha$ with a vector $\bar{\u} \in \Delta_{2|\Gamma|}$  such as $\bar{\u}  \cdot \bar{\x}_t = \u \cdot \x_t$; the trick actually defines a surjection from $\Delta_{2|\Gamma|}$ to $\mathcal{B}_\alpha$.
Thus, to compete against the best linear combination of features in $\mathcal{B}_{\alpha}$, it is enough to compete against the best convex combination of features $\bar{\x}_t$ in a lifted space (which we may achieve, thanks to algorithm~$\mathcal{A}^\gamma$). We now give all the details on how this trick works and indicate its impact on the stated regret bounds. The following lemma introduces the surjection from $\Delta_{2|\Gamma|}$ to $\mathcal{B}_\alpha$, which is used in Algorithm~\ref{algo:trick}.
\begin{lemma}
	\label{lem:surj}
For any real $\alpha>0$, the following function $\psi$ is a surjection from $\Delta_{2 |\Gamma|}$ to $\mathcal{B}_\alpha$:
	\begin{displaymath}
\psi:
  \begin{array}{lcl}
    \Delta_{2 |\Gamma|} & \longrightarrow & \mathcal{B}_\alpha \\
   \bar{\u} = \big( \bar{\u}^+ \, | \, \bar{\u}^- \big) & \longmapsto &  \alpha (\bar{\u}^+ - \bar{\u}^-),\\
  \end{array}
\end{displaymath}
where the vector $\bar{\u} \in \Delta_{2 |\Gamma|}$ is decomposed in the two $|\Gamma|$-vectors $\bar{\u}^+$ and $\bar{\u}^-$, which correspond to the $|\Gamma|$ first and the $|\Gamma|$ last coefficients of $\bar{\u}$, respectively. 
\end{lemma}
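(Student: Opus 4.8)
The plan is to establish the two halves of the claim separately: that $\psi$ is well defined, in the sense that its image lands inside $\mathcal{B}_\alpha$, and that it is onto, in the sense that every $\u \in \mathcal{B}_\alpha$ admits a preimage. Surjectivity is the substantive part, and I would prove it by writing down an explicit preimage rather than by any abstract argument.

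For well-definedness, take any $\bar{\u} = (\bar{\u}^+ \mid \bar{\u}^-) \in \Delta_{2|\Gamma|}$. All of its coefficients are non-negative and $\sum_{i \in \Gamma}(\bar{u}^+_i + \bar{u}^-_i) = 1$. The triangle inequality, together with $\bar{u}^+_i, \bar{u}^-_i \geq 0$, then gives $\norm{\psi(\bar{\u})}_1 = \alpha \sum_{i \in \Gamma} |\bar{u}^+_i - \bar{u}^-_i| \leq \alpha \sum_{i \in \Gamma}(\bar{u}^+_i + \bar{u}^-_i) = \alpha$, so indeed $\psi(\bar{\u}) \in \mathcal{B}_\alpha$.

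For surjectivity, fix $\u \in \mathcal{B}_\alpha$ and split it into its non-negative and non-positive parts: write $(u_i)_+ = \max(u_i,0)$ and $(u_i)_- = \max(-u_i,0)$, so that $u_i = (u_i)_+ - (u_i)_-$ and $|u_i| = (u_i)_+ + (u_i)_-$. The natural candidate $\bar{u}^+_i = (u_i)_+/\alpha$ and $\bar{u}^-_i = (u_i)_-/\alpha$ already satisfies $\alpha(\bar{u}^+_i - \bar{u}^-_i) = u_i$, i.e. $\psi(\bar{\u}) = \u$, and has non-negative entries.

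The one genuine obstacle is the simplex normalization. For this candidate the total mass is $\sum_{i \in \Gamma}(\bar{u}^+_i + \bar{u}^-_i) = \norm{\u}_1/\alpha$, which equals $1$ only on the boundary $\norm{\u}_1 = \alpha$, whereas $\mathcal{B}_\alpha$ is the whole ball. To repair this I would spread the leftover mass $\delta \defeq 1 - \norm{\u}_1/\alpha \geq 0$ equally across all $2|\Gamma|$ coordinates, replacing $\bar{u}^\pm_i$ by $\bar{u}^\pm_i + \delta/(2|\Gamma|)$. Since the same quantity is added to the plus and the minus block, it cancels in the difference and the identity $\psi(\bar{\u}) = \u$ is untouched; meanwhile the entries remain non-negative and the total mass becomes $\norm{\u}_1/\alpha + \delta = 1$, so $\bar{\u} \in \Delta_{2|\Gamma|}$. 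This completes the argument. The freedom in how this slack is distributed is precisely why $\psi$ fails to be injective, consistent with the statement asserting only a surjection.
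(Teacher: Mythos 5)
Your proof is correct and takes essentially the same approach as the paper: the preimage you construct --- the natural candidate $(\u)_+/\alpha$, $(\u)_-/\alpha$ with the slack $1-\norm{\u}_1/\alpha$ spread uniformly over the $2|\Gamma|$ coordinates --- coincides exactly with the explicit map $\psi^{-1}$ that the paper writes down and verifies, and your well-definedness check via the triangle inequality is the same as well. The only difference is presentational: you derive the formula constructively, while the paper states it upfront and then checks its properties.
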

By running Algorithm~$\mathcal{A}^\gamma$ with transformed features $\bar{\x}_t \defeq \big(\alpha \x_t \, | \, - \alpha \x_t\big)$ and parameter $s^\gamma_0$ (which provides weight vectors $\bar{\u}_t^\gamma$), we get the bound
\begin{equation*}
R_T^\gamma(\Delta_{2|\Gamma|}) \defeq \sum_{t=1}^T \big( y_t^\gamma - \bar{\u}^\gamma_t \cdot \bar{\x}_t \big)^2- \min_{\bar{\u}^\gamma \in \Delta_{2 |\Gamma|}} \sum_{t=1}^T \big( y_t^\gamma - \bar{\u}^\gamma \cdot \bar{\mathbf{x}}_t   \big)^2 \leq \mathrm{B}(\bar{\x}_{1:T},y_{1:T}^\gamma,\mathbf{s}^\gamma_0,\bar{\u}^\gamma).
\end{equation*}
For any instance $t=1,\dots,T$, and for any $\bar{\u}^\gamma \in \Delta_{2 |\Gamma|}$, we obtain the equality of the two scalar products $\bar{\u}^\gamma \cdot \bar{\x}_t $ and $\psi(\u^\gamma) \cdot \x_t$:
\begin{align*}
	\bar{\u}^\gamma \cdot \bar{\x}_t =
	\big(  {\bar{\u}}^{\gamma+} \, | \, {\bar{\u}}^{\gamma-} \big) \cdot \big( \alpha \x_t \, | \, - \alpha \x_t \big) 
	= \alpha \big( {\bar{\u}}^{\gamma+}  -  {\bar{\u}}^{\gamma-}  \big) \cdot \x_t = \psi (\bar{\u}^\gamma) \cdot \x_t\,.
\end{align*}
Lemma~\ref{lem:surj} implies that for any $\u^\gamma \in \mathcal{B}_\alpha$, there is at least one vector $\bar{\u}^\gamma \in \Delta_{2 |\Gamma|}$ such that $\psi (\bar{\u}^\gamma)=\u^\gamma$ and we get the equality:
$$\min_{\u^\gamma \in \mathcal{B}_\alpha} \sum_{t=1}^T \big( y_t^\gamma - \u^\gamma \cdot \x_t   \big)^2 =
\min_{\bar{\u}^\gamma \in \Delta_{2 |\Gamma|}} \sum_{t=1}^T \big( y_t^\gamma - \psi(\bar{\u}^\gamma) \cdot {\x}_t   \big)^2 =
\min_{\bar{\u}^\gamma \in \Delta_{2 |\Gamma|}} \sum_{t=1}^T \big( y_t^\gamma - \bar{\u}^\gamma \cdot \bar{\x}_t   \big)^2 \,.
 $$
So with, for any instance $t=1,\dots,T$, $\u_t^\gamma \defeq \psi(\bar{\u}_t^\gamma)$, we obtain
\[
R_T^\gamma(\mathcal{B}_\alpha) \defeq \sum_{t=1}^T \big( y_t^\gamma - \u_t^\gamma \cdot \x_t \big)^2- \min_{\u^\gamma \in \mathcal{B}_\alpha} \sum_{t=1}^T \big( y_t^\gamma - \u^\gamma \cdot \mathbf{x}_t   \big)^2 = R_T^\gamma(\Delta_{2|\Gamma|})\,.
\]
This equality provides a bound on $R^\gamma_T(\mathcal{B}_{\alpha})$ when predictions are $\hat{y}_t^\gamma = \psi^{-1}(\bar{\u}^\gamma_t) \cdot \x_t = \u_t^\gamma \cdot \x_t$.
With this trick, the previous bounds~\eqref{eq:boa_regret} and~\eqref{eq:mlpol_regret} 
are still true by replacing $|\Gamma|$ (the dimension of the features $\x_t$) by $2|\Gamma| $ (the dimension of the new features $\bar{\x}_t$) and the bound $E$ (previously equals to $4 C^2$) by $2\alpha(\alpha+1)C^2$ (the bound on the new pseudo prediction errors are calculated below):
\begin{align*}
R_T(\mathcal{B}_{\alpha}) &\leq \,  |\Gamma| \Bigg(  2\,\alpha\,(\alpha+1)\,C^2 \,   \sqrt{T+1}  \bigg( \frac{\sqrt{2 \ln |\Gamma| }}{\sqrt{2}-1} + \frac{\ln (1 +2^{-1}\ln T)}{\sqrt{\ln |\Gamma| }} \bigg)\\ 
&\qquad  \qquad \qquad  + \, 2\,\alpha\,(\alpha+1)\,C^2 \,   \big( 2 \ln |\Gamma| + 2 \ln(1+2^{-1} \ln T) +1  \big) \Bigg)
& \textrm{for BOA} \\
&\leq  \, 2\,\alpha\,(\alpha+1)\,C^2 \,  |\Gamma|  \sqrt{|\Gamma| (T+1) \big(1 + \ln (1+T) \big)} & \textrm{for ML-Poly}.
\end{align*}
The complete online algorithm leading to these bounds is summarized in Algorithm~\ref{algo:trick}.

\paragraph{Bound on new pseudo prediction errors}
Since boundedness assumptions~\eqref{eq:ass2} hold, the transformed features $\bar{\x}_t^\gamma$ are bounded by $\alpha C$.
Moreover, $\bar{\u}_t^\gamma \in \Delta_{2|\Gamma|}$ implies $\big\| \bar{\u}_t^\gamma \big\|_1 = 1$, so we get
\[
 \big|\hat{y}_t^\gamma\big| 
 = \big|\bar{\u}_t^\gamma \cdot \bar{\x}_t\big|
 \leq \big\| \bar{\u}_t^\gamma \big\|_1 \big\| \bar{\x}_t \big\|_\infty
 =\alpha C\,.
\]
Moreover, as the observations are still bounded by $C$,  we have $|y^\gamma_t -  \hat{y}^\gamma_t| \leq |y^\gamma_t | +|  \hat{y}^\gamma_t| \leq \big(\alpha+1\big)C $ and we obtain a bound on the pseudo prediction errors:
\[
\big|\tilde{\ell}_t^\gamma(\bar{\x}_t)\big|
=\big\|2\big(\hat{y}_t^\gamma-y_t^\gamma\big) \bar{\x}_t\big\|_\infty
\leq 2 \alpha(1+\alpha)C^2\,.
\]

\begin{algorithm}[h!]
	\caption{\label{algo:trick} Scheme for on-line linear regression.}
	\begin{algorithmic}
		\STATE \textbf{input} Algorithm $\mathcal{A}^\gamma$ and bound on the weight vectors $\alpha > 0$
		\FOR{$t = 1, \dots,T$}
		\STATE Get the feature vector $\x_t$ and denote (where $|$ is the concatenation operator between vectors)
		\[
		\bar{\x}_t \defeq \big(\alpha \x_t \, | \, - \alpha \x_t\big) \in \mathbb{R}^{2|\Gamma|}
		\]
		\STATE Run algorithm $\mathcal{A}^\gamma$ on node $\gamma$  with $\bar{\x}_t$ and get the weight vector $\bar{\u}^\gamma_t = \big( \bar{\u}_t^{\gamma +} \, | \, \bar{\u}_t^{\gamma -} \big)$
		\STATE Output the weight vector 
		$\u^\gamma_t = \alpha \big( \bar{\u}_t^{\gamma +} - \bar{\u}_t^{\gamma -} \big)$
		and predicts $\hat{y}_t^\gamma={\u}^\gamma_t \cdot{\x}_t$
		\ENDFOR
	\end{algorithmic}
\end{algorithm}

\begin{proof}[Proof of Lemma~\ref{lem:surj}]
	Denoting respectively by $(\u)_+$ and $(\u)_-$ the non-negative and non-positive parts of any vector $\u$ and by $\textbf{1}_{|\Gamma|}$ the vector of size $|\Gamma|$ of which all coordinates are $1$, we introduce the inverse function $\psi^{-1}$:
	\begin{displaymath}
\psi^{-1}:
  \begin{array}{lcl}
   \mathcal{B}_\alpha   & \longrightarrow &  \Delta_{2 |\Gamma|} \\
 \u & \longmapsto & \frac{1}{\alpha} \Bigg( \, \frac{ \alpha - \|\u\|_1}{2  |\Gamma|} \textbf{1}_{|\Gamma|} +(\u)_+   \, \bigg| \,   \, \frac{ \alpha - \|\u\|_1}{2 |\Gamma|} \textbf{1}_{|\Gamma|}  + (\u)_- \Bigg)\,.
  \end{array}
\end{displaymath}
First we will show that function images are in the right sets, meaning that for any $\u \in \mathcal{B}_\alpha$, $\psi^{-1}(\u) \in~\Delta_{2|\Gamma|}$ and for any $\bar{\u} \in \Delta_{2|\Gamma|}$, $\psi(\bar{\u}) \in \mathcal{B}_\alpha$. Secondly, we obtain the surjectivity of $\psi$ by proving that for any $\u \in \mathcal{B}_\alpha, \psi(\psi^{-1}(\u)) = \u$.

\paragraph{Proof that for any $\u \in \mathcal{B}_\alpha$, $\psi^{-1}(\u) \in \Delta_{2|\Gamma|}$} We set $\u \in \mathcal{B}_\alpha$. 
By definition for any $i \in \Gamma$, $(u^i)_{\pm} \geq 0$ and as $\u \in \mathcal{B}_\alpha$, $(\alpha - \|\u\|_1)/(2  |\Gamma|) \geq 0$. So, all the coefficients of $\psi^{-1}(\u)$ are non-negative.  
Since $\sum_{i \in \Gamma}  {(u^{i})}_+ + {(u^{i})}_-  =~\sum_{i \in \Gamma} |u^{ i}|= \|\u\|_1$,  the sum of the coefficients of the vector  $\psi^{-1}(\u)$ equals $1$:
	\begin{equation*}
	\sum_{i \in \Gamma}  \big(\psi^{-1}(\u)\big)^{i+} +\, \big(\psi^{-1}(\u)\big)^{i-} 
	= \frac{1}{\alpha}  \sum_{i \in \Gamma}  \Bigg( \big(u^{i}\big)_+ + \big(u^{i}\big)_- + \frac{\alpha- \|\u\|_1}{ |\Gamma|} \Bigg)
	= \frac{1}{\alpha} \big( \|\u\|_1 + \alpha - \|\u\|_1  \big) =1.
	\end{equation*}
	and thus $\bar{\u} = \psi(\u) \in \Delta_{2|\Gamma|}$.

\paragraph{Proof that for any $\bar{\u} \in \Delta_{2|\Gamma|}$, $\psi(\bar{\u}) \in \mathcal{B}_\alpha$} With $\bar{\u} = \big( \bar{\u}^+ \, | \, \bar{\u}^- \big) \in \Delta_{2|\Gamma|}$,
	using that all the coefficients of $\bar{\u}$ are non-negative and that their sum equals $1$ that is $\big\| \bar{\u}\big\|_1=1$, we get
	\[ 
	\|\psi(\bar{\u})\|_1 \defeq \big\| \alpha \bar{\u}^{+} - \alpha \bar{\u}^{-}  \big\|_1 \leq \alpha \big\| \bar{\u}^{+} \big\|_1  +\alpha \big\| \bar{\u}^{-} \big\|_1  =\alpha \big\| \bar{\u} \big\|_1 = \alpha.
	\]
	\paragraph{Proof that for any $\u \in \mathcal{B}_\alpha$, $\psi\big(\psi^{-1}(\u)\big)=\u$} 
	\[
	\psi\big(\psi^{-1}(\u)\big)= \frac{ \alpha - \|\u\|_1}{2  |\Gamma|} \textbf{1}_{|\Gamma|} +(\u)_+ - \frac{ \alpha - \|\u\|_1}{2  |\Gamma|} \textbf{1}_{|\Gamma|} - (\u)_- = \u\,.\qedhere
	\]
\end{proof}

\section{Experiments}
\label{sec:experiments}

Our application relies on electricity consumption data of a large number of households to which we have added meteorological data (see Subsection~\ref{subsec:data}).
Non-temporal information (sociological type, region, type of heating fuel and type of electricity contract) on the households is also provided.
From these temporal and non-temporal data, we dispatch the households into clusters thanks to the methods presented in Subsection~\ref{subsec:clustering}.
We describe the experiments and analyze the results in Subsections~\ref{subsec:experiment_design} and~\ref{subsec:results}.

\subsection{The Underlying Real Data Set}
\label{subsec:data}
The project ``\textit{Energy Demand Research Project\footnote{{https://www.ofgem.gov.uk/gas/retail-market/metering/transition-smart-meters/energy-demand-research-project}}}'', managed by Ofgem on behalf of the UK Government, was launched in late $2007$ across Great Britain (see \citealp{AECOMEnergyDemandResearch2018} and \citealp{schellong2011energy}). Power consumptions of approximately 18,000 households with smart-type meters were collected at half-hourly intervals for about two years.
We detail below how we select only the consumption of 1,545 households over
the period from April $20$, $2009$ to July $31$, $2010$ -- \citet{taieb2017coherent}, who used the same data, performed similiar pre-processing in their experiments. 
Four non-temporal variables are associated with each household: the Region (the initial data set provides the level-$4$ NUTS\footnote{\textit{Nomenclature des Unit\'es Territoriales Statistiques} (nomenclature of territorial units for statistics)} codes but we consider larger subdivisions -- from 150,000 to 800,000 inhabitants -- and associate each household with its level-$3$ code), 
 the Acorn category value (an integer between $1$ and $6$ associated with an United Kingdom’s population demographic type -- this segmentation was developed by the company CACI Limited), the type of heating fuel (``electricity" or ``electricity and gas") and the contract type (``Standard" or ``Time of Use tariff" for households containing an electricity meter with a dynamic time of use tariff) for each household.  
In a first data cleaning step, we removed
households with more than $5$ missing consumption records over the period April $20$, $2009$ to July $31$, $2010$ (around $1,600$ households are thus kept) -- the remaining missing consumption data points are imputed by a linear interpolation. 
Among the various clusterings of the households we consider in our experiments, three of them rely on three qualitative variables: ``Region", ``Tariff" and 
 ``Fuel + Tariff'' (which is based on both the heating fuel type and the contract type).
If one of the values of these qualitative variables had fewer than 20 occurrences, we have removed from the data set the households associated with that value.
The final data set then contains the electrical consumption records of the 1,545 remaining households.
From now on, we will denote by $\mathcal{I}$ the set of households
and by $(y_{i\,t})_{1-T_0 \leq t \leq T}$ the time series of the half-hourly power consumption of the $i \in \mathcal{I}$ household.
Finally, we added the temperature, visibility and humidity for each region from the NOAA\footnote{National Oceanic and Atmospheric Administration, https://www.noaa.gov/} data: we selected a weather station (with records available over the considered period) in each region and linearly interpolated the meteorological data to get 48 measurements per day (compared to 8 initially).
Table~\ref{table:variables} sums up the available variables of our data set and gives their range.
\begin{table}[t]
\footnotesize
	\label{table:variables}
	\centering
	\begin{tabular}{lll}
		\midrule
		\textbf{Variable} & \textbf{Description} & \textbf{Range / Value} \\
		\midrule
		\midrule
		Acorn  & Acorn category value & From 1 to 6\\
		Region & UK NUTS of level $3$ & UK- H23, -J33, -L15, -L16, -L21,  -M21, or -M27 \\ 
		Fuel &  Type of heating fuel & Electricity (E) or Electricity and Gas (EG)\\
		Tariff &  Contract type & Standard (Std) or Time of Use tariff (ToU)\\
		Temperature &  Air temperature & From $-20\degree$ to $30\degree$\\
		Visibility &  Air visibility & From $0$ to $10$ (integer)\\
		Humidity &  Air humidity percentage & From $0\%$ to $100\%$\\
		Date &  Current time & From  April $20$, $2009$ to  July $31$, $2010$ (half-hourly)\\
		Consumption & Power consumption & From $0.001$ to $900$  kWh\\
		\hdashline
		Fuel + Tariff & Cross of Fuel and Tariff variables & ``E - Std'', ``EG - Std'', ``E - ToU'' or ``EG - ToU'' \\
		Half-hour & Half-hour of the day & From $1$ to $48$ (integer) \\
		Day & Day off the week & From $1$ (Monday) to $7$ (Sunday) (integer) \\
		Position in the year & Linear values & From $0$ (Jan 1, 00:00) to $1$ (Dec 31, 23:59)\\
		 Smoothed temperature & Smoothed air temperature &From $-20\degree$ to $30\degree$ \\
		\midrule
	\end{tabular}
	\caption{\normalsize Summary of the variables provided and created for each household of the data set.}
\end{table}

\subsection{Clustering of the Households}
\label{subsec:clustering}
We present, in Paragraphs~\ref{subsub:random} to~\ref{subsub:nmf}, three methods to cluster the households and we compare them in the last paragraph of this subsection. 
After choosing a segmentation (or two crossed segmentations), we only consider, for each cluster, the aggregated consumption of its households. 
Thus, for any subset $\gamma \subset \mathcal{I}$, we compute the time series $y^\gamma_t \defeq \sum_{i \in \gamma} y_{i \, t}$ that we want to forecast and once clusterings are chosen, we never consider individual power consumption.

\subsubsection{Random Clustering}
\label{subsub:random}
We first consider the simplest way to cluster households: the segmentation is built randomly. In the experiments of Subsection~\ref{subsec:results}, the number of clusters varies from $4$ to $64$.
As an example here, we consider $4$ clusters and we randomly assign a number between $1$ and $4$ to each household and obtain the weekly profiles plotted in Figure~\ref{fig:conso_week_random}. 
In the following, we will call ``Random ($k$)", a segmentation of $k$ clusters built randomly.
Naturally, the curves are similar and the clusters are therefore rather homogeneous.

\subsubsection{Segmentation Based on Qualitative Household Variables}
The second approach consists in grouping households according to the provided non-temporal information.
 We consider the natural segmentations  ``Region", ``Acorn" and ``Fuel + Tariff'' based on the corresponding qualitative variables and we plot the weekly profile of each cluster on Figures~\ref{fig:conso_week_nuts},~\ref{fig:conso_week_acorn} and~\ref{fig:conso_week_fueltout}. 
Regions have an impact on the consumption profile: the evening consumption peak time varies by location. Moreover, consumption of the Wales regions (UKL15, UKL16 and UKL21) is lower than that of the other regions (see Figure~\ref{fig:conso_week_nuts}).
In the Acorn classification, the lower the value, the richer the household, thus Figure~\ref{fig:conso_week_acorn} shows that wealthiest households
consume the most (as expected).
Finally, the type of heating fuel does not seem to have a significant impact on the weekly consumption profile (although we have observed that when the heating is partly gas, the consumption is slightly lower and in winter, it is less sensitive to the temperature drops). 
Similarly, it seems that the type of contract does not influence the consumption profiles. Peak consumption in the evening is however less important for a dynamic time of fuse tariff than for the standard tariff. It should be noted that since time slots of prices may change from day to day, it is difficult to quantify here the impact of the tariff, as we are only showing average consumption profiles. 

\subsubsection{Clustering Based on Non-Negative Matrix Factorization and k-Means Method}
\label{subsub:nmf}
The last method relies on an historical individual time series of household power consumption (April $20$, $2009$ to April $20$, $2010$).
We propose a method to extract from these time series a low number -- denoted by $r$ -- of combined household characteristics and to use them to build relevant clusterings. 
The diagram below sums up the steps of the procedure described here quickly. We then further detail them one by one.
The $|\mathcal{I}|$ historical times series $\big(y_{i \, t}\big)_{1-T_0\leq t \leq 0}$ are firstly re-scaled and  gathered into a matrix $\Y_0\in \mathcal{M}_{|\mathcal{I}|\times T_0}$. 
We then reduce the dimension of data with a non-negative matrix factorization (NMF): we approximate $\Y_0$ by $\hat{\W} \, \hat{\H}$, where $\hat{\W}$  and  $\hat{\H}$ are $|\mathcal{I}| \times r$ and $r \times T_0$-non-negative matrices, respectively. 
As soon as this approximation is good enough, line $i$ of the matrix $\hat{\W}$ is sufficient to reconstruct the historical time series of household $i$ (with the knowledge of matrix $\hat{\H}$ - which is not used for the clustering). 
Thus, we assign, to each household, $r$ characteristics: the lines of $\hat{\W}$. 
After a re-scaling step -- to give the same importance to each of those characteristics -- we get the $r$-vectors $(\w_i)_{i \in \mathcal{I}}$.
With this low-dimension representation of households in $\mathbb{R}^r$, we use $k$-means clustering algorithm in $\mathbb{R}^r$ to provide the $k$ clusters $C_1,\dots,C_k$ and we write ``NMF ($k$)" for such a clustering. 
\small
\begin{center}
\tikzstyle{init} = [pin edge={to-,thin,black}]
\begin{tikzpicture}
    \node[draw ,text width=3cm, text centered,minimum height=3em] (a) {Re-scaling and gathering time series in a matrix};
     \node[draw ,text width=3cm,text centered,minimum height=3em] (c) [right of=a,node distance=6cm] {Low rank approximation (NMF): $\Y_0 \approx \hat{\W} \, \hat{\H} $};
    \node (b) [left of=a,node distance=5.5cm, coordinate] {a};
    \node [coordinate] (g) [right of=c, node distance=2cm]{};
    \path[->] (b) edge node [below] {Historical time series} (a);
    \path[->] (b) edge node [above] {$\Big\{ \big(y_{i \, t}\big)_{1-T_0\leq t \leq 0} \, | \, i \in \mathcal{I} \Big\}$} (a);
    \path[->] (a) edge node [above]  {$\Y_0\in \mathcal{M}_{|\mathcal{I}|\times T_0}$} (c);
    \path[->] (a) edge node [above]  { } (c);
    \path[-] (c) edge node [above] {} (g);
    \path[-] (c) edge node [above] { } (g);
    \node (i) [below of=g,node distance=1.2cm, coordinate] {};
    \path[-] (g) edge node [above] { } (i);
    \path[-] (g) edge node [below] { } (i);
    \node (j) [left of=i,node distance=11cm, coordinate] {};
    \path[-] (i) edge node [above] { } (j);
    \path[-] (i) edge node [below] { } (j);
    \node (h) [below of=j,node distance=1.2cm, coordinate] {j};
    \path[-] (j) edge node [above] { } (h);
    \path[-] (j) edge node [below] { } (h);
    \node[draw ,text width=3cm, text centered,minimum height=3em] (d) [right of=h,node distance=4cm] {Extracting and re-scaling characteristic vectors};
    \path[->] (h) edge node [above] { $\hat{\W} \in \mathcal{M}_{|\mathcal{I}|\times r}$} (d);
    \path[->] (h) edge node [below] { } (d);
    \node[draw ,text width=3cm, text centered,minimum height=3em] (e) [right of=d,node distance=6cm] {$k$-means clustering};
    \draw[->] (d) edge node [above] {$\Big\{ \w_i \in \mathbb{R}^r | \, i \in \mathcal{I}\Big\}$} (e) ;
    \draw[->] (d) edge node [below] {} (e) ;
    \node [coordinate] (end) [right of=e, node distance=4cm]{};
    \draw[->] (d) edge node [above] {} (e) ;
    \draw[->] (d) edge node [below] {} (e) ;
    \draw[->] (e) edge node [above] {$C_1,\dots,C_k$} (end) ;
    \draw[->] (e) edge node [below] {$k$ clusters} (end) ;
\end{tikzpicture}
\end{center}
\normalsize
\paragraph{Re-scaling and Gathering Time Series in a Matrix}
For $T_0 > 0$, 
we consider the $ |\mathcal{I}| \times T_0 $-matrix $\Y_0$  which contains the re-scaled historical power consumption time series: for any $i \in \mathcal{I}$ and any $1-T_0 \leq t \leq 0$, 
\[ (\Y_0)_{i\,t} \defeq \frac{y_{i\,t}}{\bar{y}_i}, \quad \mathrm{with} \quad \bar{y}_i \defeq \frac{1}{T_0} \sum_{t=1-T_0}^0 y_{i\,t}.\]
\paragraph{Low Rank Approximation}
Since we are interested in power consumption, all the coefficients of $\Y_0$ are non-negative - we will write $\Y_0 \geq 0$ and say that this matrix is non-negative.
To reduce dimension of non-negative matrices,
\citet{paatero1994positive} and \citet{lee1999learning} propose a factorization method whose distinguishing feature is the use of non-negativity constraints.
Let us fix some integer $r \ll \min(|\mathcal{I}|,T_0)$, which will ensure a reduction of the dimension (we chose $r=10$ in the experiments of the next subsection). 
The non-negative matrix factorization (NMF) approximates matrix $\Y_0$ by $\Y_0 \approx~\W^{\star}\H^{\star},$ where $\W^{\star}$ and $\H^{\star}$ are $|\mathcal{I}| \times r$ and $r \times T_0$ non-negative matrices.
They are computed by solving:
\begin{equation*}
\big(\W^{\star}, \, \H^{\star}\big) \in \argmin_{\W,\,  \H \, \geq \, 0} \big\| \Y_0-\W\H \big\|^2_F =\argmin_{\W,\,  \H \, \geq \, 0}
\sum_{i,t} \Big( y_{i\,t}- \big(\W\H\big)_{i\,t}\Big)^2.
\end{equation*}
We use the function \texttt{NMF} of the
\texttt{Python}-library \texttt{sklearn.decomposition} to approach a local minimum with a coordinate descent solver and denote by $\hat{\W}$ the approximation of $\W^{\star}$.
Thanks to the NMF, for any $i \in \mathcal{I}$, $r$ characteristics (the $i^{\mathrm{th}}$ line of matrix $\hat{\W}$) are thus  computed.
\paragraph{Extracting and Re-scaling Characteristic Vectors}
To give the same impact to each of these characteristics, we re-scale the columns of $\hat{\W}$ and define, for each household $i$, the  vector
$$ \w_{i} = \Bigg( \frac{\hat{\W}_{i \, 1}}{\sum_{j \in \mathcal{I}} \hat{\W}_{ j\, 1}} \, , \, \dots \, , \, \frac{\hat{\W}_{i \, r}}{\sum_{j \in \mathcal{I}} \hat{\W}_{j \, r}}  \Bigg).$$
\paragraph{k-Means Clustering}
The k-means algorithm (introduced by \citet{macqueen1967some}) is then used on these  $r$-~vectors to cluster the households into a fixed number $k$ of groups (which varies from $4$ to $64$ in our experiments).
We recall below how this algorithm works.
With $\{C_1,\dots,C_k\}$ a $k$-clustering of set $\mathcal{I}$, for any $1 \leq \ell \leq k$, we define the center $\bar{\w}_{\ell} $ and the variance $\mathrm{Var}(C_\ell)$ of cluster $C_\ell$ by
\[\bar{\w}_{\ell} \defeq \frac{1}{|C_\ell|} \sum_{i \in C_\ell} \w_i\quad \mathrm{and} \quad \mathrm{Var}(C_\ell) \defeq \frac{1}{C_\ell} \sum_{i \in C_\ell} \| \w_i -\bar{\w}_{\ell} \|^2.\]
In $k$-means clustering, each household belongs to the cluster with the nearest center. The best set of clusters, denoted by $\big\{C^{\star}_1,\dots,C^{\star}_k\big\}$ -- namely the best set of centers -- is obtained by minimizing the following criterion:
\begin{equation*}
\big\{C^{\star}_1,\dots,C^{\star}_k\big\} \in \argmin_{\{C_1,\dots,C_k\}} \sum_{\ell=1}^k \sum_{\w \in C_{\ell}} \big\| \w-\bar{\w}_{\ell} \big\|^2 = \argmin_{\{C_1,\dots,C_k\}} \sum_{\ell=1}^k |C_\ell| \mathrm{Var}(C_\ell).
\end{equation*}
In practice, we use the use \texttt{KMeans} function of the \texttt{Python}-library \texttt{sklearn.cluster} to compute clusters. 

\begin{figure}[tp]
\centering
\includegraphics[width=.95\columnwidth]{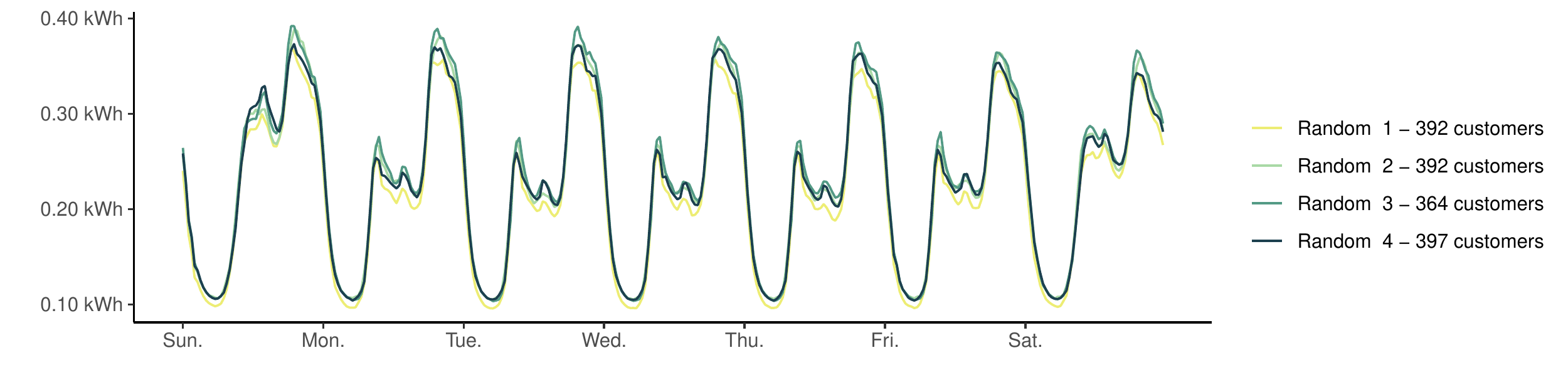} 
	\vspace{-1cm}
\caption{Mean consumption per week and per cluster, with households  randomly assigned to an integer from $1$ to $4$.}
\label{fig:conso_week_random}
\end{figure}
\begin{figure}[t!]
\centering
	\vspace{-0.17cm}
\includegraphics[width=.95\columnwidth]{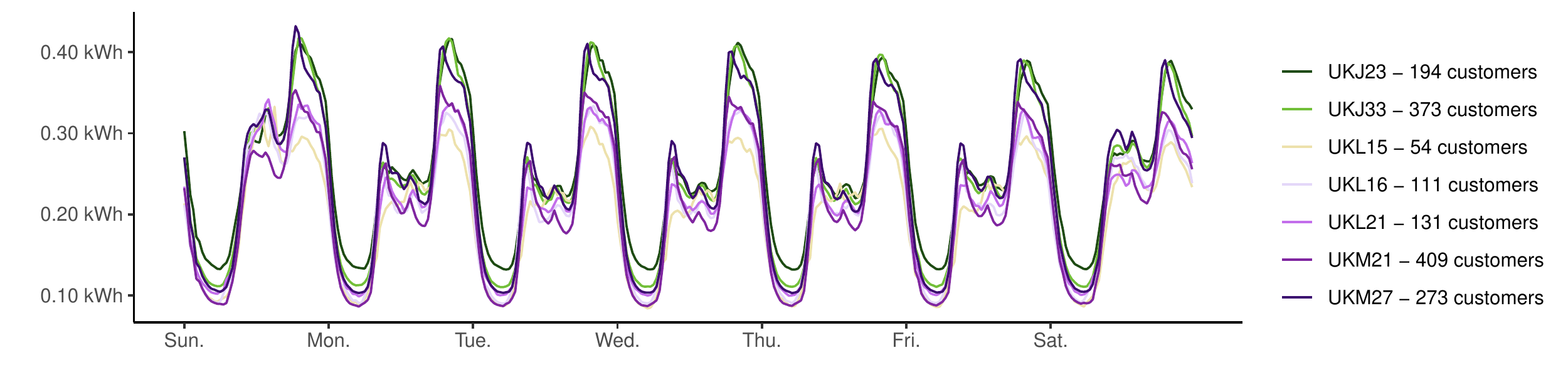} 
	\vspace{-1cm}
\caption{Mean consumption per week and per region (UK NUTS of level 3).}
\label{fig:conso_week_nuts}
\end{figure}
\begin{figure}[t!]
\centering
	\vspace{-0.17cm}
\includegraphics[width=.95\columnwidth]{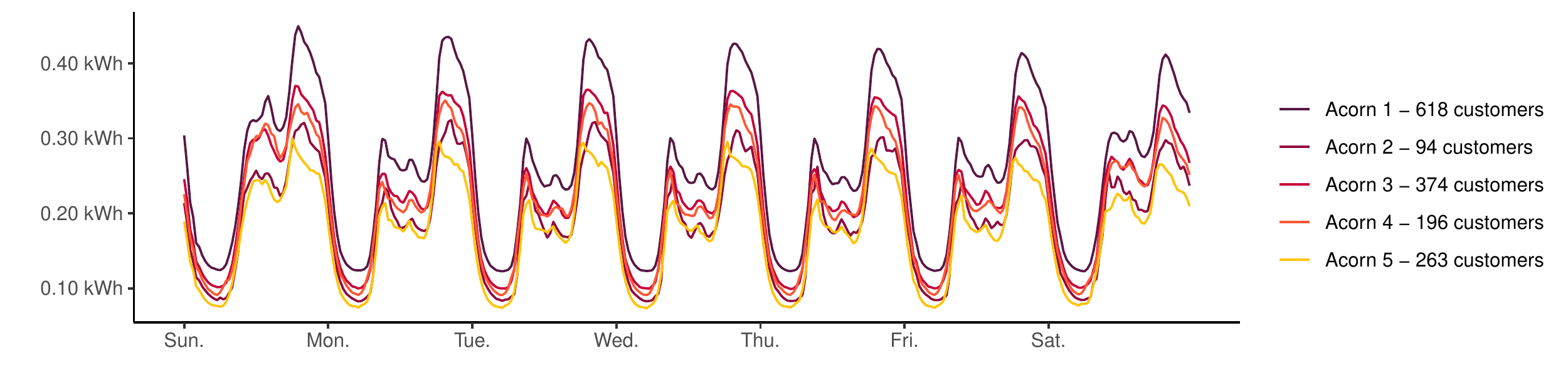} 
	\vspace{-1cm}
\caption{Mean consumption per week and per Acorn category value (from $1$ to $5$).}
\label{fig:conso_week_acorn}
\end{figure}
\begin{figure}[t!]
\centering
	\vspace{-0.17cm}
\includegraphics[width=.95\columnwidth]{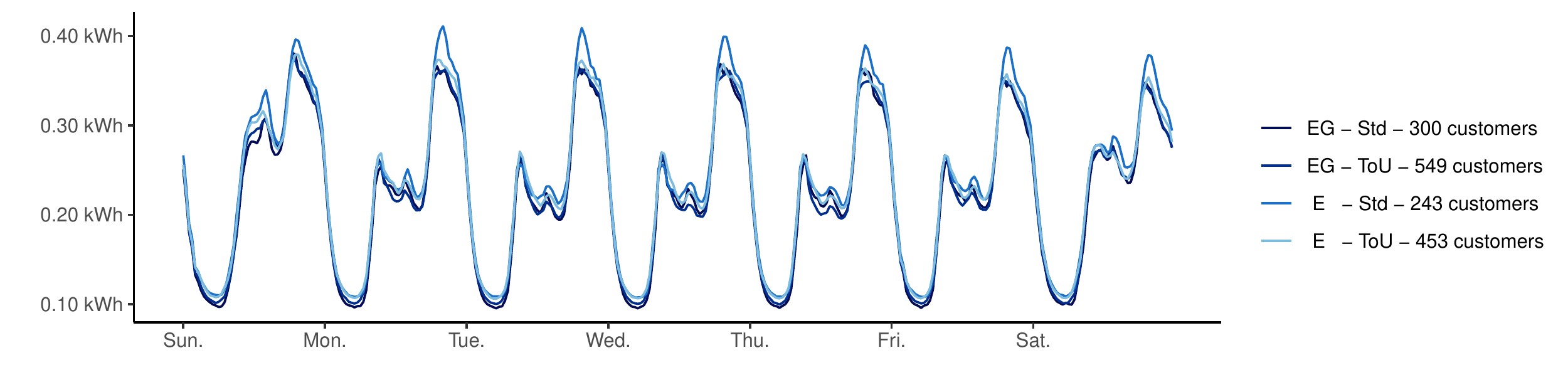} 
	\vspace{-1cm}
\caption{Mean consumption per week for the households clustered according ``Fuel + Tariff".}
\label{fig:conso_week_fueltout}
\end{figure}
\begin{figure}[t!]
\centering
	\vspace{-0.17cm}
	\includegraphics[width=.95\columnwidth]{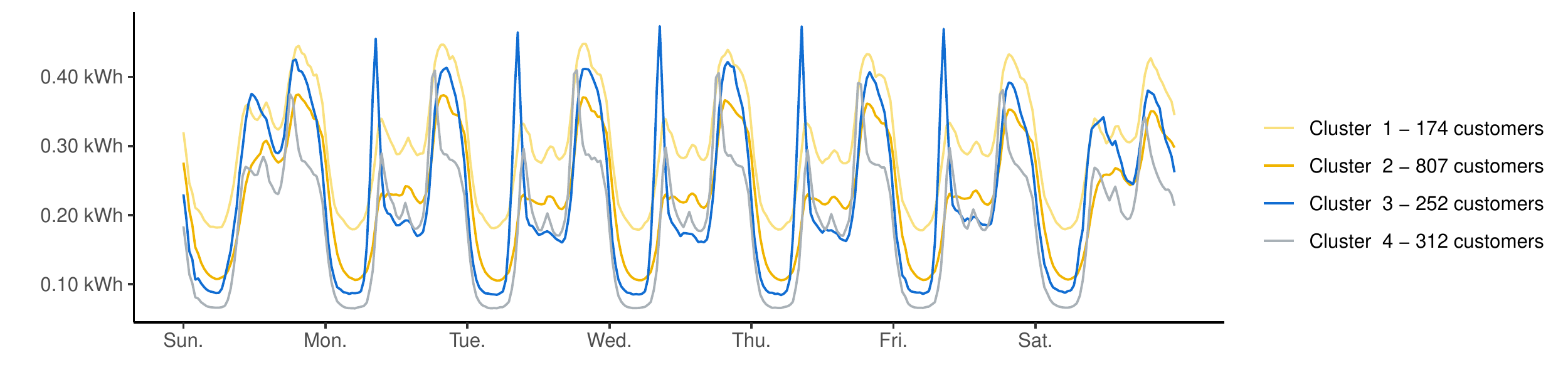} 
	\vspace{-1cm}
	\caption{Mean consumption per week and per cluster, with each household assigned to one of the four groups according to the NMF and $k$-means procedure (``NMF (4)" clustering).}
	\label{fig:conso_week_nmf}
\end{figure}

\paragraph{Description and Analysis of ``NMF (4)"}
For $k=4$, weekly profiles are plotted in Figure~\ref{fig:conso_week_nmf}.
This clustering seems to detect consumption behaviors much more specific than any of the previous ones.
 Indeed, Clusters 3 and 4  present a peak of consumption early in the morning on working days, while the  consumption of Cluster 2-- which includes the largest number of households -- remains almost flat throughout the morning.
Moreover, the evening peak for Cluster 4 arrives earlier than for the other clusters.  Finally, the consumption of Cluster 1 is generally the highest, while that of Cluster 2 is the lowest.

\subsubsection{Comparison of Clusterings}
To measure similarity between the clusterings above, we calculate the adjusted rand index (ARI) -- see~\citet{rand1971objective} -- for each segmentation pair and report the values thus obtained in Table~\ref{tab:ari}. 
Given a set elements $\mathcal{I}$ and two partitions to compare, for example the segmentation ``Region" $\{R_1,\dots R_{N}\}$ and another clustering $\{C_1,\dots C_{k}\}$, the ARI is defined by
\[
ARI \defeq \frac{\displaystyle \sum_{\ell=1}^k\sum_{n=1}^N \binom{|C_{\ell} \cap R_n |}{2} - \Bigg[ \sum_{\ell=1}^k \binom{|C_\ell|}{2}\sum_{n=1}^N \binom{|R_n|}{2} \Bigg] / \binom{|\mathcal{I}|}{2}}{\displaystyle \frac{1}{2} \Bigg[  \sum_{\ell=1}^k \binom{|C_\ell|}{2} + \sum_{n=1}^N \binom{|R_n|}{2} -\sum_{\ell=1}^k \binom{|C_\ell|}{2}\sum_{n=1}^N \binom{|R_n|}{2} \Bigg]   / \binom{|\mathcal{I}|}{2}}. 
\]
ARI lies in $[-1, 1]$ by construction, it is equal to 0 for a random matching between clusters of the two considered segmentations and to 1 for a perfect alignment.
Similarity between our  different household partitions is very low, only
``Region" is slightly correlated with all other clusterings, and ``NMF (4)" with ``ACORN". But these correlations remain low and
 the clustering ``NMF (4)" therefore seems to extract, from historical time series, some  households information that are not contained in other clusterings. Its use should improve forecasts -- this will be confirmed by the experiments below.
\begin{table}[t]
\centering
\label{tab:ari}
\begin{tabular}{lcccc}
\midrule
{} & \textbf{Region} & \textbf{NMF (4)} & \textbf{Acorn} & \textbf{Fuel $+$ Tariff} \\
\midrule
\midrule
 \textbf{Random (4) }& -0.000 & 0.000 & 0.003 & -0.000 \\
\textbf{Fuel $+$ Tariff} & 0.016 & -0.001 & 0.004 & \\
\textbf{Acorn} & 0.043 & 0.018 & & \\
\textbf{NMF (4)} & 0.011 & & & \\
\midrule
\end{tabular}
\caption{ARI (Adjusted Rand index) for each segmentation pair.}
\end{table}

\subsection{Experiment Design}
\label{subsec:experiment_design}

Thanks to the above methods, we established several partitions of the household set $\mathcal{I}$. 
As explained below, choosing one or two of them amounts to considering a two-level hierarchy (Example~\ref{ex:simplehierarchy}) or two crossed hierarchies (Example~\ref{ex:doublehierarchy}). We also detail the corresponding set of node $\Gamma$.
We then describe how we build meteorological data for each node $\gamma\in\Gamma$ and generate corresponding features.
Finally, we focus on standardization and online calibration of aggregation hyper-parameters.
We have divided the data set into training data: one-year of historical data (from April $20$, $2009$ to April $19$, $2010$) -- used for NMF clusterings, feature generation method training,  and standardization -- and testing data. 
As aggregation algorithms start from scratch, they work poorly during the first rounds. We therefore withdraw the first $10$  days of testing data from the performance evaluation period.
So, April $20$, $2010$ to April $30$, $2010$ is left for initializing aggregation algorithms and the hyper-parameters calibration and our methods are then tested during the last three months (from May $1$, $2010$ to  July $31$, $2010$).
We summarize  in Table~\ref{table:dates} the range of dates for each step of the procedure.

\begin{table}[t]
\label{table:dates}
	\centering
	\begin{tabular}{lrrr}
		\hline
		& Start date  & End date\\
		\hline
		\hline
		\begin{tabular}{@{}l@{}}\textbf{NMF Clusterings} \\ \textbf{Feature Generation Model Training} \\ \textbf{Features and Observations Standardization}  \end{tabular}  
		& April $20$, $2009$  & April $19$, $2010$\\
		\hline
		\textbf{Initialization of the Aggregation} 
		& April $20$, $2010$  & April $30$, $2010$\\
		\textbf{Model Evaluation} 
		& May $1$, $2010$ & July $31$, $2010$\\
		\hline
	\end{tabular}
	\caption{Date range for the steps of the proposed method}
	\label{train_test_dates}
\end{table}

\paragraph{Underlying Hierarchy}
As detailed in Section~\ref{sec:methodo}, we aim to forecast a set of power consumption time series  $\big\{(y_t^\gamma)_{t>0}, \gamma \in \Gamma\big\}$ connected to each other by some summation constraints.
These constraints are represented by one (or more) tree(s) and $\Gamma$ denotes the set of its (or their) nodes.
We refer to Example~\ref{ex:simplehierarchy} if we consider a single segmentation and to Example~\ref{ex:doublehierarchy} for two crossed clusterings. We detail below the set $\Gamma$, which will contain some subsets of households set $\mathcal{I}$, for these two configurations.
We recall that we denote the average power consumption of a group of households  $\gamma \subset \mathcal{I}$ by $y_t^\gamma \defeq \sum_{i \in \gamma} y_{i\,t}$.
Considering a single clustering $(C_1,\dots C_N)$ of $\mathcal{I}$, 
we want to forecast the consumption of each cluster $C_\ell$, and also the global consumption (namely, the one for $\gamma=\mathcal{I}$). Thus,
we set $\Gamma = \{C_\ell \}_{1 \leq \ell \leq N} \, \cup \, \{\mathcal{I}\}$ and the associated time series respect the hierarchy of Figure~\ref{fig:ex1} -- where  $y^\mathrm{\textsc tot}$ refers to the time series associated with $\mathcal{I}$ and $y^1,y^2,\dots ,y^N$ with the ones of clusters $C_1,C_2,\dots C_N$. 
We now consider two partitions. 
The first one $R_{1},\dots R_{N}$ refers to segmentation ``Region" and the second one, $C_{1},\dots C_{k}$ to any other clustering.
We would like to forecast the global consumption ($\gamma=\mathcal{I}$), the consumption associated with each region ($\gamma=R_n$, for $n=1,\dots,N$) and with each cluster ($\gamma=C_\ell$, for $\ell=1,\dots, k$) but also the power consumption of cluster $C_1$ in region $R_1$ ($\gamma=C_1 \cap R_1$), of cluster $C_1$  in region $R_2$ ($\gamma=C_1 \cap R_2$), and so on.
Thus,we consider the set of nodes 
\[\Gamma = \{C_{\ell} \cap R_{n}  \}_{1\leq \ell \leq k,\, 1 \leq n \leq N} \, \cup \, \{C_{\ell} \}_{1\leq \ell \leq k} \, \cup \, \{R_{n}\}_{1\leq n \leq N} \, \cup \, \{\mathcal{I}\}.\] 
The hierarchy associated with such crossed segmentations is represented in Figure~\ref{fig:ex4}  (with $N_1=k$ and $N_2=N$) -- where the global consumption,  associated with $\mathcal{I}$, is denoted by $y^\mathrm{\textsc tot}$, 
the one of cluster $C_{\ell}$ by $y^{\ell \, \cdot}$, the one of region $R_{n},$ by $y^{\cdot \, n}$ and where $y^{\ell \, n}$ refers to the local consumption of $C_{\ell} \cap R_{n}$.

\paragraph{Meteorological Data of any Set of Households}

Methods presented in Section~\ref{sec:features} for feature creation implicitly assume that meteorological data are available.
We recall that we collected meteorological data for each of the $N$ regions. Thus when $\gamma \in \Gamma$ refers to one of these regions, we can directly apply the feature generation methods.
However, if node $\gamma$ groups households from different regions, these data are not directly available and one may even wonder what they should correspond to. We take convex combinations of regional meteorological data, in proportions corresponding to the locations of the households. More precisely,  
for each meteorological variable (temperature, visibility or humidity), we built the meteorological variable of $\gamma$ as a convex combination of the $N$ meteorological variables of the $N$ regions. 
The weight associated with region $n$ corresponds to the proportion of this region in $\gamma$, in terms of contribution to the consumption -- this contribution is determined from historical data. 

\paragraph{Feature Creation}
For each node $\gamma$, we now have access to calendar and meteorological data. 
Considering an exponential smoothed temperature -- that models the thermal inertia of buildings~-- is likely to improve forecasts  (see among others, \citealp{doi:10.1057/palgrave.jors.2601589} and \citealp{goude2014local}), so we create the $a$-exponential smoothing of the temperature ${\bar{\tau}}_t^\gamma \defeq~a {\bar{\tau}}_{t-1}^\gamma+~(1-~a)\tau_t^\gamma $, where $a \in [0,1]$. 
After testing several values and evaluating their performance on the training set, we set $a=0.999$.
\begin{figure}[tp!]
\center
\includegraphics[width=0.4945\columnwidth]{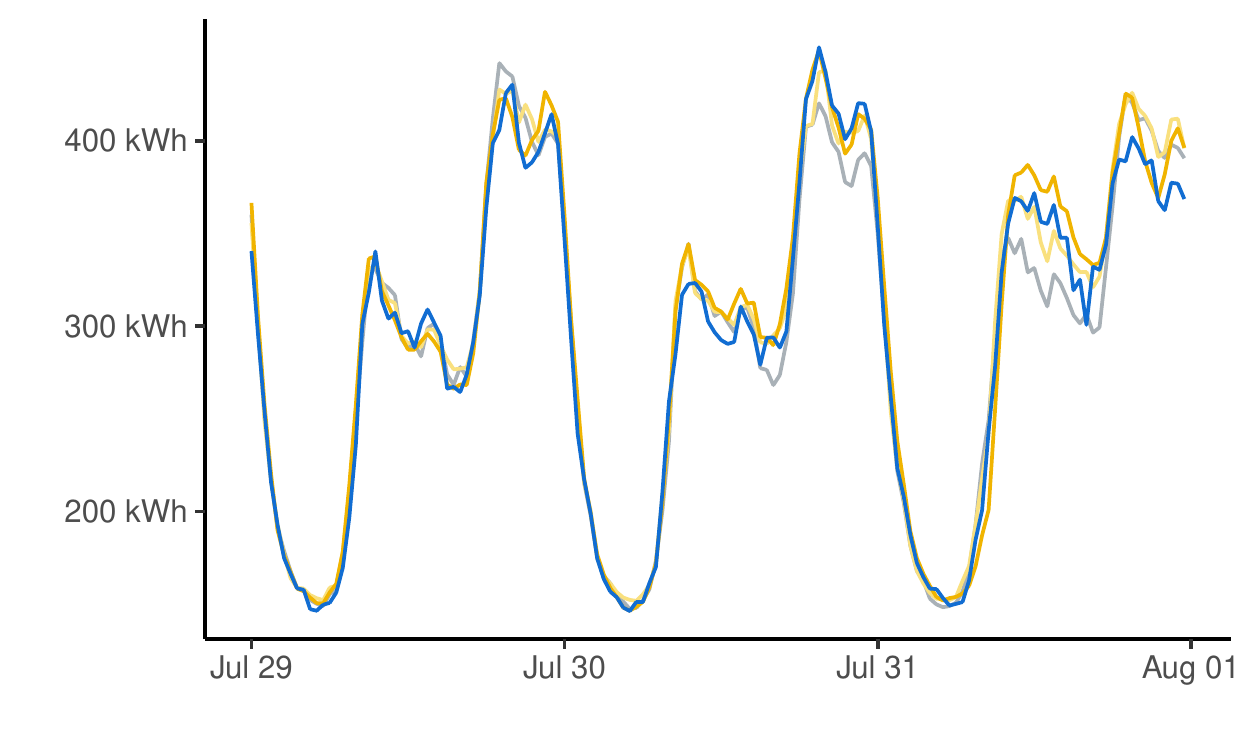}
\includegraphics[width=0.4945\columnwidth]{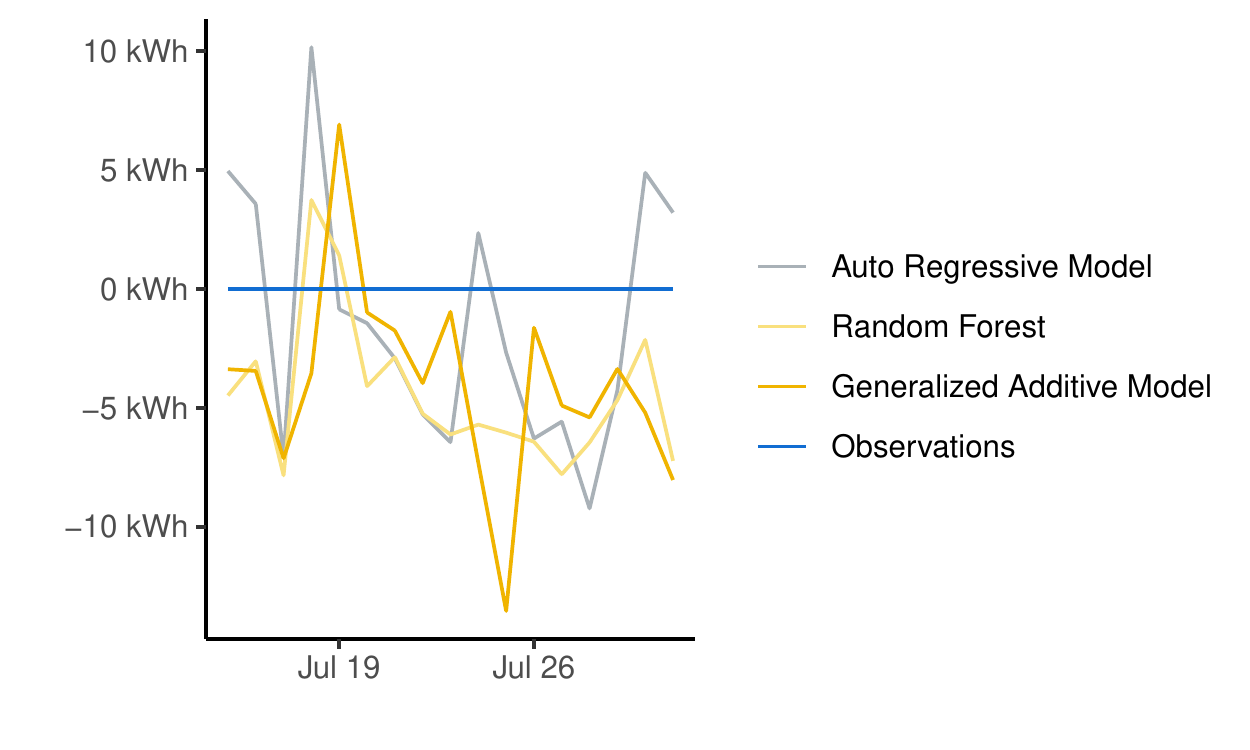}
\caption{Left picture: benchmark forecasts (auto-regressive model, generalized additive model, random forest) and observations of global consumption ($\gamma=\mathcal{I}$) at half-hour intervals on the last three days of the test period. Right picture: corresponding daily average signed errors on the last week of the test period.}
\label{fig:benchmark}
\end{figure}
\begin{figure}[t!]
\center 
\includegraphics[width=0.4945\columnwidth]{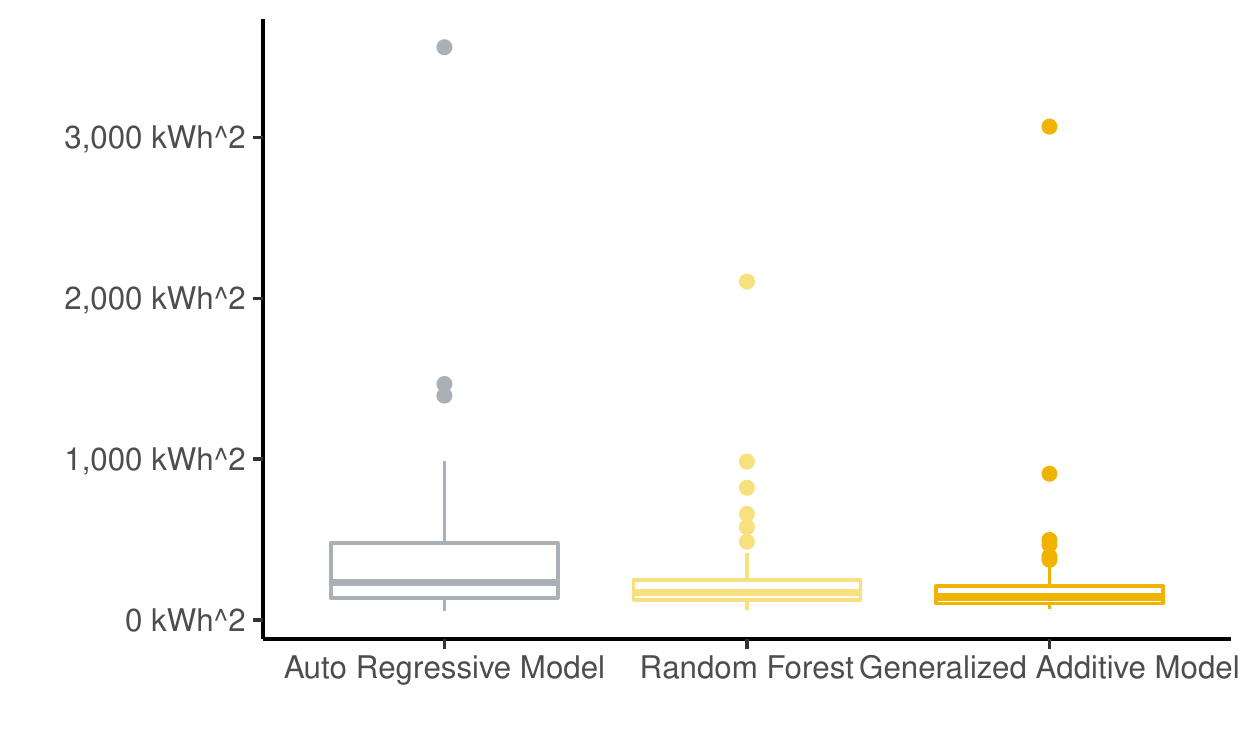}
\includegraphics[width=0.4945\columnwidth]{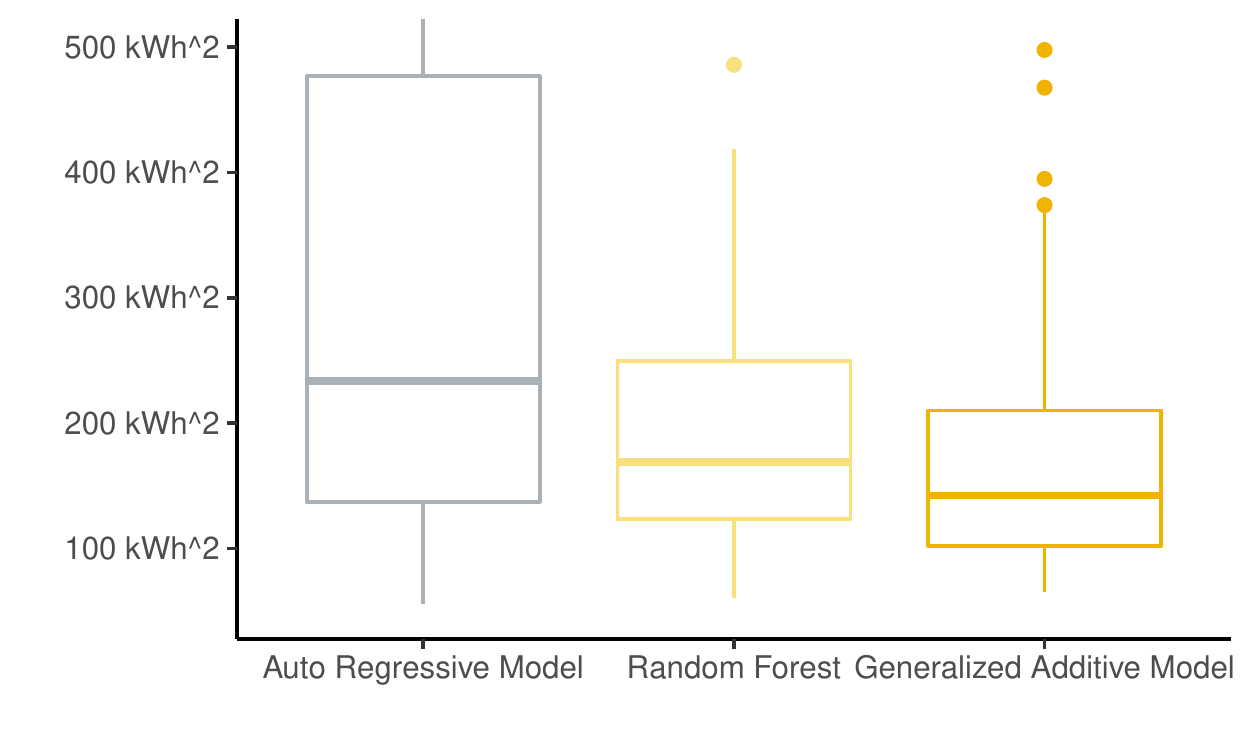}
\caption{Distribution over the test period of daily mean squared error of global consumption benchmark forecasts (auto-regressive model, generalized additive model, random forest). Left picture: original boxplots; Right picture: boxplots trimmed at 500 $\mathrm{kWh}^2$.}
\label{fig:benchmark_err}
\end{figure}
We then apply methods of Section~\ref{sec:features} using available explanatory variables to generate features $\x_t$. 
Each model (auto-regressive model, generalized additive model or random forest) is trained on a year of historical data (from April $20$, $2009$ to April $20$, $2010$). Then, forecasts are computed on the period April $20$, $2010$ to July $31$, $2010$. 
On the left of Figure~\ref{fig:benchmark}, we represent these benchmark predictions and the observations for the global consumption (namely $\gamma= \mathcal{I}$) over the last three days of the test period.
On the right, we plot daily signed errors, $\frac{1}{48}\sum_{s=t}^{t+48} \big(y_s^\gamma - x_s^\gamma\big)$, for  $\gamma= \mathcal{I}$ over the last week of the test period.
Finally, daily mean squared errors, $\frac{1}{48}\sum_{s=t}^{t+48}\big(y_s^\gamma - x_s^\gamma\big)^2$, are computed for each test period day and represented by box-plots on Figure~\ref{fig:benchmark_err}. 
The generalized additive model seems to perform the best (and the auto-regressive model the worst), this will be confirmed by the numerical results of the next subsection. 

\paragraph{Observations and Features Standardization}
Once above features computed, they are standardized using the protocol presented in Subsection~\ref{subsec:standardization}. We assess the quality of the standardization for one given configuration, namely ``Region + NMF (16)'', with features generated by the general additive model (this configuration, which refers to the two crossed clusterings ``Region" and ``NMF (16)", reaches the lower predictions errors -- see Table~\ref{tab3}).
As there are $7$ regions, the set $\Gamma$ consists of $16 \times 7 + 16 + 7 +1 =136$ nodes, but only $129$ are non-empty.  
For both standardized and non-standardized observations and features, we compute, for each node $\gamma\in \Gamma$, the empirical mean and empirical standard deviation over the test period. The distributions are plotted in Figures~\ref{fig:standardization_mean} and~\ref{fig:standardization_var}, respectively.
Since the abscissa for non-standardized data is in logarithmic scale, the mean and standard deviation of data differ a lot from a node to another. For example, the right-hand point is the global consumption ($\gamma=\mathcal{I}$), while points on the left correspond to the consumptions of small clusters. 
Thus, standardization centers data and decreases standard deviations of observations, as desired. 
In addition, standard deviations of features are close to $1$. 
 Figure~\ref{fig:cor_mat} represents correlation matrices of the $|\Gamma|$-vectors $(\x_t)_{1\leq t \leq T}$ and  $(\breve{\x}_t)_{1\leq t \leq T}$, that contain the non-standardized and standardized features over the test period. 
This shows that our standardization process  is centering, re-scaling and de-correlating features.
Finally, Table~\ref{tab_stand} gathers numerical values of the average, over $\gamma \in \Gamma$, of empirical means and standard deviations (these values are indicated by dashed vertical lines on Figures~\ref{fig:standardization_mean} and~\ref{fig:standardization_var}). We also compute the maximum of the absolute value of features and observations -- ``Bound" column of the table. This gives an empirical approximation of the boundedness constant $C$ -- see boundedness assumptions~\eqref{eq:ass2}. 
\begin{figure}[tp!]
	\vspace{1cm}
\center
\includegraphics[width=0.396\columnwidth]{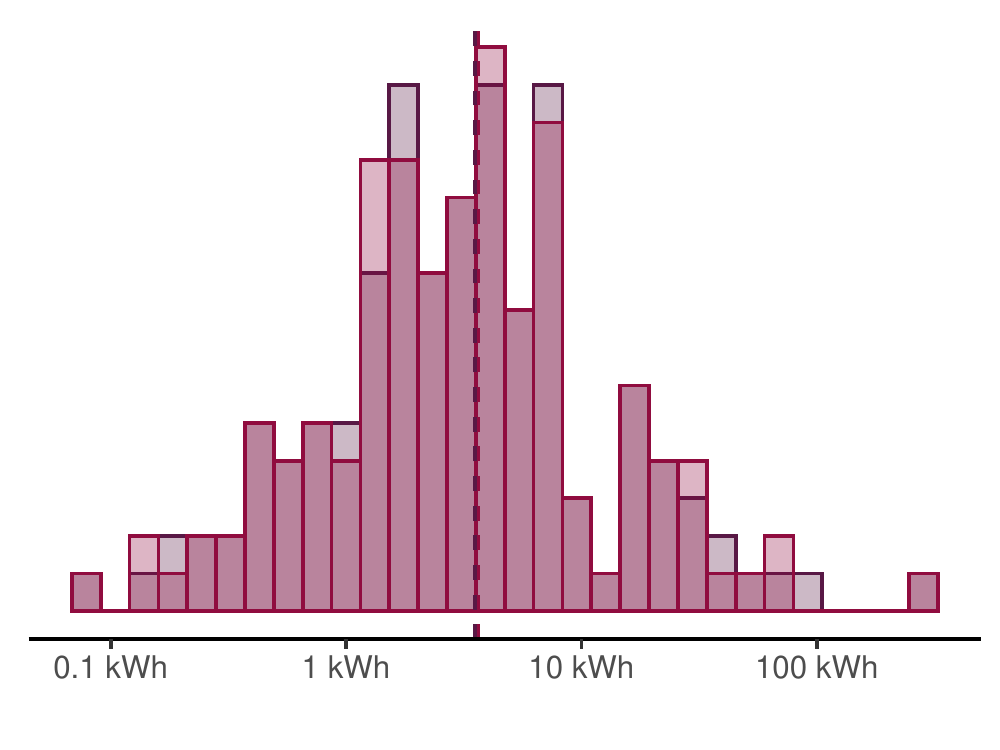} 
\includegraphics[width=0.495\columnwidth]{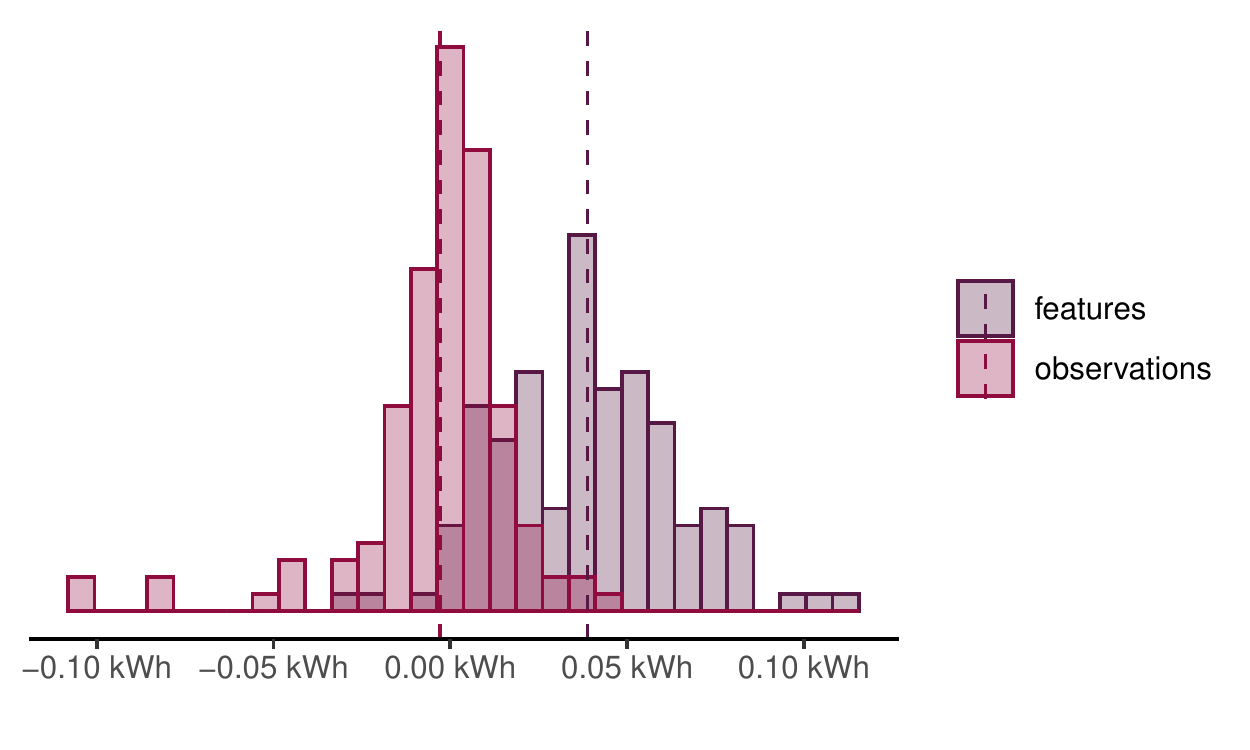} 
\caption{Distribution of empirical means per cluster, for non-standardized and standardized observations and features.}
\label{fig:standardization_mean}
\end{figure}
\begin{figure}[t!]
\center
\includegraphics[width=0.396\columnwidth]{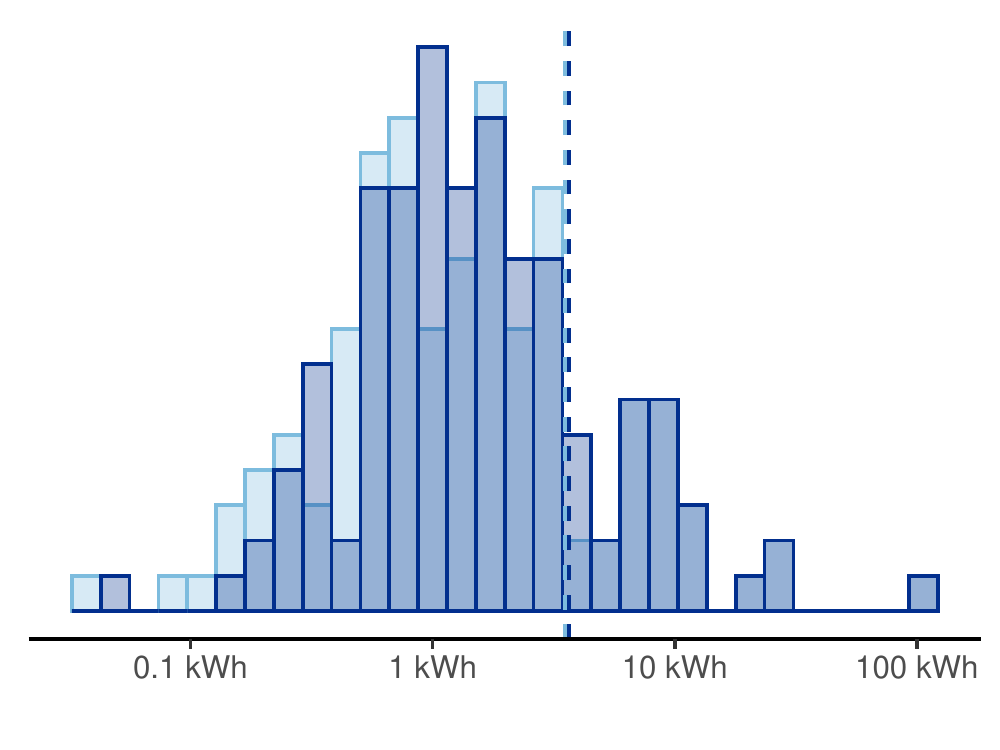} 
\includegraphics[width=0.495\columnwidth]{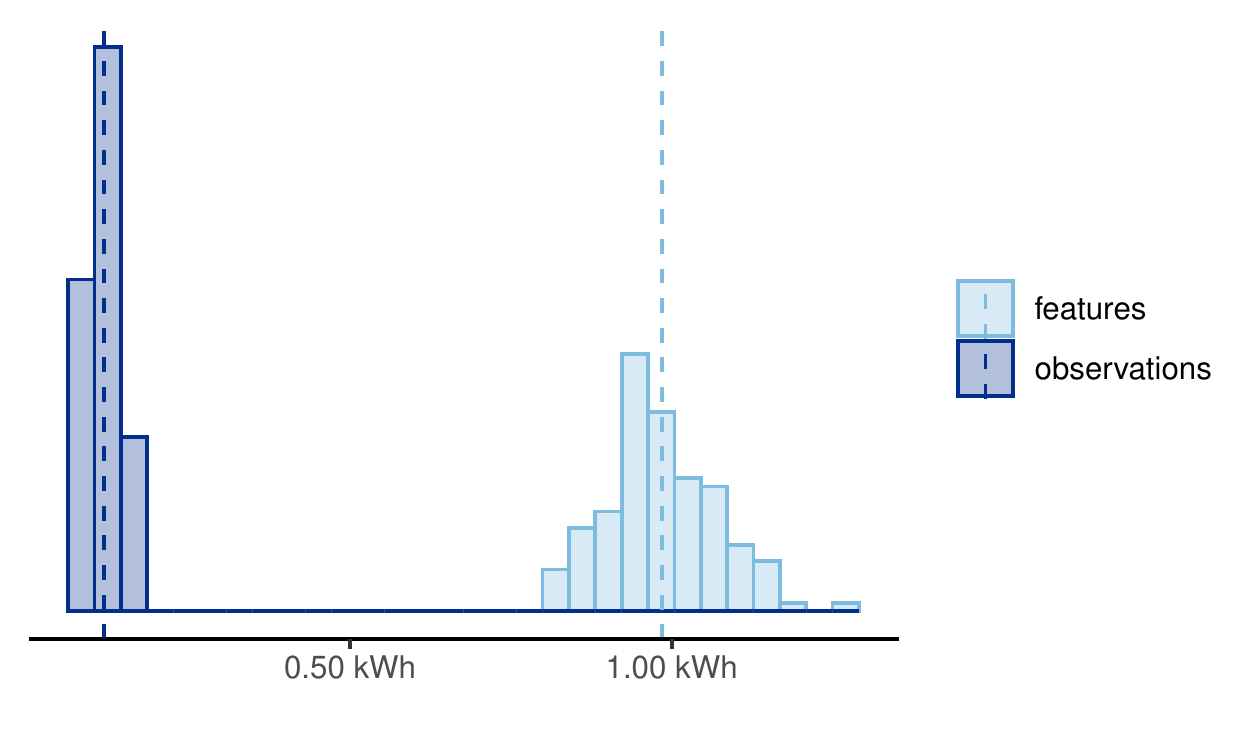} 
\caption{Distribution of empirical standard deviations per cluster, for non-standardized (left) and standardized (right) observations and features.}
\label{fig:standardization_var}
\end{figure}
 \begin{figure}[t!]
\center
\includegraphics[width=0.8\columnwidth]{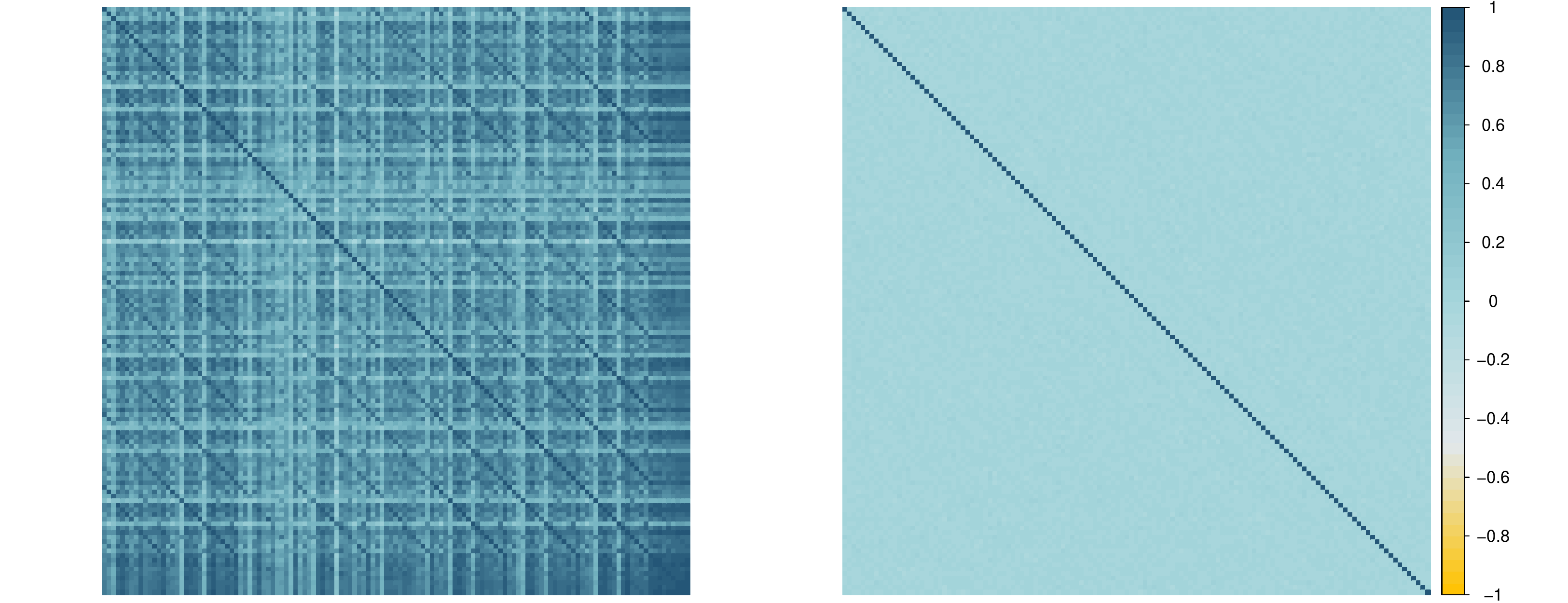} 
\caption{Correlation matrix of non-standardized (left) and standardized (right) feature vectors.}
\label{fig:cor_mat}
\end{figure}

\begin{table}[b]
\centering
\begin{tabular}{lrrr}
\hline
& \textbf{Mean}  & \textbf{Bound} & \textbf{Standard deviation} \\
\hline
\hline
 \textbf{Observations} & 9.53  & 570.02 & 3.65\\
 \textbf{Features} & 9.54  & 570.87& 3.53 \\
\hline
 \textbf{Standardized observations} & -0.003  &   1.27& 0.12\\
 \textbf{Standardized features} & 0.04 & 18.9& 0.98 \\
 \hline
\end{tabular}
\caption{Mean, and maximum of absolute value and standard deviation of observations and features before and after standardization}
\label{tab_stand}
\end{table}

\paragraph{Calibration of Hyper-Parameters}
Once features and observations are standardized, we choose one of the algorithms presented in Section~\ref{sec:aggregalgo} and run it, on the $|\Gamma|$ nodes, in parallel with the same hyper-parameter. 
For the sequential non-linear ridge regression (NL-Ridge),  we have to choose the regularization parameter $\lambda$ (see Equation~\ref{ridge}) and for BOA and ML-Pol algorithms, we need to set $\alpha$, the radius of the $L1$-ball  (see Algorithm~\ref{algo:trick}). 
Henceforth,, we denote by $\beta$ this hyper-parameter (which is equal to $\lambda$ for NL-Ridge and to $\alpha$ for BOA and ML-Pol).
We optimize the choice of $\beta$ by grid search, which is simply an exhaustive search in a specified finite subset $G$ of the hyper-parameter space. This optimization is performed sequentially. 
Indeed, for any node $\gamma$ and any instance $t>48$, we run $|G|$ algorithms in parallel and we chose the one -- denoted by $\beta_t$ -- which minimizes the average prediction error on past available data.
Thus, with $\hat{y}_s^\gamma(\beta)$ the output, at an instance $s$, of algorithm $\mathcal{A}^\gamma$ run with $\beta$,
we choose the parameter $\beta_t$ as follows:
\[
\beta_{t}  \in  \argmin_{\beta \in G}   \frac{1}{t- 48} \sum_{s=1}^{t- 48} \frac{1}{|\Gamma|} \sum_{\gamma \in \Gamma} \big( y_s^\gamma - \hat{y}_s^\gamma (\beta) \big)^2.
\]
In our experiments, to reduce the computational burden, we set $G= \big\{ 4^i\, | \, i =-5,-4,\dots,5 \big\}$, so (only) $11$ aggregations are run in parallel. At each new day, we check that we never reach the bounds $4^{-5}$ and $4^5$.
This kind of online calibration has shown good performance in load forecasting (see, for example,~\citealp{devaine2013forecasting}).

\subsection{Results}
\label{subsec:results}

In this subsection, we compare the four forecasting strategies detailed below  by evaluating them on the testing period (May $1$, $2010$ to July $31$, $2010$), for each forecasting method of Section~\ref{sec:features}, for each  aggregation algorithm of Section~\ref{sec:aggregalgo} and for various households clusterings.
To do so, we introduce some prediction error defined below as well as a confidence bound on this error.
We recall that we aim to forecast, at each instance $t$, a vector of time series $\y_t=(y_t^\gamma)_{\gamma \in \Gamma}$.
The first strategy, that we call ``Benchmark", consists simply in providing the features $\x_t$ as forecasts.
The second one considers only the projection step and thus skips the aggregation step (we will refer to it as the ``Projection'' strategy), the associated forecasts are thus the projected features $\Pi_\K( \x_t)$.
To measure the impact of the aggregation step, without projection, we also evaluate the forecasts $\hy_t$  (which do not necessary satisfy the hierarchical constraints) -- this strategy is called ``Aggregation''.
Finally, the strategy ``Aggregation + Projection" provides the predictions $\ty_t= \Pi_\K( \hy_t)$.
To allow for an evaluation of the accuracy of the prediction of some time series only, we define  the prediction error $E_T(\Lambda)$, for some subset of nodes $\Lambda \subset \Gamma$. 
In the results below, this subset can be equal to $\Gamma$ (to evaluate the strategies on all the nodes), to the singleton $\{ \mathcal{I} \}$ (to focus on the global consumption -- namely the consumption of all the households), or to the set of leaves of the tree associated with the considered segmentation(s), denoted by $\Gamma_0$ (to evaluate the performance of local forecasts only).
Note that $E_T (\Gamma)$ will correspond to $\tilde{L}_T\times |\Gamma|$ for the ``Aggregation + Projection'' strategy (see Equation~\ref{eq:loss}).
We now define, for any subset $\Lambda \subset \Gamma$, the prediction error $E_T ( \Lambda)$.
First of all, for a node $\gamma \in \Lambda$ and an instance $t$, let us denote by $\varepsilon_t^\gamma$ the instantaneous squared error.
It corresponds to $\big(y_t^\gamma-x_t^\gamma\big)^2$ for the  ``Benchmark" strategy, to $ \Big(y_t^\gamma-\big(\Pi_\K (\x_t) \big)^\gamma\Big)^2$ for ``Projection'', to $\big(y_t^\gamma-\hat{y}_t^\gamma\big)^2$ for ``Aggregation", and to $ \big(y_t^\gamma-\tilde{y}_t^\gamma\big)^2$ for the ``Aggregation + Projection'' strategy.
We then consider the average (over time) squared error (which is cumulated over $\Lambda$):
\[
E_T (\Lambda) \defeq \sum_{\gamma \in \Lambda} \frac{1}{T}\sum_{t=1}^T  \varepsilon_t^\gamma.
\]
We associate with this error a confidence bound and present our results (see Tables~\ref{tab2_all}--~\ref{tab3}) in the form:
\begin{equation}
\label{eq:error}
E_T (\Lambda) \pm \frac{\sigma_T(\Lambda)}{\sqrt{T}}, \quad \text{where} \quad \sigma_T(\Lambda)^2= \frac{1}{T} \sum_{t=1}^T \sum_{\gamma \in \Lambda} \big( \varepsilon_t^\gamma- E_T(\Lambda)\big)^2.
\end{equation}
We choose the quantity  $ \sigma_T(\Lambda)/\sqrt{T}$
as it is reminiscent of the error margin provided by asymptotic confidence intervals on the mean of independent and identically distributed random variables.
In the next paragraph, we consider the ``Region + NMF(16)"  configuration and, for each of the three benchmark forecasting methods of Section~\ref{sec:features} and for each of the three aggregation algorithms presented in Section~\ref{sec:aggregalgo},
we compute these errors and confidence bounds for the four above foresting strategies.
Finally, in the last paragraph, we set the benchmark forecasting method  (generalized additive model) and the aggregation algorithm (ML-Pol) to test various households clusterings.

\subsubsection{Impact of the Benchmark Forecasting Methods and of the Aggregation Algorithms}
\begin{table}[tp!]
\centering
\begin{tabular}{l R{2cm} R{2cm} R{2cm}}
\hline
& \textbf{NL-Ridge} & \textbf{ML-Pol} & \textbf{BOA} \\
\hline
\hline
\textbf{General Additive Model } & & & \\
\hline
 \vspace{-1.1cm} \\
\vspace{-.3cm}
& \multicolumn{3}{c}{
$\underbrace{ \qquad  \qquad \qquad \qquad \qquad  \qquad \qquad \qquad \qquad }_{}$}\\
Benchmark & \multicolumn{3}{c}{$ \,455.5 \pm 1.1$} \\
Projection &   \multicolumn{3}{c}{$ \,450.7  \pm 1.1$}\\ 
\hdashline
Aggregation & $407.6 \pm 1.1$ & $397.9 \pm 1.0$ & $406.0 \pm 1.0$ \\
Aggregation + Projection & $405.9 \pm 1.1$ &  \cellcolor{Gray2} $396.0 \pm 1.0$ & $403.5 \pm 1.0$ \\
\hline
\hline
\textbf{Random Forest} & & & \\
\hline
 \vspace{-1.1cm} \\
\vspace{-.3cm}
& \multicolumn{3}{c}{
$\underbrace{ \qquad  \qquad \qquad \qquad \qquad  \qquad \qquad \qquad \qquad }_{}$}\\
Benchmark &  \multicolumn{3}{c}{$ \,528.1 \pm 1.0$} \\
Projection &  \multicolumn{3}{c}{$  \,500.8 \pm 1.0$} \\
\hdashline
Aggregation & $459.3 \pm 1.0$ & $467.3 \pm 1.0$ & $470.9 \pm 1.0$ \\
Aggregation + Projection &  \cellcolor{Gray}  $451.1 \pm 1.0$ & $464.0 \pm 1.0$ & $468.1 \pm 1.0$ \\
\hline
\hline
\textbf{Auto-Regressive Model} & & & \\
\hline
 \vspace{-1.1cm} \\
\vspace{-.3cm}
& \multicolumn{3}{c}{
$\underbrace{ \qquad  \qquad \qquad \qquad \qquad  \qquad \qquad \qquad \qquad }_{}$}\\
Benchmark & \multicolumn{3}{c}{$ \,736.4  \pm  1.6$}  \\
Projection & \multicolumn{3}{c}{$ \,734.3 \pm 1.6$}  \\
\hdashline
Aggregation & $690.7 \pm 1.6$ & $690.1 \pm 1.6$ & $698.2 \pm 1.6$ \\
Aggregation + Projection & $689.8 \pm 1.6$ &  \cellcolor{Gray} $687.3 \pm 1.6$ & $693.1 \pm 1.6$ \\
\hline
\end{tabular}
\caption{$E_T (\Gamma) \pm \sigma_T(\Gamma)/\sqrt{T}$ (see Equation~\ref{eq:error})  where $\Gamma$ refers to the set of nodes associated with ``Region + NMF (16)'' clustering, for the three benchmark forecasting methods of Section~\ref{sec:features} (General Additive Model, Random Forest and Auto-Regressive Model), for the three aggregation algorithms of Section~\ref{sec:aggregalgo} (NL-Ridge, ML-Pol and BOA) and for the four strategies defined in Subsection~\ref{subsec:results} (``Benchmark", ``Projection", ``Agregation" and ``Aggregation + Projection"). $E_T (\Gamma)$ corresponds to $\tilde{L}_T\times |\Gamma|$ for the ``Aggregation + Projection'' strategy.
For strategies ``Benchmark" and ``Projection", the forecasts do not depend on the chosen aggregation algorithm, so the errors and the confidence bounds are the same for the three algorithms.
The dark gray area corresponds to the best prediction error of the table and the light gray area to the best one, for a given benchmark forecasting method.}
\label{tab2_all}
\end{table}
We consider here the two crossed hierarchies ``Region + NMF (16)'' and we vary the benchmark forecasting approaches and the aggregation algorithms.
Indeed we compute forecasts for the three methods of Section~\ref{sec:features} -- auto-regressive model, generalized additive model and random forest -- and for the three algorithms of Section~\ref{sec:aggregalgo} --  NL-Ridge and BOA and ML-Pol.
Table~\ref{tab2_all} sums up $E_T (\Gamma) \pm \sigma_T(\Gamma)/\sqrt{T}$, where $\Gamma$ refers to the set of nodes associated with ``Region + NMF (16)''.
Regarding forecasting methods, the general additive model provides the best benchmark predictions and the auto-regressive model, which is the most naive method, does not perform well. 
This was actually already illustrated in Figures~\ref{fig:benchmark} and~\ref{fig:benchmark_err}.
Moreover, as the theory guarantees, projection (with or without an aggregation step) always improves the forecasts. 
The projection step without aggregation leads to a decrease of prediction error of around $1\%$ for the general additive and auto-regressive models and of $5\%$ for random forest. 
Note that for parametric (or semi-parametric) methods, the model is assumed to be the same at all nodes. Forecasts are thus closely linked and seem to almost already satisfy the hierarchical constraints.
On the contrary, for random forest methods, the forecasts seem less correlated and thus projection improves significantly the predictions.
The impact of aggregation step is notable: the prediction error decreases by about $10\%$ for NL-Ridge and BOA and by about $15\%$ for ML-Pol.
Finally, our global strategy always gives the best forecasts, which, in addition, satisfy the hierarchical constraints. \\

\begin{table}[tp!]
\centering
\begin{tabular}{l R{2cm} R{2cm} R{2cm}}
\hline
& \textbf{NL-Ridge} & \textbf{ML-Pol} & \textbf{BOA} \\
\hline
\hline
\textbf{General Additive Model} & & & \\
\hline
\vspace{-1.1cm} \\
\vspace{-.3cm}
& \multicolumn{3}{c}{
$\underbrace{  \qquad \qquad \qquad \qquad \qquad  \qquad \qquad \qquad \qquad }_{}$}\\
Benchmark & \multicolumn{3}{c}{$ \, 205.8 \pm 9.3$}  \\
Projection & \multicolumn{3}{c}{$ \,200.8  \pm 9.2$} \\
\hdashline
Aggregation & $179.2 \pm 8.9$ & $172.0 \pm 8.6$ & $178.8 \pm 8.8$ \\
Aggregation + Projection & $177.6 \pm 8.8$ & \cellcolor{Gray2} $170.3 \pm 8.5$ & $176.3 \pm 8.7$ \\
\hline
\hline
\textbf{Random Forest} & & & \\
\hline
\vspace{-1.1cm} \\
\vspace{-.3cm}
& \multicolumn{3}{c}{
$\underbrace{ \qquad  \qquad \qquad \qquad \qquad  \qquad \qquad \qquad \qquad }_{}$}\\
Benchmark & \multicolumn{3}{c}{$  \,231.4 \pm 8.6$} \\
Projection & \multicolumn{3}{c}{$ \, 228.8 \pm 8.2$} \\
\hdashline
Aggregation  & $207.1 \pm 8.4$ & $214.8 \pm 8.4$ & $218.7 \pm 8.3$ \\
Aggregation + Projection & \cellcolor{Gray} $206.4 \pm 8.2$ & $212.4 \pm 8.1$ & $216.8 \pm 8.2$ \\
\hline
\textbf{Auto-Regressive Model} & & & \\
\hline
\vspace{-1.1cm} \\
\vspace{-.3cm}
& \multicolumn{3}{c}{
$\underbrace{ \qquad  \qquad \qquad \qquad \qquad  \qquad \qquad \qquad \qquad }_{}$}\\
Benchmark &\multicolumn{3}{c}{$ \,380.3 \pm 13.4$} \\
Projection & \multicolumn{3}{c}{$ \,380.4\pm  13.4$} \\
\hdashline
Aggregation &  $368.6 \pm 13.5$ & $370.8 \pm 13.6$ & $376.1 \pm 13.6$ \\
Aggregation + Projection &  \cellcolor{Gray}  $368.2 \pm 13.4$ & $369.4 \pm 13.5$ & $373.6 \pm 13.5$ \\
\hline
\end{tabular}
\caption{$E_T (\{\mathcal{I}\}) \pm \sigma_T(\{\mathcal{I}\})/\sqrt{T}$ (see Equation~\ref{eq:error}) for ``Region + NMF (16)'' clustering, for the three benchmark forecasting methods of Section~\ref{sec:features} (General Additive Model, Random Forest and Auto-Regressive Model), for the three aggregation algorithms of Section~\ref{sec:aggregalgo} (NL-Ridge, ML-Pol and BOA) and for the four strategies defined in Subsection~\ref{subsec:results} (``Benchmark", ``Projection", ``Agregation" and ``Aggregation + Projection"). The prediction error $E_T (\{\mathcal{I}\}) $ corresponds to the mean squared error (over the testing period) of the global consumption.
For strategies ``Benchmark" and ``Projection", the forecasts do not depend on the chosen aggregation algorithm, so the errors and the confidence bounds are the same for the three algorithms.
The dark gray area corresponds to the best prediction error of the table and the light gray area to the best one, for a given benchmark forecasting method.}
\label{tab2_total}
\end{table}
\begin{table}[tp!]
\centering
\begin{tabular}{l R{2cm} R{2cm} R{2cm}}
\hline
& \textbf{NL-Ridge} & \textbf{ML-Pol} & \textbf{BOA} \\
\hline
\hline
\textbf{General Additive Model} & & & \\
\hline
\vspace{-1.1cm} \\
\vspace{-.3cm}
& \multicolumn{3}{c}{
$\underbrace{ \qquad  \qquad \qquad \qquad \qquad  \qquad \qquad \qquad \qquad }_{}$}\\
Benchmark & \multicolumn{3}{c}{$ \,\,\,\, \, 66.3  \pm  0.1$} \\
Projection   &\multicolumn{3}{c}{$ \,\,\,\, \, 66.3   \pm  0.1$} \\
\hdashline
Aggregation & $61.6 \pm 0.1$ & $61.2 \pm 0.1$ &  \cellcolor{Gray2} $61.0 \pm 0.1$ \\
Aggregation + Projection & $61.5 \pm 0.1$ & $61.1 \pm 0.1$ &  \cellcolor{Gray2} $61.0 \pm 0.1$ \\
\hline
\hline
\textbf{Random Forest} & & & \\
\hline
\vspace{-1.1cm} \\
\vspace{-.3cm}
& \multicolumn{3}{c}{
$\underbrace{ \qquad  \qquad \qquad \qquad \qquad  \qquad \qquad \qquad \qquad }_{}$}\\
Benchmark & \multicolumn{3}{c}{$\,\,\,\, \, 78.7  \pm 0.1$} \\
Projection & \multicolumn{3}{c}{$ \, \,\,\, \,68.9  \pm  0.1$} \\
\hdashline
Aggregation &  $66.8 \pm 0.1$ & $65.7 \pm 0.1$ & $64.8 \pm 0.1$ \\
Aggregation + Projection &  \cellcolor{Gray}  $63.9 \pm 0.1$ & $65.7 \pm 0.1$ & $64.8 \pm 0.1$ \\
\hline
\hline
\textbf{Auto-Regressive Model} & & & \\
\hline
\vspace{-1.1cm} \\
\vspace{-.3cm}
& \multicolumn{3}{c}{
$\underbrace{ \qquad  \qquad \qquad \qquad \qquad  \qquad \qquad \qquad \qquad }_{}$}\\
Benchmark &\multicolumn{3}{c}{$ \,\,\,\,  \,84.4  \pm   0.1$} \\
Projection & \multicolumn{3}{c}{$\,\,\,\,  \,84.3  \pm 0.1$} \\
\hdashline
Aggregation &  $73.8 \pm 0.1$ & $72.8 \pm 0.1$ & $73.2 \pm 0.1$\\ 
Aggregation + Projection & $73.8 \pm 0.1$ &  \cellcolor{Gray} $72.0 \pm 0.1$ & \cellcolor{Gray}  $72.0 \pm 0.1$ \\
\hline
\end{tabular}
\caption{$E_T (\Gamma_0) \pm \sigma_T(\Gamma_0)/\sqrt{T}$ (see Equation~\ref{eq:error})  where $\Gamma_0$ refers to the set of leaves associated with ``Region + NMF (16)'' clustering, for the three benchmark forecasting methods of Section~\ref{sec:features} (General Additive Model, Random Forest and Auto-Regressive Model), for the three aggregation algorithms of Section~\ref{sec:aggregalgo} (NL-Ridge, ML-Pol and BOA) and for the four strategies defined in Subsection~\ref{subsec:results} (``Benchmark", ``Projection", ``Agregation" and ``Aggregation + Projection"). $E_T (\Gamma_0)$ corresponds to a prediction errors associated with local consumptions forecasts.
For strategies ``Benchmark" and ``Projection", the forecasts do not depend on the chosen aggregation algorithm, so the errors and the confidence bounds are the same for the three algorithms.
The dark gray area corresponds to the best prediction error of the table and the light gray area to the best one, for a given benchmark forecasting method.}
\label{tab2_leaves}
\end{table}

Even though theoretical guarantees (see Theorem~\ref{gen-bound}) are only ensured for errors summed over all nodes, we investigate the impact of our methods on global consumption predictions and on most local predictions (\textit{i.e.}, predictions at leaves).
Thus, Tables~\ref{tab2_total} and~\ref{tab2_leaves} contain $E_T \big(\{\mathcal{I}\}\big) \pm \sigma_T(\{\mathcal{I}\})/\sqrt{T}$  and 
 $E_T (\Gamma_0) \pm \sigma_T(\Gamma_0)/\sqrt{T}$ (where $\Gamma_0$ is the set of leaves), respectively.
By denoting by $R_1,\dots,R_N$, the $N$ regions and by $C_1, \dots C_{16}$, the $16$ clusters provided by ``NMF (16)", we have, in this ``Region + NMF~(16)'' configuration, $\Gamma_0 \defeq  \big\{C_{\ell} \cap R_n \big\}_{1\leq \ell \leq 16,\, 1 \leq n \leq N}$.
Concerning global consumption, a mere projection
 improves the forecasts,  except in the case of auto-regressive model and, in all cases, our strategy ``Aggregation + Projection" outperforms the three strategies ``Benchmark", ``Aggregation" and ``Projection".
The prediction error associated with $\Gamma_0$ also decreases thanks to our procedure.
Therefore, our method improves the forecasting of both global and local power consumptions.
Finally, Figure~\ref{fig:res} represents the global power consumption on the three last day of the testing period and the daily average signed error on the last week for the four forecasts obtained with features generated with general additive model and aggregated with ML-Pol algorithm.
The distributions of the daily mean squared errors for these strategies are represented in Figure~\ref{fig:res_err}.
We draw the same conclusions for the daily prediction errors as for the average error on the entire test period (three months):  aggregation greatly improves the forecasts, projection does too, but to a lesser extent. The box plots show that the variance of the error also decreases after the aggregation step.

\begin{figure}[tp!]
\center
\includegraphics[width=0.4945\columnwidth]{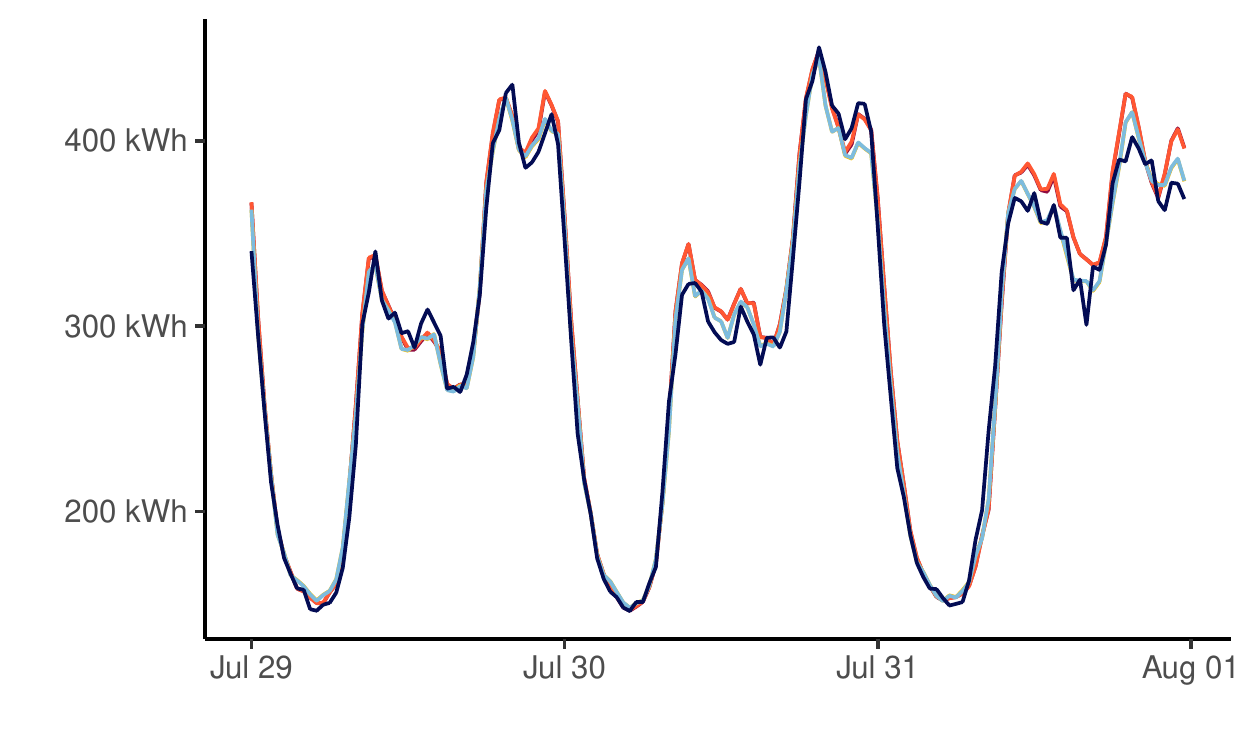} 
\includegraphics[width=0.4945\columnwidth]{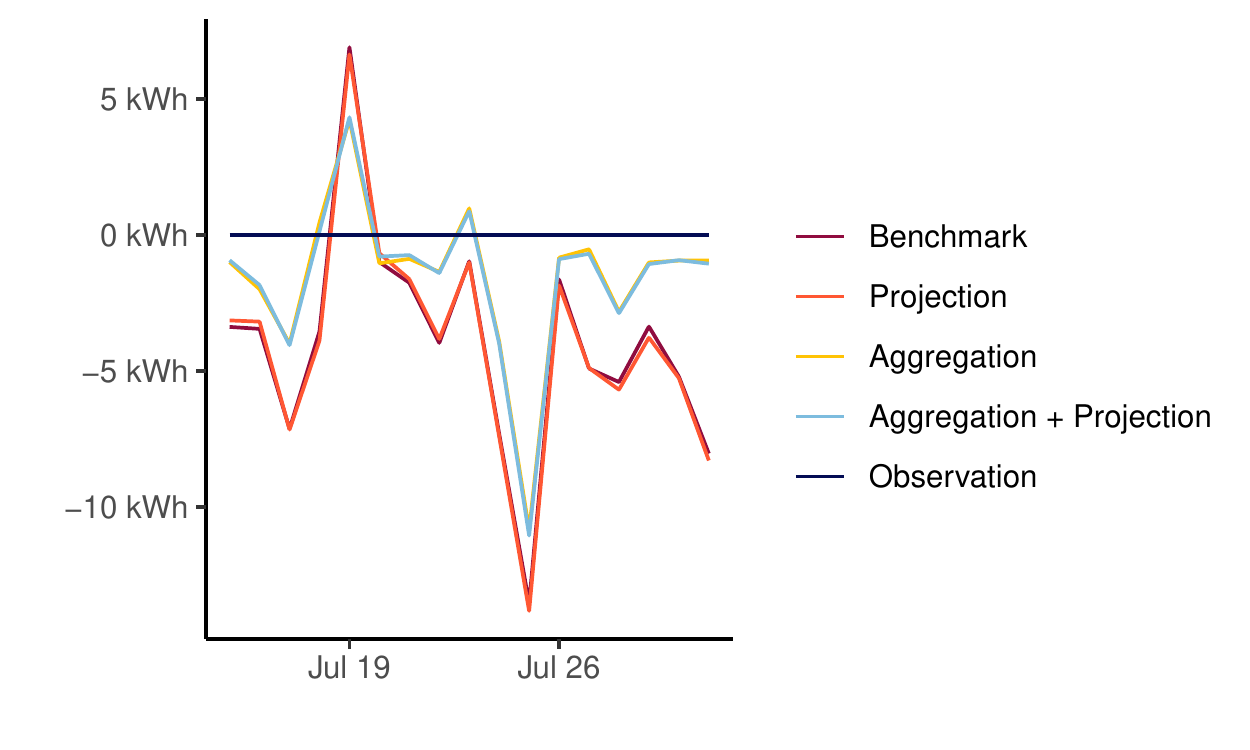} 
\caption{
Left picture: forecasts associated with the four strategies defined in Subsection~\ref{subsec:results} (``Benchmark", ``Projection", ``Agregation" and ``Aggregation + Projection"), with benchmark forecasts generated with the generalized additive model and aggregated with ML-Pol algorithm in the ``Region + NMF(16)" configuration,  and observations of global consumption ($\gamma=\mathcal{I}$) at half-hour intervals on the last three days of the test period. Right picture: corresponding daily average signed errors on the last week of the test period.}
\label{fig:res}
\end{figure}
\begin{figure}[t!]
\center 
\includegraphics[width=0.4945\columnwidth]{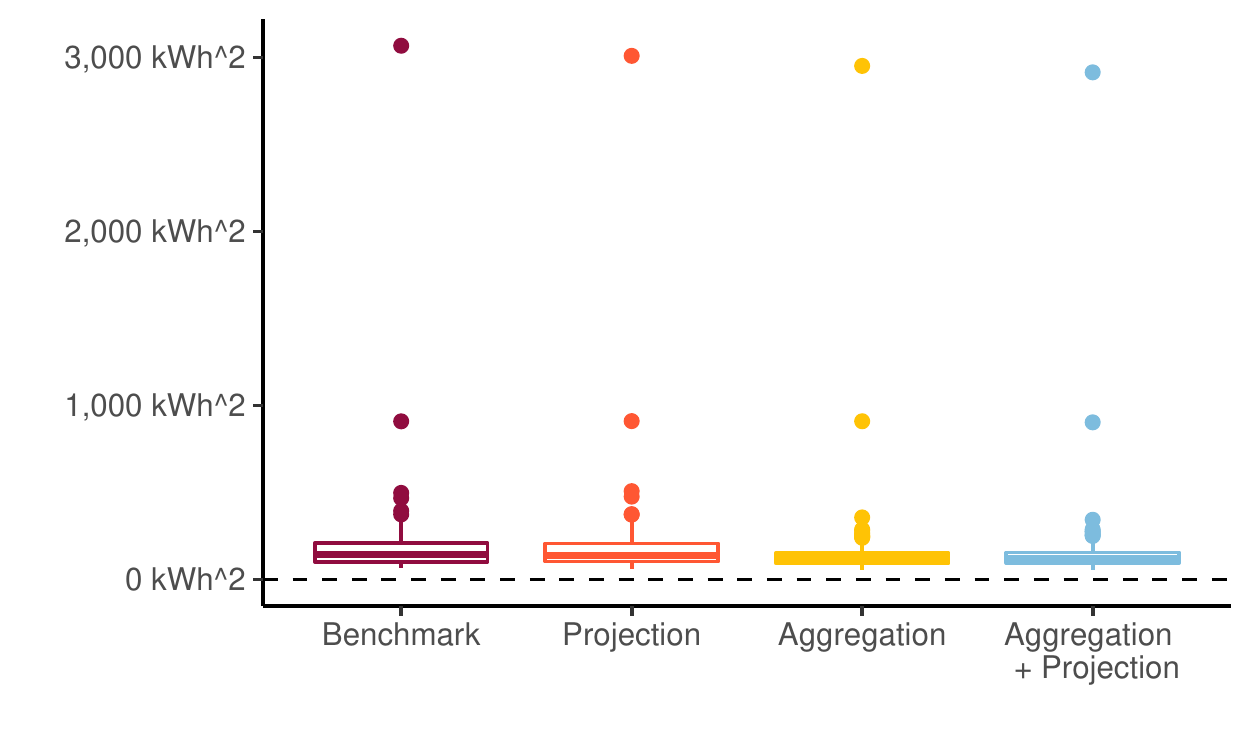} 
\includegraphics[width=0.4945\columnwidth]{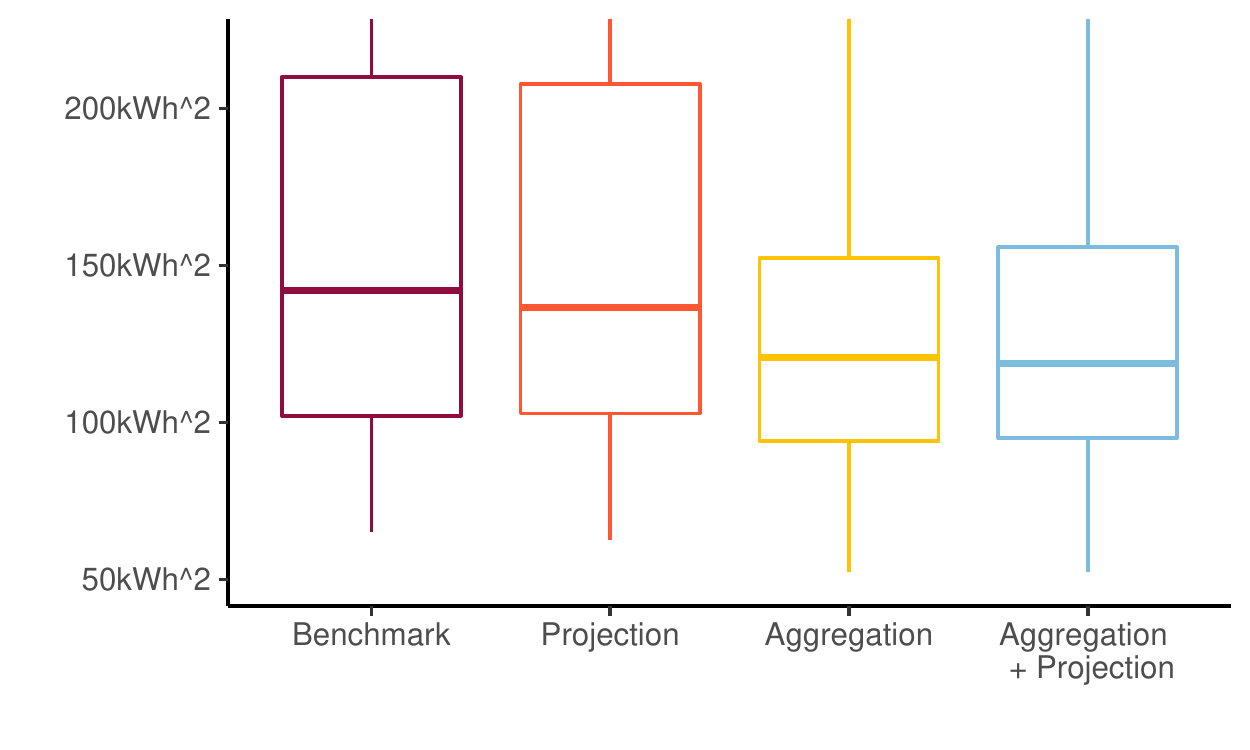} 
\caption{Distribution over the test period of daily mean squared error of global consumption for the four strategies defined in Subsection~\ref{subsec:results} (``Benchmark", ``Projection", ``Agregation" and ``Aggregation + Projection"), with benchmark forecasts generated with the generalized additive model and aggregated with ML-Pol algorithm in the ``Region + NMF(16)" configuration. Left picture: original boxplots. Right picture: boxplots trimmed at 220 $\mathrm{kWh}^2$.}
\label{fig:res_err}
\end{figure}

\subsubsection{Impact of the Clustering}
\begin{table}[tp!]
\centering
  \begin{adjustbox}{width=\columnwidth}
\begin{tabular}{l crrrr}
\hline
Clustering & Benchmark & Bottom-up & Projection & Aggregation &  Aggregation \\
&  & &  & & + Projection \\
\midrule
Region & $205.8 \pm 9.3$ & \cellcolor{Gray} $189.9 \pm 8.3$ & $201.3 \pm 9.1$ &  $187.8 \pm 8.4$ & $186.7 \pm 8.4$ \\
\textbf{Region + Acorn} & \textbf{---} & $\mathbf{194.2 \pm 8.4}$ & $\mathbf{200.8 \pm 9.2}$ &  $\mathbf{182.5 \pm 8.3}$ & $\mathbf{181.2 \pm 8.3}$ \\
Acorn & ---  & $205.7 \pm 9.5$ & $205.0 \pm 9.3$ &  $203.3 \pm 9.3$ & $202.9 \pm 9.3$ \\
\textbf{Region + Fuel + Tariff} & \textbf{---}  & $\mathbf{199.1 \pm 8.7}$ & $\mathbf{201.2 \pm 9.2}$ &  $\mathbf{185.4 \pm 8.6}$ & $\mathbf{184.1 \pm 8.6}$ \\
Fuel + Tariff & ---  & $207.1 \pm 9.7$ & $205.5 \pm 9.4$ &  $201.5 \pm 9.4$ & $201.4 \pm 9.5$ \\
\textbf{Region + Random (4)} & \textbf{---}  & $\mathbf{198.4 \pm 8.7}$ & $\mathbf{201.3 \pm 9.2}$ &  $\mathbf{186.1 \pm 8.6}$ & $\mathbf{184.6 \pm 8.6}$ \\
Random (4) & ---   &  $208.0 \pm 9.7$ & $205.7 \pm 9.4$ &  $199.5 \pm 9.4$ & $199.7 \pm 9.4$ \\
\textbf{Region + Random (8)} & \textbf{---} & $\mathbf{202.3 \pm 8.7}$ & $\mathbf{201.3 \pm 9.2}$ &  $\mathbf{182.4 \pm 8.7}$ & $\mathbf{181.0 \pm 8.7}$ \\
Random (8) & ---  & $212.9 \pm 9.8$ & $205.7 \pm 9.3$ &  $194.4 \pm 9.1$ & $194.4 \pm 9.1$ \\
\textbf{Region + Random (16)} & \textbf{---}   & $\mathbf{205.1 \pm 8.7}$ & $\mathbf{201.3 \pm 9.2}$ &  $\mathbf{180.5 \pm 8.7}$ & $\mathbf{178.8 \pm 8.7}$ \\
Random (16) & --- & $218.4 \pm 10.0$ & $205.7 \pm 9.3$ &  $188.6 \pm 8.7$ & $188.5 \pm 8.7$ \\
\textbf{Region + Random (32)} & \textbf{---} & $\mathbf{205.3 \pm 8.5}$ & $\mathbf{201.2 \pm 9.2}$ &  $\mathbf{180.4 \pm 8.8}$ & $\mathbf{178.9 \pm 8.7}$ \\
Random (32) & --- & $222.9 \pm 10.1$ & $205.6 \pm 9.3$ &  $189.6 \pm 8.7$ & $189.5 \pm 8.7$ \\
Random (64) & ---  & $222.9 \pm 9.8$ & $205.6 \pm 9.3$ &  $185.7 \pm 8.8$ & $185.5 \pm 8.8$ \\
\textbf{Region + NMF (4)} & \textbf{---}   & $\mathbf{196.0 \pm 8.6}$ & $\mathbf{200.8 \pm 9.2}$ &  $\mathbf{187.4 \pm 9.1}$ & $\mathbf{185.5 \pm 8.9}$ \\
NMF (4) & ---   & $205.7 \pm 9.5$ & $205.0 \pm 9.3$ &  $197.0 \pm 8.8$ & $196.8 \pm 8.9$ \\
\textbf{Region + NMF (8)} & \textbf{---} & $\mathbf{197.2 \pm 8.5}$ & \cellcolor{Gray} $\mathbf{200.7 \pm 9.2}$ &  $\mathbf{176.4 \pm 8.9}$ & $\mathbf{174.1 \pm 8.8}$ \\
NMF (8) &---  & $206.7 \pm 9.6$ & $205.0 \pm 9.3$ &  $186.1 \pm 8.9$ & $185.7 \pm 8.9$ \\
\textbf{Region + NMF (16)} & \textbf{---}  & $\mathbf{201.0 \pm 8.5}$ & $\mathbf{200.8 \pm 9.2}$ & \cellcolor{Gray}   $\mathbf{172.0 \pm 8.6}$ & \cellcolor{Gray2} $\mathbf{170.3 \pm 8.5}$ \\
NMF (16) & --- & $208.4 \pm 9.6$ & $205.2 \pm 9.3$ &  $179.3 \pm 8.4$ & $179.3 \pm 8.4$ \\
\textbf{Region + NMF (32)} & \textbf{---}   & $\mathbf{204.1 \pm 8.5}$ & $\mathbf{201.0 \pm 9.1}$ &  $\mathbf{173.2 \pm 8.7}$ & $\mathbf{171.5 \pm 8.6}$ \\
NMF (32) & --- & $211.1 \pm 9.6$ & $205.4 \pm 9.3$ &  $179.7 \pm 8.8$ & $179.5 \pm 8.8$ \\
NMF (64) & --- & $214.9 \pm 9.4$ & $205.6 \pm 9.3$ &  $181.9 \pm 8.6$ & $181.7 \pm 8.6$ \\
\hline
\end{tabular}
\end{adjustbox}
\caption{$E_T (\{\mathcal{I}\}) \pm \sigma_T(\{\mathcal{I}\})/\sqrt{T}$ (see Equation~\ref{eq:error}) for the fives strategies defined in Subsection~\ref{subsec:results} (``Benchmark", ``Bottom-up, ``Projection", ``Agregation" and ``Aggregation + Projection"), with benchmark predictions ($x^{\{\mathcal{I}\}}_t$ that are the same for all clusterings) made with General Additive Models and  aggregated with ML-Pol algorithm, for many segmentations (defined in Subsection~\ref{subsec:clustering}). The prediction error $E_T (\{\mathcal{I}\}) $ corresponds to the mean squared error (over the testing period) of the global consumption.
The dark gray area corresponds to the best prediction error of the table and the light gray area to the best one, for a given strategy.}
\label{tab3}
\end{table}
We now assess the impact of household segmentation on the quality of our predictions.
In view of the foregoing, we set the aggregation algorithm to ML-Pol and the benchmark forecasting method to the general additive model. 
As clusters change from a segmentation to another, the associated sets of nodes $\Gamma$ also change.  Errors related to $\Gamma$ or $\Gamma_0$ can therefore not be  compared from a segmentation to another.
We thus focus here on the global consumption (namely, we compute errors related to $\{\mathcal{I}\}$).
We compare our methods to a naive bottom-up strategy:
at each instance $t$, we forecast the global consumption $y^{\{\mathcal{I}\}}_t$ with the sum of local consumptions $\sum_{\gamma \in \Gamma_0} x_t^\gamma$ -- in lieu of the benchmark predictions $x^{\{\mathcal{I}\}}_t$.
Table~\ref{tab3} contains the prediction errors and the confidence bounds for the five strategies and for several household segmentations.
For the ``Bottom-up" strategy, the geographical clustering ``Region" provides the lowest prediction error, that are much better than the one of benchmark forecasts.  While when a single clustering based on household profiles or generated randomly is considered,  the benchmark forecasts $x^{\{\mathcal{I}\}}_t$ are more relevant -- in terms of mean squared error.
Thus, taking into account regional consumptions, which depend on local meteorological variables, improves prediction. 
In the same way, projection significantly improves the forecasts when the regions are taken into account.
Moreover,
for a fixed number of clusters -- for example, we compare  ``Fuel+Tariff", ``Random (4) and``NMF (4)" --
 the aggregation step seems more efficient when clusters present 
 different consumption profiles (see Figures~\ref{fig:conso_week_random}~-~\ref{fig:conso_week_nmf}).
Indeed, aggregation provides much better performance for ``NMF $(4)$" than for ``Random $(4)$". 
As we had anticipated, contrary to ``NMF" and ``Region",  clusterings ``Acorn" and ``Fuel + Tariff", that do not seem to detect consumption profiles, perform as well as ``Random".
When the number of clusters becomes too large, the performance of the strategy stagnates or even decreases. Typically for ``Random" or ``NMF", a number of clusters equals to $32$ or $64$ does not seem to improve the results compared to smaller numbers $4$, $8$ or $16$.
Another result is that aggregation and projection are robust to large number of clusters. Indeed, the performance are good for a sufficiently large number of clusters but does not decrease too much with the number of clusters -- either for `` Random" or ``NMF" clusterings.
Finally, our strategy ``Aggregation + Projection"
always outperforms the other four (``Bottom-up", ``Benchmark", ``Projection" and ``Aggregation") 
  and the  ``Region + NMF $(16)$" clustering  reaches the lowest prediction error.


\bibliography{ref}

\begin{thebibliography}{38}
\expandafter\ifx\csname natexlab\endcsname\relax\def\natexlab#1{#1}\fi
\providecommand{\bibinfo}[2]{#2}
\ifx\xfnm\relax \def\xfnm[#1]{\unskip,\space#1}\fi
\bibitem[{AECOM(2018)}]{AECOMEnergyDemandResearch2018}
\bibinfo{author}{AECOM} (\bibinfo{year}{2018}).
\newblock \bibinfo{title}{Energy {{demand research project}}: {{early smart
  meter trials}}, 2007-2010}, .
\bibitem[{Amat et~al.(2018)Amat, Michalski \& Stoltz}]{amat2018fundamentals}
\bibinfo{author}{Amat, C.}, \bibinfo{author}{Michalski, T.}, \&
  \bibinfo{author}{Stoltz, G.} (\bibinfo{year}{2018}).
\newblock \bibinfo{title}{Fundamentals and {{exchange rate forecastability}}
  with {{simple machine learning methods}}}.
\newblock {\it \bibinfo{journal}{Journal of International Money and
  Finance}\/},  {\it \bibinfo{volume}{88}\/}, \bibinfo{pages}{1--24}.
\bibitem[{Auder et~al.(2018)Auder, Cugliari, Goude \&
  Poggi}]{auder2018scalable}
\bibinfo{author}{Auder, B.}, \bibinfo{author}{Cugliari, J.},
  \bibinfo{author}{Goude, Y.}, \& \bibinfo{author}{Poggi, J.-M.}
  (\bibinfo{year}{2018}).
\newblock \bibinfo{title}{Scalable clustering of individual electrical curves
  for profiling and bottom-up forecasting}.
\newblock {\it \bibinfo{journal}{Energies}\/},  {\it \bibinfo{volume}{11}\/},
  \bibinfo{pages}{1893}.
\bibitem[{Azoury \& Warmuth(2001)}]{AzouryWarmuth2001}
\bibinfo{author}{Azoury, K.~S.}, \& \bibinfo{author}{Warmuth, M.~K.}
  (\bibinfo{year}{2001}).
\newblock \bibinfo{title}{Relative loss bounds for on-line density estimation
  with the exponential family of distributions}.
\newblock {\it \bibinfo{journal}{Machine Learning}\/},  {\it
  \bibinfo{volume}{43}\/}, \bibinfo{pages}{211--246}.
\bibitem[{Breiman(2001)}]{breiman2001random}
\bibinfo{author}{Breiman, L.} (\bibinfo{year}{2001}).
\newblock \bibinfo{title}{Random {{forests}}}.
\newblock {\it \bibinfo{journal}{Machine learning}\/},  {\it
  \bibinfo{volume}{45}\/}, \bibinfo{pages}{5--32}.
\bibitem[{Breiman et~al.(1984)Breiman, Friedman, Olshen \&
  Stone}]{BreimanFOS84}
\bibinfo{author}{Breiman, L.}, \bibinfo{author}{Friedman, J.~H.},
  \bibinfo{author}{Olshen, R.~A.}, \& \bibinfo{author}{Stone, C.~J.}
  (\bibinfo{year}{1984}).
\newblock {\it \bibinfo{title}{Classification and Regression Trees}\/}.
\newblock \bibinfo{publisher}{{Wadsworth}}.
\bibitem[{{Cesa-Bianchi} \& Lugosi(2006)}]{Cesa-Bianchi2006}
\bibinfo{author}{{Cesa-Bianchi}, N.}, \& \bibinfo{author}{Lugosi, G.}
  (\bibinfo{year}{2006}).
\newblock {\it \bibinfo{title}{Prediction, {{Learning}}, and {{Games}}}\/}.
\newblock \bibinfo{publisher}{{Cambridge University Press}}.
\bibitem[{Cover(1991)}]{Cover1991}
\bibinfo{author}{Cover, T.~M.} (\bibinfo{year}{1991}).
\newblock \bibinfo{title}{Universal {{Portfolios}}}.
\newblock {\it \bibinfo{journal}{Mathematical Finance}\/},  {\it
  \bibinfo{volume}{1}\/}, \bibinfo{pages}{1--29}.
\bibitem[{Deswarte et~al.(2018)Deswarte, Gervais, Stoltz \&
  Da~Veiga}]{deswarte:hal-01939813}
\bibinfo{author}{Deswarte, R.}, \bibinfo{author}{Gervais, V.},
  \bibinfo{author}{Stoltz, G.}, \& \bibinfo{author}{Da~Veiga, S.}
  (\bibinfo{year}{2018}).
\newblock \bibinfo{title}{Sequential {{model aggregation}} for {{production
  forecasting}}}, .
\newblock \bibinfo{note}{Working paper or preprint}.
\bibitem[{Devaine et~al.(2013)Devaine, Gaillard, Goude \&
  Stoltz}]{devaine2013forecasting}
\bibinfo{author}{Devaine, M.}, \bibinfo{author}{Gaillard, P.},
  \bibinfo{author}{Goude, Y.}, \& \bibinfo{author}{Stoltz, G.}
  (\bibinfo{year}{2013}).
\newblock \bibinfo{title}{Forecasting {{electricity consumption}} by
  {{aggregating specialized experts}}}.
\newblock {\it \bibinfo{journal}{Machine Learning}\/},  {\it
  \bibinfo{volume}{90}\/}, \bibinfo{pages}{231--260}.
\bibitem[{Dunn et~al.(1976)Dunn, Williams \& DeChaine}]{Dunn1976}
\bibinfo{author}{Dunn, D.~M.}, \bibinfo{author}{Williams, W.~H.}, \&
  \bibinfo{author}{DeChaine, T.~L.} (\bibinfo{year}{1976}).
\newblock \bibinfo{title}{Aggregate versus subaggregate models in local area
  forecasting}.
\newblock {\it \bibinfo{journal}{Journal of the American Statistical
  Association}\/},  {\it \bibinfo{volume}{71}\/}, \bibinfo{pages}{68--71}.
\bibitem[{Fan \& Hyndman(2011)}]{fan2011short}
\bibinfo{author}{Fan, S.}, \& \bibinfo{author}{Hyndman, R.~J.}
  (\bibinfo{year}{2011}).
\newblock \bibinfo{title}{Short-{{term load forecasting based}} on a
  {{semi}}-{{parametric additive model}}}.
\newblock {\it \bibinfo{journal}{IEEE Transactions on Power Systems}\/},  {\it
  \bibinfo{volume}{27}\/}, \bibinfo{pages}{134--141}.
\bibitem[{Gaillard(2015)}]{Gaillard2015}
\bibinfo{author}{Gaillard, P.} (\bibinfo{year}{2015}).
\newblock {\it \bibinfo{title}{{Contributions to online robust aggregation :
  work on the approximation error and on probabilistic forecasting.
  Applications to forecasting for energy markets.}}\/}.
\newblock \bibinfo{type}{Theses} {Universit{\'e} Paris Sud - Paris XI.
  \url{https://tel.archives-ouvertes.fr/tel-01250027}}.
\bibitem[{Gaillard et~al.(2019)Gaillard, Gerchinovitz, Huard \&
  Stoltz}]{pmlr-v98-gaillard19a}
\bibinfo{author}{Gaillard, P.}, \bibinfo{author}{Gerchinovitz, S.},
  \bibinfo{author}{Huard, M.}, \& \bibinfo{author}{Stoltz, G.}
  (\bibinfo{year}{2019}).
\newblock \bibinfo{title}{Uniform regret bounds over {{$\mathbb{R}^{d}$}} for
  the sequential linear regression problem with the square loss}.
\newblock In \bibinfo{editor}{A.~Garivier}, \& \bibinfo{editor}{S.~Kale}
  (Eds.), {\it \bibinfo{booktitle}{Proceedings of the 30th International
  Conference on Algorithmic Learning Theory}\/} (pp.
  \bibinfo{pages}{404--432}).
\newblock \bibinfo{address}{{Chicago, Illinois}}: \bibinfo{publisher}{{PMLR}}
  volume~\bibinfo{volume}{98} of {\it \bibinfo{series}{Proceedings of Machine
  Learning Research}\/}.
\bibitem[{Gaillard et~al.(2016)Gaillard, Goude \&
  Nedellec}]{GaillardAdditivemodelsrobust2016a}
\bibinfo{author}{Gaillard, P.}, \bibinfo{author}{Goude, Y.}, \&
  \bibinfo{author}{Nedellec, R.} (\bibinfo{year}{2016}).
\newblock \bibinfo{title}{Additive {{models}} and {{robust aggregation}} for
  {{GEFCom2014 probabilistic electric load}} and {{electricity price
  forecasting}}}.
\newblock {\it \bibinfo{journal}{International Journal of Forecasting}\/},
  {\it \bibinfo{volume}{32}\/}, \bibinfo{pages}{1038--1050}.
\bibitem[{Gaillard et~al.(2014)Gaillard, Stoltz \& {van
  Erven}}]{pmlr-v35-gaillard14}
\bibinfo{author}{Gaillard, P.}, \bibinfo{author}{Stoltz, G.}, \&
  \bibinfo{author}{{van Erven}, T.} (\bibinfo{year}{2014}).
\newblock \bibinfo{title}{A second-order bound with excess losses}.
\newblock In \bibinfo{editor}{M.~F. Balcan}, \bibinfo{editor}{V.~Feldman}, \&
  \bibinfo{editor}{C.~Szepesv{\'a}ri} (Eds.), {\it
  \bibinfo{booktitle}{Proceedings of the 27th Conference on Learning Theory}\/}
  (pp. \bibinfo{pages}{176--196}).
\newblock \bibinfo{address}{{Barcelona, Spain}}: \bibinfo{publisher}{{PMLR}}
  volume~\bibinfo{volume}{35} of {\it \bibinfo{series}{Proceedings of Machine
  Learning Research}\/}.
\bibitem[{Goehry et~al.(2019)Goehry, Goude, Massart \&
  Poggi}]{goehry2019aggregation}
\bibinfo{author}{Goehry, B.}, \bibinfo{author}{Goude, Y.},
  \bibinfo{author}{Massart, P.}, \& \bibinfo{author}{Poggi, J.-M.}
  (\bibinfo{year}{2019}).
\newblock \bibinfo{title}{Aggregation of {{multi}}-{{scale experts}} for
  {{bottom}}-up {{load forecasting}}}.
\newblock {\it \bibinfo{journal}{To appear in IEEE Transactions on Smart
  Grids}\/}, .
\bibitem[{Goude et~al.(2014)Goude, Nedellec \& Kong}]{goude2014local}
\bibinfo{author}{Goude, Y.}, \bibinfo{author}{Nedellec, R.}, \&
  \bibinfo{author}{Kong, N.} (\bibinfo{year}{2014}).
\newblock \bibinfo{title}{Local {{short}} and {{middle term electricity load
  forecasting}} with {{semi}}-{{parametric additive models}}}.
\newblock {\it \bibinfo{journal}{IEEE Transactions on Smart Grid}\/},  {\it
  \bibinfo{volume}{5}\/}, \bibinfo{pages}{440--446}.
\bibitem[{Gross \& Sohl(1990)}]{Gross1990}
\bibinfo{author}{Gross, C.~W.}, \& \bibinfo{author}{Sohl, J.~E.}
  (\bibinfo{year}{1990}).
\newblock \bibinfo{title}{Disaggregation methods to expedite product line
  forecasting}.
\newblock {\it \bibinfo{journal}{Journal of Forecasting}\/},  {\it
  \bibinfo{volume}{9}\/}, \bibinfo{pages}{233--254}.
\bibitem[{Hyndman et~al.(2011)Hyndman, Ahmed, Athanasopoulos \&
  Shang}]{Hyndman2011}
\bibinfo{author}{Hyndman, R.~J.}, \bibinfo{author}{Ahmed, R.~A.},
  \bibinfo{author}{Athanasopoulos, G.}, \& \bibinfo{author}{Shang, H.~L.}
  (\bibinfo{year}{2011}).
\newblock \bibinfo{title}{Optimal {{combination forecasts}} for {{hierarchical
  time series}}}.
\newblock {\it \bibinfo{journal}{Computational Statistics and Data
  Analysis}\/},  {\it \bibinfo{volume}{55}\/}, \bibinfo{pages}{2579--2589}.
\bibitem[{Joulani et~al.(2013)Joulani, Gyorgy \&
  Szepesv{\'a}ri}]{joulani2013online}
\bibinfo{author}{Joulani, P.}, \bibinfo{author}{Gyorgy, A.}, \&
  \bibinfo{author}{Szepesv{\'a}ri, C.} (\bibinfo{year}{2013}).
\newblock \bibinfo{title}{Online learning under delayed feedback}.
\newblock In {\it \bibinfo{booktitle}{International Conference on Machine
  Learning}\/} (pp. \bibinfo{pages}{1453--1461}).
\bibitem[{Kivinen \& Warmuth(1997)}]{KivinenExponentiatedGradientGradient1997a}
\bibinfo{author}{Kivinen, J.}, \& \bibinfo{author}{Warmuth, M.~K.}
  (\bibinfo{year}{1997}).
\newblock \bibinfo{title}{Exponentiated {{gradient}} versus {{gradient
  descent}} for {{linear predictors}}}.
\newblock {\it \bibinfo{journal}{Information and Computation}\/},  {\it
  \bibinfo{volume}{132}\/}, \bibinfo{pages}{1--63}.
\bibitem[{Lee \& Seung(1999)}]{lee1999learning}
\bibinfo{author}{Lee, D.~D.}, \& \bibinfo{author}{Seung, H.~S.}
  (\bibinfo{year}{1999}).
\newblock \bibinfo{title}{Learning the {{parts}} of {{objects}} by
  {{non}}-{{negative matrix factorization}}}.
\newblock {\it \bibinfo{journal}{Nature}\/},  {\it \bibinfo{volume}{401}\/},
  \bibinfo{pages}{788}.
\bibitem[{Littlestone \& Warmuth(1994)}]{littlestone1994weighted}
\bibinfo{author}{Littlestone, N.}, \& \bibinfo{author}{Warmuth, M.~K.}
  (\bibinfo{year}{1994}).
\newblock \bibinfo{title}{The weighted majority algorithm}.
\newblock {\it \bibinfo{journal}{Information and computation}\/},  {\it
  \bibinfo{volume}{108}\/}, \bibinfo{pages}{212--261}.
\bibitem[{MacQueen et~al.(1967)}]{macqueen1967some}
\bibinfo{author}{MacQueen, J.} et~al. (\bibinfo{year}{1967}).
\newblock \bibinfo{title}{Some {{methods}} for {{classification}} and
  {{analysis}} of {{multivariate observations}}}.
\newblock In {\it \bibinfo{booktitle}{Proceedings of the {{Fifth Berkeley
  Symposium}} on {{Mathematical Statistics}} and {{Probability}}}\/} (pp.
  \bibinfo{pages}{281--297}).
\newblock \bibinfo{organization}{{Oakland, CA, USA}}
  volume~\bibinfo{volume}{1}.
\bibitem[{Mallet et~al.(2009)Mallet, Stoltz \& Mauricette}]{mallet2009ozone}
\bibinfo{author}{Mallet, V.}, \bibinfo{author}{Stoltz, G.}, \&
  \bibinfo{author}{Mauricette, B.} (\bibinfo{year}{2009}).
\newblock \bibinfo{title}{Ozone {{ensemble forecast}} with {{machine learning
  algorithms}}}.
\newblock {\it \bibinfo{journal}{Journal of Geophysical Research:
  Atmospheres}\/},  {\it \bibinfo{volume}{114}\/}.
\bibitem[{Paatero \& Tapper(1994)}]{paatero1994positive}
\bibinfo{author}{Paatero, P.}, \& \bibinfo{author}{Tapper, U.}
  (\bibinfo{year}{1994}).
\newblock \bibinfo{title}{Positive {{matrix factorization}}: {{a
  non}}-{{negative factor model}} with {{optimal utilization}} of {{error
  estimates}} of {{data values}}}.
\newblock {\it \bibinfo{journal}{Environmetrics}\/},  {\it
  \bibinfo{volume}{5}\/}, \bibinfo{pages}{111--126}.
\bibitem[{Rand(1971)}]{rand1971objective}
\bibinfo{author}{Rand, W.~M.} (\bibinfo{year}{1971}).
\newblock \bibinfo{title}{Objective criteria for the evaluation of clustering
  methods}.
\newblock {\it \bibinfo{journal}{Journal of the American Statistical
  association}\/},  {\it \bibinfo{volume}{66}\/}, \bibinfo{pages}{846--850}.
\bibitem[{Schellong(2011)}]{schellong2011energy}
\bibinfo{author}{Schellong, W.} (\bibinfo{year}{2011}).
\newblock \bibinfo{title}{Energy demand analysis and forecast}.
\newblock {\it \bibinfo{journal}{Energy Management Systems}\/},  (pp.
  \bibinfo{pages}{101--120}).
\bibitem[{Shlifer \& Wolff(1979)}]{ShliferAggregationProrationForecasting1979}
\bibinfo{author}{Shlifer, E.}, \& \bibinfo{author}{Wolff, R.~W.}
  (\bibinfo{year}{1979}).
\newblock \bibinfo{title}{Aggregation and {{proration}} in {{forecasting}}}.
\newblock {\it \bibinfo{journal}{Management Science}\/},  {\it
  \bibinfo{volume}{25}\/}, \bibinfo{pages}{594--603}.
\bibitem[{Taieb et~al.(2017{\natexlab{a}})Taieb, Taylor \&
  Hyndman}]{pmlr-v70-taieb17a}
\bibinfo{author}{Taieb, S.~B.}, \bibinfo{author}{Taylor, J.~W.}, \&
  \bibinfo{author}{Hyndman, R.~J.} (\bibinfo{year}{2017}{\natexlab{a}}).
\newblock \bibinfo{title}{Coherent probabilistic forecasts for hierarchical
  time series}.
\newblock In \bibinfo{editor}{D.~Precup}, \& \bibinfo{editor}{Y.~W. Teh}
  (Eds.), {\it \bibinfo{booktitle}{Proceedings of the 34th International
  Conference on Machine Learning}\/} (pp. \bibinfo{pages}{3348--3357}).
\newblock \bibinfo{address}{{International Convention Centre, Sydney,
  Australia}}: \bibinfo{publisher}{{PMLR}} volume~\bibinfo{volume}{70} of {\it
  \bibinfo{series}{Proceedings of Machine Learning Research}\/}.
\bibitem[{Taieb et~al.(2017{\natexlab{b}})Taieb, Taylor \&
  Hyndman}]{taieb2017coherent}
\bibinfo{author}{Taieb, S.~B.}, \bibinfo{author}{Taylor, J.~W.}, \&
  \bibinfo{author}{Hyndman, R.~J.} (\bibinfo{year}{2017}{\natexlab{b}}).
\newblock \bibinfo{title}{Coherent probabilistic forecasts for hierarchical
  time series}.
\newblock In {\it \bibinfo{booktitle}{Proceedings of the 34th International
  Conference on Machine Learning-Volume 70}\/} (pp.
  \bibinfo{pages}{3348--3357}).
\newblock \bibinfo{organization}{JMLR. org}.
\bibitem[{Taylor(2003)}]{doi:10.1057/palgrave.jors.2601589}
\bibinfo{author}{Taylor, J.~W.} (\bibinfo{year}{2003}).
\newblock \bibinfo{title}{Short-term electricity demand forecasting using
  double seasonal exponential smoothing}.
\newblock {\it \bibinfo{journal}{Journal of the Operational Research
  Society}\/},  {\it \bibinfo{volume}{54}\/}, \bibinfo{pages}{799--805}.
\bibitem[{Van~Erven \& Cugliari(2015)}]{van2015game}
\bibinfo{author}{Van~Erven, T.}, \& \bibinfo{author}{Cugliari, J.}
  (\bibinfo{year}{2015}).
\newblock \bibinfo{title}{Game-{{theoretically optimal reconciliation}} of
  {{contemporaneous hierarchical time series forecasts}}}.
\newblock In {\it \bibinfo{booktitle}{Modeling and {{Stochastic Learning}} for
  {{Forecasting}} in {{High Dimensions}}}\/} (pp. \bibinfo{pages}{297--317}).
\newblock \bibinfo{publisher}{{Springer}}.
\bibitem[{Vovk(2001)}]{Vovk01}
\bibinfo{author}{Vovk, V.} (\bibinfo{year}{2001}).
\newblock \bibinfo{title}{Competitive on-line statistics}.
\newblock {\it \bibinfo{journal}{International Statistical Review}\/},  {\it
  \bibinfo{volume}{69}\/}, \bibinfo{pages}{213--248}.
\bibitem[{Vovk(1990)}]{Vovk:1990:AS:92571.92672}
\bibinfo{author}{Vovk, V.~G.} (\bibinfo{year}{1990}).
\newblock \bibinfo{title}{Aggregating strategies}.
\newblock In {\it \bibinfo{booktitle}{Proceedings of the Third Annual Workshop
  on Computational Learning Theory}\/} {{COLT}} '90 (pp.
  \bibinfo{pages}{371--386}).
\newblock \bibinfo{address}{{San Francisco, CA, USA}}:
  \bibinfo{publisher}{{Morgan Kaufmann Publishers Inc.}}
\bibitem[{Wintenberger(2017)}]{wintenberger2017optimal}
\bibinfo{author}{Wintenberger, O.} (\bibinfo{year}{2017}).
\newblock \bibinfo{title}{Optimal {{learning}} with {{Bernstein online
  aggregation}}}.
\newblock {\it \bibinfo{journal}{Machine Learning}\/},  {\it
  \bibinfo{volume}{106}\/}, \bibinfo{pages}{119--141}.
\bibitem[{Wood(2006)}]{wood2006generalized}
\bibinfo{author}{Wood, S.} (\bibinfo{year}{2006}).
\newblock {\it \bibinfo{title}{Generalized {{Additive Models}}: {{An
  Introduction}} with {{R}}}\/}.
\newblock \bibinfo{publisher}{{CRC Press}}.

\end{thebibliography}

\end{document}